\documentclass[twoside]{article}

 \usepackage[preprint]{aistats2027}
\usepackage{amsmath,amssymb,amsfonts,amsthm}
\usepackage{mathtools}
\usepackage{bm}
\usepackage{bbm}
\usepackage{booktabs}
\usepackage{graphicx}
\usepackage{subcaption}
\usepackage{enumitem}
 \usepackage{natbib}
\usepackage{microtype}
\usepackage{hyperref}
\usepackage{tikz}
\usetikzlibrary{arrows.meta,calc,angles,quotes}
\usepackage{algorithm}
\usepackage{algpseudocode}

\newtheorem{theorem}{Theorem}
\newtheorem{proposition}[theorem]{Proposition}
\newtheorem{lemma}[theorem]{Lemma}
\newtheorem{corollary}[theorem]{Corollary}
\newtheorem{assumption}[theorem]{Assumption}
\theoremstyle{definition}

\newcommand{\R}{\mathbb R}
\newcommand{\E}{\mathbb E}
\newcommand{\Prob}{\mathbb P}

\newcommand{\1}{\mathbbm 1}

\begin{document}

%

%

\twocolumn[

\aistatstitle{Elicitation and Decision Geometry in Single-Index Bandits}

\aistatsauthor{Sakshi Arya \And Cheng Soon Ong }

\aistatsaddress{Case Western Reserve University, USA \And  CSIRO, Australia } ]

\begin{abstract}
We study two-arm contextual bandits with arm-specific single indices and a shared unknown monotone link. Monotonicity makes the optimal action depend only on the contrast between the index directions, hence arm-specific reward functions need not be estimated. We introduce \emph{Natural Boundary Learning} (NBL), a greedy procedure that uses a sequential Stein contrast to learn the optimal boundary directly, without estimating the reward functions or the common link. We characterize the local Riemannian dynamics of NBL through a \emph{decision stability coefficient} balancing arm separation, link geometry, and the context distribution. We show that this stability is connected to the elicitation geometry of the underlying convex potential. Under local decision stability, NBL contracts toward the optimal boundary and achieves $O(\log n)$ expected regret. Numerical experiments illustrate the predicted stability regimes
and compare NBL with a parametric greedy benchmark under link misspecification.
\end{abstract}
\section{Introduction}
Contextual bandits seek to make effective sequential decisions by
learning from observed contexts and rewards obtained under previous
actions. A central challenge is to balance two competing goals: imposing
enough structure to learn efficiently while retaining enough flexibility
to accommodate complex reward mechanisms. Linear and generalized linear
bandits impose parametric reward models that enable efficient learning
\cite{goldenshluger2013linear,Filippi2010_GLM_Bandits}, but their performance can be sensitive to model misspecification \cite{ghosh2017misspecified}. At the other end, nonparametric contextual bandits allow greater flexibility in the reward functions
\cite{yang2002randomized,rigollet2010nonparametric}, but generally face
a more difficult learning problem and need not retain a simple interpretation of how covariates determine treatment decisions. Single-index bandits \cite{arya2026BIDS, kang2026single} provide a natural middle ground, retaining an interpretable low-dimensional index while leaving its mapping to the mean reward unspecified.

In this paper, we study a two-arm single-index contextual bandit in which
the arms have different index directions but share an unknown strictly
increasing link, $\E(Y_{t,a}\mid X_t)=g(\beta_a^\top X_t).$
The shared-link assumption has a natural interpretation from the
perspective of elicitation \cite{gneiting2011making}: the arm-specific
indices may differ, but they are translated to the mean-reward scale
through a common elicitation geometry. 
Motivated by the success of greedy policies in parametric contextual
bandits under suitable context distributions \cite{bastani2021mostly},
we ask: \emph{what does it mean to be greedy when the shared monotone
link is unknown?} Monotonicity provides a simple answer: $g(\beta_+^\top x)\geq g(\beta_-^\top x)$
if and only if
$(\beta_+-\beta_-)^\top x\geq0.$
Thus the optimal policy is determined by the decision boundary
$\{x:(v^\star)^\top x=0\},$
whose normal direction is $v^\star=(\beta_+-\beta_-)/\|\beta_+-\beta_-\|_2.$
Rather than estimating the two reward functions, a greedy learner can therefore target only the decision boundary, opening the possibility of faster regret rates under suitable structure.
 This motivates
\emph{Natural Boundary Learning} (NBL), which uses a sequential
\emph{Stein contrast} to update the boundary direction on the unit sphere. Related ideas arise in the supervised individualized treatment-learning
literature, where single-index structure has been imposed on treatment
contrasts to obtain interpretable treatment rules without fully modeling
outcome surfaces \cite{xiong2017treatment,liang2023relative}.

Learning the boundary, however, is only part of the problem. The regret incurred from a boundary error depends on how differences on the index scale
are translated to the mean-reward scale, bringing the unknown link back
into the analysis. We show that the local dynamics of NBL are governed
by a \emph{decision stability coefficient} balancing arm separation,
the sensitivity and curvature of the shared link, and the context
distribution. Through the elicitation representation
$g=(\varphi')^{-1}$, this stability is further connected to the geometry
of the underlying convex potential. Thus, although monotonicity makes
the optimal boundary independent of the unknown link, the geometry of
that link determines whether and how the boundary can be learned
greedily.

Our contributions are threefold. First, we formulate the shared-link
single-index bandit as a direct boundary-learning problem and develop
NBL based on a sequential Stein contrast. Second, we characterize its
local Riemannian dynamics and connect decision stability to the
elicitation geometry of the shared link. Third, under local decision
stability, we establish contraction to $v^\star$ and a logarithmic
expected regret. Numerical experiments illustrate the
stability regimes and compare NBL with a parametric greedy logistic
benchmark.

\paragraph{Notation.}
Let $(z)_+=\max\{z,0\}$ and let $\1\{\cdot\}$ denote the indicator.
Write $\mathbb S^{d-1}=\{v\in\R^d:\|v\|_2=1\}$ and
$P_v^\perp=I_d-vv^\top$ for the tangent-space projection at $v$.
Throughout, $\|\cdot\|_2$ denotes the Euclidean norm.
\section{Setup and Related Work}
\label{sec:setup}
We consider a stochastic contextual bandit problem with two actions,
$\mathcal A=\{-1,+1\}$. At each decision time $t=1,\ldots,n$, a
context vector $X_t\in\mathbb R^d$ is observed, an action
$A_t\in\mathcal A$ is selected, and a corresponding reward $Y_t$ is revealed. Let $Y_{t,a}$ denote the potential reward associated with action $a$, so that $Y_t=Y_{t,A_t}$. We let $\mathcal H_{t-1}=\sigma(X_s,A_s,Y_s:1\leq s\leq t-1)$ denote the history available prior to observing $X_t$. 
Let $\mathcal F$ denote a class of probability distributions on
$\mathbb R$, and let $T:\mathcal F\rightarrow\mathbb R$ denote a
statistical functional of interest. For each action $a$ and context
$x$, let $F_{a,x}\in\mathcal F$ denote the conditional distribution
of $Y_{t,a}$ given $X_t=x$. We focus on the conditional mean
functional
\[
T(F_{a,x})
=
\E[Y_{t,a}\mid X_t=x]
=
m_a(x).
\]
We assume that the conditional mean admits the shared-link
single-index representation
\begin{equation}\label{eq:model}
m_a(x)=g(\beta_a^\top x),
\qquad a\in\{-1,+1\}.
\end{equation}
where $g:\mathbb R\rightarrow\mathbb R$ is an unknown common link
function and $\beta_a\in\mathbb R^d$ is an unknown action-specific
index direction satisfying $\|\beta_a\|_2=1$. Thus, the nonlinear
relationship between the index and the conditional mean is shared
across actions, while the index directions are allowed to differ, allowing the effects of the covariates/contexts to vary
across actions.
Throughout, the common link $g$ is assumed to be strictly increasing.
Consequently, $m_+(x)\geq m_-(x)$ if and only if 
$(\beta_+-\beta_-)^\top x\geq0$, so that the optimal decision boundary is linear even though the
conditional mean reward functions need not be. Define the index separation and corresponding unit direction by
\begin{equation}\label{eq:separation_direction}
\Delta_\beta=\|\beta_+-\beta_-\|_2,
\qquad
v^\star=\frac{\beta_+-\beta_-}{\Delta_\beta},
\end{equation}
where we assume $\beta_+\neq\beta_-$.
The optimal arm at context $x$ is then given by, 
$
a^\star(x)
=
\operatorname{sgn}\{(v^\star)^\top x\}.
$
Thus, although the reward model involves an unknown nonparametric
link and two unknown index directions, the optimal action depends
only on the decision-relevant direction $v^\star$ and not on the
unknown shared link $g$.
We measure the performance of a policy through its cumulative regret
relative to the oracle policy,
\begin{align}\label{eq:regret}
R_n(\pi)
=
\sum_{t=1}^n
\left\{
m_{a^\star(X_t)}(X_t)
-
m_{A_t}(X_t)
\right\}.
\end{align}
Our goal is therefore to learn $v^\star$ directly from the
sequentially collected bandit observations, without first estimating
the unknown link $g$ or the individual index directions
$\beta_+$ and $\beta_-$.

\paragraph{Related work.}
Single-index contextual bandits lie between parametric linear and
generalized linear bandits \cite{yadkori2011_linear,Filippi2010_GLM_Bandits, zhang2026generalized}
and fully nonparametric approaches
\cite{rigollet2010nonparametric,perchet2013multi,krause2011contextual,wanigasekara2019nonparametric}.
Within single-index models, the link and index direction may each be
shared or action-specific, yielding
$
g(\beta^\top x),
g_a(\beta^\top x),
g(\beta_a^\top x),
g_a(\beta_a^\top x).
$
Existing work has studied the shared-index setting with
action-specific links \cite{arya2026BIDS}, the shared-link,
shared-index setting \cite{kang2026single}, and models with both
action-specific links and indices
\cite{ma2025nonparametric,arya2026KSIB}. These approaches estimate the
relevant index direction(s) and construct the policy from them. In
contrast, for the shared-link, action-specific-index model
$g(\beta_a^\top x)$, we directly learn the decision-boundary direction
$v^\star\propto\beta_+-\beta_-$ without separately estimating the
arm-specific indices or the common link.
Our approach is also related to greedy contextual bandit methods,
which can learn effectively without continued randomized exploration
under suitable conditions on the context distribution
\cite{bastani2021mostly}.
Finally, our interpretation of the shared link through elicitation
geometry builds on the theory of elicitable statistical functionals
and consistent scoring functions \cite{gneiting2011making,osband1985providing, savage1971elicitation}.

\section{Stein Identification of the Decision Direction}
\label{sec:stein}

The optimal policy depends on the reward model only through the  direction $v^\star$ in \eqref{eq:separation_direction}. 
We therefore seek to estimate $v^\star$ directly.
Our construction builds on score-function estimators for single-index
models based on Stein's identity \cite{yangBala2017}. In general,
for a density $p$ with score $S(x)=-\nabla\log p(x)$, the first-order
Stein identity takes the form
$\E\{f(X)S(X)\}=\E\{\nabla f(X)\}$ under suitable regularity
conditions. For the theoretical development in this paper, we specialize to
Gaussian contexts.

\begin{assumption}[Gaussian contexts]
\label{assum:gaussian_context}
The contexts satisfy $X_t\overset{\mathrm{iid}}{\sim}N(0,I_d)$ and
are independent of the past history $\mathcal H_{t-1}$.
\end{assumption}

\begin{assumption}[Regularity of the common link]
\label{assum:polygrowth_g}
The common link $g:\R\to\R$ is twice continuously differentiable and
there exist constants $C_g>0$ and $q\geq0$ such that
$|g^{(j)}(z)|\leq C_g(1+|z|^q),\,$
$z\in\R,\, j=0,1,2,$
where $g^{(0)}=g$.
\end{assumption}

For Gaussian contexts, $S(X)=X$, and
Assumption~\ref{assum:polygrowth_g} ensures the integrability and boundary conditions required by Stein's identity. We work with isotropic Gaussian contexts, since whitening need not preserve the shared-link unit-norm parameterization. We therefore obtain the following specialization.

\begin{proposition}[First-order Stein identity]
\label{prop:first-order-stein}
Under Assumptions~\ref{assum:gaussian_context} and
\ref{assum:polygrowth_g}, for any
$\beta\in\mathbb S^{d-1}$,
$
\E\{g(\beta^\top X)X\}
=
\beta\,\E\{g'(\beta^\top X)\}.
$
\end{proposition}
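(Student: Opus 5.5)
We plan to reduce the statement to a one-dimensional Gaussian integration by parts by splitting $X$ along $\beta$ and its orthogonal complement. Since $\beta\in\mathbb S^{d-1}$, we write $X=(\beta^\top X)\beta+P_\beta^\perp X$ and set $Z=\beta^\top X$ and $W=P_\beta^\perp X$. Under Assumption~\ref{assum:gaussian_context}, $(Z,W)$ is jointly Gaussian with $\operatorname{Cov}(Z,W)=\beta^\top P_\beta^\perp=0$. Hence $Z\sim N(0,1)$ is independent of $W$, and $\E W=0$.

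The perpendicular part then vanishes immediately:
\[
\E\{g(Z)W\}=\E\{g(Z)\}\,\E\{W\}=0,
\]
provided $\E|g(Z)|<\infty$. This integrability holds because $|g(z)|\le C_g(1+|z|^q)$ by Assumption~\ref{assum:polygrowth_g}, and Gaussian moments of all orders are finite. Likewise $\E\|g(Z)X\|_2<\infty$ by Cauchy--Schwarz, so all expectations in the statement are well defined.

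It therefore remains to show the scalar identity $\E\{g(Z)Z\}=\E\{g'(Z)\}$ for $Z\sim N(0,1)$. With $\phi$ the standard normal density we have $z\phi(z)=-\phi'(z)$, so
\[
\int_{-M}^{M} g(z)z\phi(z)\,dz=\bigl[-g(z)\phi(z)\bigr]_{-M}^{M}+\int_{-M}^{M}g'(z)\phi(z)\,dz .
\]
The boundary term tends to $0$ as $M\to\infty$, because $|g(\pm M)|\phi(M)\le C_g(1+M^q)\phi(M)\to0$. Both integrals converge absolutely by the polynomial growth bounds for $j=0,1$ in Assumption~\ref{assum:polygrowth_g}, so dominated convergence lets us pass to the limit. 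Combining the two parts gives $\E\{g(Z)X\}=\beta\,\E\{g(Z)Z\}+\E\{g(Z)W\}=\beta\,\E\{g'(Z)\}$.

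The only delicate step is justifying the integration by parts, namely the vanishing boundary terms and the integrability. The growth condition on $g$ and $g'$ handles both directly, so I do not expect a real obstacle. Only continuity of $g'$ and its growth bound are used; the second-derivative bound is not needed here.
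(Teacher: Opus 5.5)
Your proof is correct, but it uses a different reduction from the paper's. The paper proves the identity in Lemma~\ref{lem:gaussian_stein} by applying the multivariate Gaussian integration-by-parts formula $\E[X_j h(X)]=\E[\partial_j h(X)]$ coordinatewise to $h(x)=g(\beta^\top x)$. There the factor $\beta$ comes out of the chain rule $\partial_j g(\beta^\top x)=\beta_j g'(\beta^\top x)$. You instead split $X=(\beta^\top X)\beta+P_\beta^\perp X$, remove the perpendicular part by independence, and finish with a one-dimensional integration by parts using explicit truncation.

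Your route is more self-contained. The paper cites the multivariate formula and simply assumes ``the integrability conditions required''. You verify the vanishing boundary terms and the absolute integrability directly from Assumption~\ref{assum:polygrowth_g}, and you correctly note that only the $j=0,1$ growth bounds are needed.

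The coordinatewise route buys generality. It works verbatim for any $\beta\in\R^d$ without normalization. Applied twice, it also gives the second-order identity $\E[(XX^\top-I_d)f(\beta^\top X)]=\E[f''(\beta^\top X)]\beta\beta^\top$, which the paper later needs in the Jacobian computation with the non-unit vector $\theta$ (where $\|\theta\|_2^2=1-\Delta_\beta^2/4$). Your decomposition would need a rescaling $z\mapsto\|\beta\|_2 z$ plus a second one-dimensional integration by parts to reach that identity. Your parallel/perpendicular split is, however, the same device the paper itself uses in its equilibrium and Jacobian proofs.
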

Since $\|\beta_a\|_2=1$, we have $\beta_a^\top X\sim N(0,1)$ for
both actions. Let $\mu_g=\E\{g'(Z)\}$, where $Z\sim N(0,1)$.
By Proposition~\ref{prop:first-order-stein},
$\E\{m_a(X)X\}=\mu_g\beta_a$ for $a\in\{-1,+1\}$.
Since $g$ is strictly increasing and continuously differentiable,
$\mu_g>0$. Consequently,
\begin{equation}\label{eq:Stein_for_vstar}
\E\{m_+(X)X-m_-(X)X\}
=
\mu_g(\beta_+-\beta_-)
=
\Delta_\beta\mu_g v^\star.
\end{equation}
Thus, Stein's identity identifies the decision-relevant direction
$v^\star$ up to a positive scalar, without requiring estimation of
the common link $g$ or the individual index directions. The remaining challenge is to exploit this identity sequentially,
when only the reward corresponding to the selected action is observed
and adaptive arm selection changes the distribution of the contexts
conditional on the selected action. This motivates learning the
decision boundary directly rather than estimating the two index
directions separately.

\section{Natural Boundary Learning (NBL)}
\label{sec:nbl}
Equation~\eqref{eq:Stein_for_vstar} suggests directly learning the boundary direction from a Stein-type contrast. In the bandit
setting, however, only the reward $Y_t=Y_{t,A_t}$ corresponding to the
selected action is observed. NBL therefore proceeds in two stages:
an initial randomized sampling period is used to estimate the
boundary direction $v^\star$, followed by greedy sampling that continually
updates the boundary direction from the observed bandit feedback.
We make the following assumptions on the reward process.
\begin{assumption}[Sequential reward model]
\label{assum:sequential_rewards}
For each $t\geq1$ and $a\in\{-1,+1\}$,
$
\E[Y_{t,a}\mid X_t,\mathcal H_{t-1}]=m_a(X_t).
$
\end{assumption}
\begin{assumption}[Bounded rewards]
\label{assum:bounded_rewards}
There exists $B_Y>0$ such that
$|Y_{t,a}|\leq B_Y$ almost surely for all
$t\geq1$ and $a\in\{-1,+1\}$.
\end{assumption}
Assumption~\ref{assum:sequential_rewards} allows sequential dependence
while fixing the conditional mean reward, and boundedness in
Assumption~\ref{assum:bounded_rewards} provides the concentration needed
in our analysis.
For the first $n_0$ rounds, let
$A_t\sim\operatorname{Unif}\{-1,+1\}$ independently of $X_t$ and
$\mathcal H_{t-1}$, and define the \emph{Stein contrast}
$W_t=A_tY_tX_t$. It is easy to check that under Assumptions~\ref{assum:gaussian_context},
\ref{assum:polygrowth_g}, and \ref{assum:sequential_rewards} (see \eqref{eq:Wt-conditional-mean-proof}),
\[
\E[W_t\mid\mathcal H_{t-1}]
=
\frac{\Delta_\beta\mu_g}{2}v^\star.
\]
We therefore initialize
$v_{n_0}=\overline W_{n_0}/\|\overline W_{n_0}\|_2$, where
$\overline W_{n_0}=n_0^{-1}\sum_{t=1}^{n_0}W_t$, so that the unknown
scale $\Delta_\beta\mu_g/2$ disappears upon normalization.
Proposition~\ref{prop:randomized_initialization} in the Appendix
further shows that, for any $\delta\in(0,1)$, with probability at
least $1-\delta$,
\[
\|v_{n_0}-v^\star\|_2
\lesssim
\frac{B_Y}{\Delta_\beta\mu_g}
\sqrt{\frac{d+\log(1/\delta)}{n_0}}.
\]

For $t>n_0$, NBL selects
$
A_t=\operatorname{sgn}(v_{t-1}^\top X_t).
$
Since only the tangent component of $W_t$ changes the direction of
$v_{t-1}$, let
\begin{align}\label{eq:tangent_update}
P_v^\perp=I_d-vv^\top,
\qquad
G_t=P_{v_{t-1}}^\perp W_t.
\end{align}
NBL updates
\begin{equation}\label{eq:nbl_update}
v_t
=
\frac{v_{t-1}+\eta_tG_t}
{\|v_{t-1}+\eta_tG_t\|_2},
\qquad t>n_0.
\end{equation}
Algorithm~\ref{alg:nbl} summarizes the resulting procedure.
Figure~\ref{fig:tangent-update} illustrates the tangent projection
underlying the update.
\begin{figure}[t]
\centering

\begin{tikzpicture}[
    scale=2.0,
    >=Latex,
    every node/.style={font=\scriptsize}
]

\draw[thick] (0,0) circle (1);

\coordinate (O) at (0,0);
\coordinate (v) at (0.819,0.574);

\fill (O) circle (0.7pt);
\node[below left] at (O) {$0$};

\fill (v) circle (0.9pt);
\draw[-,thick] (O) -- (v)
    node[midway,below right=-1pt] {$v$};

\coordinate (Tleft)  at (0.474,1.066);
\coordinate (Tright) at (1.164,0.082);

\draw[dashed] (Tleft) -- (Tright);
\node[above left=1pt] at (Tleft)
    {$T_v\mathbb S^{d-1}$};

\coordinate (R) at (0.737,0.516);
\coordinate (W) at (0.450,0.926);

\draw[->,very thick] (O) -- (W)
    node[midway,left=2pt] {$W_t$};

\draw[->,thick] (O) -- (R)
    node[midway,below right=-1pt]
    {$(v^\top W_t)v$};

\draw[->,thin,dashed,gray!60] (R) -- (W);

\pic[
    draw,
    angle radius=4pt
] {right angle = O--R--W};

\coordinate (Gend) at (0.532,0.984);

\draw[->,very thick] (v) -- (Gend)
    node[midway,right=4pt]
    {$G_t=P_v^\perp W_t$};

\end{tikzpicture}
\caption{
Tangent projection in NBL. The Stein signal decomposes as
$W_t=(v^\top W_t)v+P_v^\perp W_t$. The orthogonal component
$G_t=P_v^\perp W_t$ defines the tangent update direction at
$v\in\mathbb S^{d-1}$.
}
\label{fig:tangent-update}
\end{figure}
Having defined the NBL algorithm, we now study whether its greedy
updates continue to learn the true boundary direction after
randomized initialization. The key object is the population tangent
field associated with one greedy NBL update. We study two questions:
is the true boundary direction $v^\star$ an equilibrium of the
population dynamics, and do small perturbations away from $v^\star$
induce updates that move the boundary back toward it? These are
distinct properties. Equilibrium identifies $v^\star$ as a stationary point of the population dynamics, while local stability determines whether nearby population updates are restoring. We begin by identifying the population field generated by the greedy policy.
\paragraph{Population geometry.}
For $v\in\mathbb S^{d-1}$, define
\begin{align}\label{eq:Fdef}
F(v)
&=
\E[
m_+(X)X\1\{v^\top X\geq0\}\nonumber\\
&\qquad-
m_-(X)X\1\{v^\top X<0\}
].
\end{align}
Let $h(v)=P_v^\perp F(v)$. Under greedy allocation,
Lemma~\ref{lem:conditional_population_field} in the Appendix gives
$
\E[W_t\mid\mathcal H_{t-1}]=F(v_{t-1})
$
and
$
\E[G_t\mid\mathcal H_{t-1}]=h(v_{t-1}).
$
Thus, $h$ is the population tangent field underlying the NBL update.
Moreover, defining
$
\mathcal J(v)
=
\E[
\{m_+(X)+m_-(X)\}(v^\top X)_+
]
-
v^\top\E\{m_-(X)X\},
$
gives $\nabla\mathcal J(v)=F(v)$. Hence,
$
\operatorname{grad}_{\mathbb S^{d-1}}\mathcal J(v)=h(v),
$
so $h$ is the Riemannian gradient field of $\mathcal J$ on
$\mathbb S^{d-1}$.


\begin{proposition}[Population equilibrium]
\label{prop:population_equilibrium}
Under Assumptions~\ref{assum:gaussian_context} and
\ref{assum:polygrowth_g},
$
F(v^\star)=\kappa^\star v^\star
$
for some $\kappa^\star\in\R$. Consequently,
$
h(v^\star)=0.
$
\end{proposition}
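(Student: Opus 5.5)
The plan is to exploit the fact that $\beta_+$ and $\beta_-$ are reflections of each other across the hyperplane orthogonal to $v^\star$. The greedy boundary at $v^\star$ is invariant under that reflection, so the two terms in $F(v^\star)$ should cancel off the $v^\star$ axis. The first step is the geometric decomposition. Since $\|\beta_+\|_2=\|\beta_-\|_2=1$, we have $(\beta_++\beta_-)^\top(\beta_+-\beta_-)=\|\beta_+\|_2^2-\|\beta_-\|_2^2=0$, so $\beta_++\beta_-\perp v^\star$. Writing $c=\|\beta_++\beta_-\|_2/2$ and choosing a unit vector $w\perp v^\star$ with $\beta_++\beta_-=2cw$ (any unit $w\perp v^\star$ if $c=0$), we get
\[
\beta_\pm = c\,w \pm \tfrac{\Delta_\beta}{2}\,v^\star .
\]

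The second step is to decompose the context. Set $U=(v^\star)^\top X$ and $V=w^\top X$, and write $X=Uv^\star+Vw+R$ with $R=P^\perp X$ the projection onto $\operatorname{span}\{v^\star,w\}^\perp$. Under Assumption~\ref{assum:gaussian_context}, $U$, $V$ and $R$ are mutually independent, $U,V\sim N(0,1)$, and $\E R=0$. In these coordinates $m_\pm(X)=g(cV\pm\tfrac{\Delta_\beta}{2}U)$, and
\[
F(v^\star)=\E\Bigl[g\bigl(cV+\tfrac{\Delta_\beta}{2}U\bigr)X\1\{U\ge0\}-g\bigl(cV-\tfrac{\Delta_\beta}{2}U\bigr)X\1\{U<0\}\Bigr].
\]
Every expectation here is finite: Assumption~\ref{assum:polygrowth_g} gives polynomial growth of $g$, and Gaussian moments are finite. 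Hence we may split $F(v^\star)$ into its components along $v^\star$, $w$ and the orthogonal complement.

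The third step disposes of the orthogonal complement. The bracketed scalar multiplying $X$ is a function of $(U,V)$ only, and $R$ is independent of $(U,V)$ with mean zero. So the $R$-component is $\E[\text{scalar}(U,V)]\,\E R=0$.

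The main step, and the only place where the shared link and the reflection symmetry enter, is the $w$-component. It equals
\[
\E\bigl[Vg(cV+\tfrac{\Delta_\beta}{2}U)\1\{U\ge0\}\bigr]-\E\bigl[Vg(cV-\tfrac{\Delta_\beta}{2}U)\1\{U<0\}\bigr].
\]
In the second expectation, substitute $U\mapsto -U$. This is legitimate because $(U,V)\overset{d}{=}(-U,V)$, again by independence and symmetry of $U$. The second term then becomes $\E[Vg(cV+\tfrac{\Delta_\beta}{2}U)\1\{U>0\}]$, which coincides with the first term since $\Prob(U=0)=0$. Hence the $w$-component vanishes.

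Only the $v^\star$-component survives, so $F(v^\star)=\kappa^\star v^\star$ with
\[
\kappa^\star=\E\bigl[U\{m_+(X)\1\{U\ge0\}-m_-(X)\1\{U<0\}\}\bigr].
\]
Finally, $h(v^\star)=P_{v^\star}^\perp F(v^\star)=\kappa^\star(v^\star-v^\star(v^\star)^\top v^\star)=0$. I expect no serious obstacle. The only care needed is in the reflection step, where one must check that the indicator sets $\{U\ge0\}$ and $\{U<0\}$ map onto each other up to a null set. Note that the argument uses $\beta_\pm$ having equal norms (so that $\beta_++\beta_-\perp v^\star$) and the link being shared; it does not require monotonicity of $g$.
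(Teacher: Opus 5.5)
Your proof is correct and follows essentially the same route as the paper. Both arguments write $\beta_\pm=\theta\pm(\Delta_\beta/2)v^\star$ with $\theta\perp v^\star$ (your $\theta=cw$) and use the reflection across the hyperplane orthogonal to $v^\star$ (your substitution $U\mapsto-U$), which swaps the two greedy halfspaces while preserving the Gaussian law and the shared-link reward.

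The difference is only bookkeeping. The paper reflects the whole vector $X$ at once, so that $X-\widetilde X=2\{(v^\star)^\top X\}v^\star$ removes every tangential component in one step. You instead remove the $w$-component by reflection and the remaining orthogonal component by independence and $\E R=0$. You also make explicit that $\theta\perp v^\star$ follows from $\|\beta_+\|_2=\|\beta_-\|_2$, which the paper uses without comment.
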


Thus, $v^\star$ is a stationary point of the population Riemannian
gradient field. NBL can therefore be viewed as a stochastic
Riemannian gradient-ascent procedure on the sphere
\cite{absil2008optimization}. This reflects the natural-gradient principle of adapting an update to
the geometry of the parameter space \cite{amari2016information},
with the normalization in \eqref{eq:nbl_update} providing the standard
normalization retraction onto $\mathbb S^{d-1}$. This motivates the term
\emph{Natural Boundary Learning}.

\begin{algorithm}[t]
\caption{Natural Boundary Learning (NBL)}
\label{alg:nbl}
\begin{algorithmic}[1]
\Require $n_0$, step sizes $\{\eta_t\}_{t>n_0}$
\For{$t=1,\ldots,n_0$}
    \State Observe $X_t$; draw $A_t\sim\operatorname{Unif}\{-1,+1\}$
    \State Observe $Y_t$; set $W_t=A_tY_tX_t$
\EndFor
\State $\displaystyle
v_{n_0}\gets
\frac{\sum_{t=1}^{n_0}W_t}
{\|\sum_{t=1}^{n_0}W_t\|_2}$
\For{$t=n_0+1,n_0+2,\ldots, n$}
    \State Observe $X_t$; set
    $A_t\gets\operatorname{sgn}(v_{t-1}^\top X_t)$
    \State Observe $Y_t$; set $W_t\gets A_tY_tX_t$
    \State $G_t\gets(I_d-v_{t-1}v_{t-1}^\top)W_t$
    \State $\displaystyle
    v_t\gets
    \frac{v_{t-1}+\eta_tG_t}
    {\|v_{t-1}+\eta_tG_t\|_2}$
\EndFor
\end{algorithmic}
\end{algorithm}
The role of the randomized initialization is therefore to place the
iterate in a neighborhood of $v^\star$ where the population field is
contractive. We next characterize this local stability, which will
provide the contraction condition used in the stochastic analysis.
For the local analysis, write
$\beta_a=\theta+a(\Delta_\beta/2)v^\star$, $a\in\{-1,+1\}$,
where $\theta^\top v^\star=0$ and
$\|\theta\|_2^2=1-\Delta_\beta^2/4$. Let
$U=\theta^\top X$ and $Z=(v^\star)^\top X$. Under
Assumption~\ref{assum:gaussian_context},
$U\sim N(0,1-\Delta_\beta^2/4)$, $Z\sim N(0,1)$, and $U$ and $Z$
are independent. Define
\begin{align}\label{eq:mu_star_m2_def}
\mu^\star
=
\E\!\left[
g'\!\left(U+\frac{\Delta_\beta}{2}Z\right)\1\{Z>0\}
\right],\,
m_2=\E\{g''(U)\}.
\end{align}
For $u\in T_{v^\star}\mathbb S^{d-1}$, let
$Dh(v^\star)[u]$ denote the directional derivative of $h$ at
$v^\star$ along $u$.

Local stability simply means that when the
candidate direction moves slightly away from $v^\star$, the
population update moves it back toward $v^\star$. For geometric
intuition, we refer the reader to
Appendix~\ref{app:local_stability_in_2D}, where this behavior is
illustrated in two dimensions. We next characterize this
local behavior in arbitrary dimension.

\begin{proposition}[Local population dynamics]
\label{prop:local_jacobian}
Let $\phi_0$ denote the standard Gaussian density.
Under Assumptions~\ref{assum:gaussian_context} and
\ref{assum:polygrowth_g}, for every
$u\in T_{v^\star}\mathbb S^{d-1}$,
\begin{align}\label{eq:Jacobian}
Dh(v^\star)[u]
=
-\Delta_\beta\mu^\star u
+
2\phi_0(0)m_2\theta(\theta^\top u).
\end{align}
\end{proposition}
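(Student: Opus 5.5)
The plan is to differentiate $h(v)=P_v^\perp F(v)$ by the product rule and then evaluate each piece with Gaussian integration-by-parts identities in the coordinates $(U,Z)$. Along $u\in T_{v^\star}\mathbb S^{d-1}$ we have $D(P_v^\perp)[u]=-(uv^{\star\top}+v^\star u^\top)$. By Proposition~\ref{prop:population_equilibrium}, $F(v^\star)=\kappa^\star v^\star$, and $u^\top v^\star=0$. Hence
\[
Dh(v^\star)[u]=-\kappa^\star u+P_{v^\star}^\perp DF(v^\star)[u].
\]
Two computations remain: the derivative $DF(v^\star)[u]$, and an explicit value of $\kappa^\star=(v^\star)^\top F(v^\star)$. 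The proof then consists of checking that the terms proportional to $u$ combine to $-\Delta_\beta\mu^\star u$.

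\emph{Step 1: the derivative of $F$.} Rewrite
\[
F(v)=\E[\{m_+(X)+m_-(X)\}X\1\{v^\top X\ge 0\}]-\E\{m_-(X)X\}.
\]
Only the indicator depends on $v$, so the derivative is a boundary (surface) term on the hyperplane $\{v^{\star\top}x=0\}$. Decompose $X=W+Zv^\star$ with $W=P_{v^\star}^\perp X$ independent of $Z$. Tilting the hyperplane in direction $u$ changes $\1\{Z\ge 0\}$ at first order by $\1\{Z\ge -\epsilon u^\top W\}-\1\{Z\ge0\}$. Differentiating under the expectation should therefore give
\[
DF(v^\star)[u]=\phi_0(0)\,\E\big[\{m_+(W)+m_-(W)\}W(u^\top W)\big].
\]
On $\{Z=0\}$ both arms have index $U=\theta^\top W$, so $m_+(W)=m_-(W)=g(U)$. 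This reduces the expression to $2\phi_0(0)\E[g(\theta^\top W)WW^\top]u$.

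Next apply the second-order Stein identity for the Gaussian $W\sim N(0,P_{v^\star}^\perp)$, noting that $\theta$ lies in the range of $P_{v^\star}^\perp$:
\[
\E[g(\theta^\top W)WW^\top]=\E\{g(U)\}P_{v^\star}^\perp+m_2\,\theta\theta^\top .
\]
One can obtain this by applying Proposition~\ref{prop:first-order-stein} twice in the coordinates of the tangent space. It yields
\[
P_{v^\star}^\perp DF(v^\star)[u]=2\phi_0(0)\big(\E\{g(U)\}u+m_2\theta(\theta^\top u)\big).
\]

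\emph{Step 2: computing $\kappa^\star$.} I would use the half-line Stein identity in $Z$, applied conditionally on $U$:
\[
\E[f(Z)Z\1\{Z>0\}]=\phi_0(0)f(0)+\E[f'(Z)\1\{Z>0\}].
\]
Applied to $m_+=g(U+\tfrac{\Delta_\beta}{2}Z)$ it gives $\phi_0(0)\E g(U)+\tfrac{\Delta_\beta}{2}\mu^\star$. Applied to $m_-=g(U-\tfrac{\Delta_\beta}{2}Z)$ it gives $\phi_0(0)\E g(U)-\tfrac{\Delta_\beta}{2}\E[g'(U-\tfrac{\Delta_\beta}{2}Z)\1\{Z>0\}]$. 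The last term is $-\E\{m_-(X)Z\}$, which by full Gaussian Stein equals
\[
\tfrac{\Delta_\beta}{2}\E\,g'(U-\tfrac{\Delta_\beta}{2}Z)=\tfrac{\Delta_\beta}{2}\Big(\E[g'(U-\tfrac{\Delta_\beta}{2}Z)\1\{Z>0\}]+\E[g'(U-\tfrac{\Delta_\beta}{2}Z)\1\{Z<0\}]\Big).
\]
By the symmetry $Z\mapsto -Z$, the second piece equals $\mu^\star$. The $\1\{Z>0\}$ terms then cancel, leaving
\[
\kappa^\star=2\phi_0(0)\E\{g(U)\}+\Delta_\beta\mu^\star .
\]
Substituting into the display in the first paragraph, the $2\phi_0(0)\E\{g(U)\}u$ terms cancel and \eqref{eq:Jacobian} follows.

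\emph{Main obstacle.} I expect the hardest part to be rigorously justifying the boundary-term derivative in Step 1, that is, differentiating an expectation containing a moving indicator. My plan is to write the expectation as $\E_W\big[\int_{-u^\top W\epsilon}^{\infty}(\cdots)\phi_0(z)\,dz\big]$ for the perturbed unit vector $v_\epsilon=(v^\star+\epsilon u)/\|v^\star+\epsilon u\|_2$. The normalization contributes only at second order. I would then differentiate in $\epsilon$ under $\E_W$ using dominated convergence. The polynomial-growth bound of Assumption~\ref{assum:polygrowth_g} and Gaussian moments supply the dominating function, and continuity of $g$ gives the limit at $Z=0$.
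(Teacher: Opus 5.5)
Your proposal is correct and follows essentially the same route as the paper. Both arguments apply the product rule with $F(v^\star)=\kappa^\star v^\star$, compute a moving-boundary derivative of $F$, and reduce it via the second-order Gaussian Stein identity to $2\phi_0(0)\{\E[g(U)]u+m_2\theta(\theta^\top u)\}$. They then use a half-line integration by parts to get $\kappa^\star=2\phi_0(0)\E[g(U)]+\Delta_\beta\mu^\star$, so the $\E[g(U)]$ terms cancel. The differences are cosmetic: you differentiate the combined $\nabla\mathcal J$ form, with a single moving boundary, instead of treating $F_+$ and $F_-$ separately, and you compute $\kappa^\star=(v^\star)^\top F(v^\star)$ directly rather than from the reflection formula $\kappa^\star=2\E[Zg(U+\tfrac{\Delta_\beta}{2}Z)\1\{Z>0\}]$.
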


Since $(\theta^\top u)^2\leq
(1-\Delta_\beta^2/4)\|u\|_2^2$, the proposition implies
$\langle u,Dh(v^\star)[u]\rangle
\leq-\lambda_\star\|u\|_2^2$, where the
\emph{decision stability coefficient} is
\begin{equation}\label{eq:decision_stability}
\lambda_\star
=
\Delta_\beta\mu^\star
-
2\phi_0(0)(m_2)_+
\left(1-\frac{\Delta_\beta^2}{4}\right),
\end{equation}
and $(m_2)_+=\max\{m_2,0\}$, $\mu^\star$ and $m_2$ as defined in \eqref{eq:mu_star_m2_def}. Since positive $\lambda_\star$ ensures that the linearized population
field is strictly contractive in every tangent direction, we impose the following local stability condition.

\begin{assumption}[Positive decision stability]
\label{assum:positive-stability}
The decision stability coefficient satisfies
$
\lambda_\star>0.
$
\end{assumption}

\begin{corollary}[Local decision stability]
\label{cor:local_stability}
Under Assumption~\ref{assum:positive-stability}, $v^\star$ is a locally attracting
equilibrium of the population NBL dynamics. In particular, there
exist $r_0,\lambda_0>0$ such that
$\langle v-v^\star,h(v)\rangle
\leq-\lambda_0\|v-v^\star\|_2^2$
whenever $\|v-v^\star\|_2\leq r_0$.
\end{corollary}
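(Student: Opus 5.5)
The plan is a first-order Taylor expansion of $h$ around $v^\star$ along the sphere, using Proposition~\ref{prop:population_equilibrium} ($h(v^\star)=0$) and the Jacobian from Proposition~\ref{prop:local_jacobian}, with the remainder absorbed by continuity of $Dh$. I would first establish that $h$ is $C^1$ in a neighborhood of $v^\star$. I will treat $h$ as defined on all of $\R^d\setminus\{0\}$ via $F(v)$ and $P_v^\perp$; note that $F(v)$ depends only on $v/\|v\|_2$. Writing the indicator expectations in $F$ through a rotation into coordinates aligned with $v$, the Gaussian smoothing makes $F$ continuously differentiable. Assumption~\ref{assum:polygrowth_g} supplies the polynomial growth needed to differentiate under the integral. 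Hence $h$ is $C^1$ near $v^\star$, and $Dh(v)\to Dh(v^\star)$ as $v\to v^\star$.

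Next, fix $v\in\mathbb S^{d-1}$ with $\|v-v^\star\|_2\le r$ and set $w=v-v^\star$. I split $w=u+s v^\star$ with $u=P_{v^\star}^\perp w\in T_{v^\star}\mathbb S^{d-1}$. From $\|v\|_2=1$ we get $s=(v^\star)^\top w=-\|w\|_2^2/2$, so the normal part is $O(\|w\|_2^2)$ and $\|u\|_2^2=\|w\|_2^2-s^2$. The mean value theorem along the segment gives
\begin{equation*}
h(v)=h(v^\star)+Dh(v^\star)[w]+\rho(v)=Dh(v^\star)[u]+s\,Dh(v^\star)[v^\star]+\rho(v),
\end{equation*}
where $\|\rho(v)\|_2\le \omega(r)\|w\|_2$ and $\omega(r)=\sup_{\|v'-v^\star\|\le r}\|Dh(v')-Dh(v^\star)\|_{\mathrm{op}}\to0$. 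The radial derivative $Dh(v^\star)[v^\star]$ is a fixed vector, so the middle term is $O(\|w\|_2^2)$.

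Taking the inner product with $w=u+sv^\star$ and using the bound stated right after Proposition~\ref{prop:local_jacobian}, the main term is $\langle u,Dh(v^\star)[u]\rangle\le-\lambda_\star\|u\|_2^2$. The cross term $s\langle v^\star,Dh(v^\star)[u]\rangle$ vanishes, because $Dh(v^\star)[u]$ in \eqref{eq:Jacobian} lies in the span of $u$ and $\theta$, both orthogonal to $v^\star$. The remaining terms are bounded by $C\|w\|_2^3+\omega(r)\|w\|_2^2$. Since $\|u\|_2^2\ge\|w\|_2^2(1-r^2/4)$, this yields
\begin{equation*}
\langle v-v^\star,h(v)\rangle\le-\bigl[\lambda_\star(1-r^2/4)-Cr-\omega(r)\bigr]\|v-v^\star\|_2^2.
\end{equation*}
Under Assumption~\ref{assum:positive-stability}, choosing $r_0$ small makes the bracket at least $\lambda_0:=\lambda_\star/2>0$.

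The main obstacle is the first step: justifying that $h$ is continuously differentiable in a full neighborhood of $v^\star$ rather than only at $v^\star$, because the indicator $\1\{v^\top X\ge0\}$ is discontinuous in $v$. I would handle this by writing $\E[m_\pm(X)X\1\{v^\top X\gtrless0\}]$ in rotated coordinates where the indicator is fixed and the $v$-dependence sits inside smooth Gaussian-weighted integrands. Alternatively, I would use the representation $F=\nabla\mathcal J$ with $\mathcal J$ built from $(v^\top X)_+$, whose second derivative involves a surface integral over $\{v^\top x=0\}$ that varies continuously in $v$. If full $C^1$ proves delicate, it suffices to show that $h$ is differentiable at $v^\star$ with a little-$o$ remainder, $h(v)=Dh(v^\star)[v-v^\star]+o(\|v-v^\star\|_2)$. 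That weaker statement already gives the same conclusion with $\omega(r)$ replaced by the little-$o$ modulus.
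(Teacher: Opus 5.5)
Your proposal is correct and follows essentially the same route as the paper. Both arguments linearize $h$ at $v^\star$ using $h(v^\star)=0$ and Proposition~\ref{prop:local_jacobian}, write $v-v^\star=u-\tfrac12\|v-v^\star\|_2^2v^\star$, bound the main term by $-\lambda_\star\|u\|_2^2$ with remainders of order $o(\|v-v^\star\|_2^2)$, and take $\lambda_0=\lambda_\star/2$. The differences are minor: the paper expands $h\circ q$ in the chart $q(w)=\sqrt{1-\|w\|_2^2}\,v^\star+w$, so that $v=q(u)$ and no radial-derivative term appears, whereas you use an ambient mean-value bound on an extension of $h$; and your observation that $s\langle v^\star,Dh(v^\star)[u]\rangle$ vanishes exactly slightly sharpens the paper's $O(\|v-v^\star\|_2^3)$ bound on that term.
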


Corollary~\ref{cor:local_stability} shows that positive
$\lambda_\star$ yields a locally contractive population field around
$v^\star$. This local contraction will be the key ingredient in the
stochastic analysis of Section~\ref{sec:regret}. To understand when it holds, we next examine the two competing terms in the local dynamics. The first term in \eqref{eq:Jacobian},
$-\Delta_\beta\mu^\star u$, always acts against a tangent
perturbation $u$, since $\mu^\star>0$. The second term,
$2\phi_0(0)m_2\theta(\theta^\top u)$, depends on the component of the perturbation along $\theta$ and can either reinforce or weaken this restoring force. In particular, when $u\perp\theta$, the second term vanishes and the population dynamics are locally restoring at rate $\Delta_\beta\mu^\star$. While strict monotonicity makes the optimal decision boundary
independent of the shape of $g$, the stability coefficient depends on the shared link through $\mu^\star$ and $m_2$. Thus, the link does not determine which boundary is optimal, but it does determine the local dynamics by which that boundary is learned. We next use elicitation geometry to make this dependence explicit.

\section{Elicitation Geometry and Decision Stability}
\label{sec:elicitation}
We first we give a decision-theoretic interpretation of the shared link through elicitation theory, and then use the resulting geometry to interpret the link-dependent quantities $\mu^\star$ and $m_2$ governing decision
stability.
\subsection{Elicitation geometry of the shared link}
\label{sec:elicitation_geometry}

 Recall from Section~\ref{sec:setup} that $T(F_{a,x})=m_a(x)$ is the conditional mean functional.
The mean is elicitable: there exist strictly consistent losses for which $T(F_{a,x})$ is the unique optimal point prediction under $F_{a,x}$. In particular, a classical class of consistent losses for the mean is given by Bregman losses
\cite{savage1971elicitation,gneiting2011making}. Let
$\varphi:\mathcal I\to\R$ be a twice differentiable, strictly convex
potential. The associated Bregman loss is
\begin{equation}\label{eq:Bregman_loss}
\ell_\varphi(r,y)
=
\varphi(y)-\varphi(r)-\varphi'(r)(y-r).
\end{equation}
Under standard regularity conditions,
$T(F_{a,x})= m_a(x) = \arg\min_r \E_{F_{a,x}}[\ell_\varphi(r,Y)]$. Different strictly convex potentials can elicit the same mean
functional while inducing different local geometries; see Appendix~\ref{app:elicitation} for examples. Moreover,
\[
\frac{\partial}{\partial r}\ell_\varphi(r,y)
=
\varphi''(r)(r-y),
\]
so the curvature $\varphi''(r)$ determines how prediction errors are locally
weighted at the prediction value $r$.
The same convex potential induces a natural dual coordinate
$\vartheta=\varphi'(r)$. Geometrically, $\vartheta$ is the slope of
the tangent to $\varphi$ at $r$. Since $\varphi$ is strictly convex,
$\varphi'$ is strictly increasing and hence invertible on its range.
Writing
$
g_\varphi=(\varphi')^{-1},
$
suppose that the action-specific index represents the conditional
mean on this dual scale, so that
$
\varphi'\{m_a(x)\}=\beta_a^\top x.
$
Mapping back to the primal mean scale gives
$
m_a(x)=g_\varphi(\beta_a^\top x).
$
Under this interpretation,
\[
\boxed{
\underbrace{m_a(x)}_{\text{primal mean}}
\quad
\xrightleftharpoons[
g_\varphi=(\varphi')^{-1}
]{\varphi'}
\quad
\underbrace{\beta_a^\top x}_{\substack{\text{dual coordinate}\\
\text{tangent slope}}}
}
\]
and the single-index assumption can be viewed as imposing linear
dimension reduction in the dual coordinate. The high-dimensional
context is first reduced to the scalar $\beta_a^\top x$, and the
inverse-gradient map $(\varphi')^{-1}$ then returns this scalar to the
conditional mean scale.

If the two actions share a common convex potential $\varphi$, then
they share the same response geometry $g_\varphi=(\varphi')^{-1}$,
while retaining action-specific linear representations through
$\beta_+$ and $\beta_-$. Thus, the shared-link model in \eqref{eq:model} separates a
common response geometry from action-specific dimension reduction.
Moreover, although this geometry determines how the dual coordinate
is mapped to the conditional mean, its monotonicity preserves the
ordering of the two actions. This preservation of ordering is exactly
what makes the optimal action depend only on  $v^\star$.


This interpretation does not require the unknown link in our model to
be specified through a known potential. Conversely, under suitable
regularity, any strictly increasing link $g$ admits such a
representation: defining $\varphi'=g^{-1}$ on the range of $g$
yields a strictly convex potential satisfying
$g=(\varphi')^{-1}$; see Appendix~\ref{app:link_to_potential} for the
formal construction. Thus, $\varphi$ provides an interpretive
description of the geometry induced by the unknown shared link,
rather than an object that must be known or estimated by NBL. This perspective clarifies how link geometry affects learning dynamics while leaving the optimal boundary unchanged.
\subsection{Geometry of decision stability}
\label{sec:decision_stability_geometry}

The geometry does, however, affect how changes in the dual coordinate
are transmitted to the conditional mean. Since
$\varphi'\{g_\varphi(z)\}=z$, differentiation gives
\[
g_\varphi'(z)
=
\frac{1}{\varphi''\{g_\varphi(z)\}},
\qquad
g_\varphi''(z)
=
-
\frac{\varphi'''\{g_\varphi(z)\}}
{\left[\varphi''\{g_\varphi(z)\}\right]^3}.
\]
Thus, $g_\varphi'(z)$ is the reciprocal curvature of the eliciting
potential and measures the local sensitivity of the primal mean to
changes in the dual coordinate, while $g_\varphi''(z)$ describes how
this sensitivity varies across the response surface.

The two terms in \eqref{eq:decision_stability} therefore have distinct
geometric roles. The first term measures the
strength of the restoring signal generated by the dual-to-primal
sensitivity: changes in the index must translate into sufficiently
large changes in the conditional mean for a perturbation of the
boundary to generate a meaningful restoring update. The second term
captures the effect of variation in this sensitivity across the
population. Because a curved link responds differently at different index values,
variation in the dual-to-primal sensitivity can create an asymmetry in
the population boundary update.
Importantly, what matters for stability is not curvature at a
particular index value, but its signed average under the population
distribution,
$
m_2
=
-\E[
\varphi'''\{g(U)\}/
\{\varphi''\{g(U)\}\}^3
].
$
Thus, local curvature may cancel across the population. When $m_2>0$,
the resulting average curvature effect can weaken the restoring
component of the population dynamics, whereas when $m_2\leq0$, it
cannot oppose the restoring movement. Appendix~\ref{app:elicitation_stability} expresses both $\mu^\star$ and $m_2$, and hence the full decision
stability coefficient $\lambda_\star$, directly in terms of the curvature and variation of the eliciting potential.

The preceding interpretation also connects decision stability to
self-concordance conditions used in the generalized linear bandit
literature \cite{russac2021self,zhang2026generalized}. In particular, consider
the generalized self-concordance condition
\begin{equation}
   |g''(z)|\leq Rg'(z),
 z\in\R, \label{eq:self-concordance-link}
\end{equation}
for some $R\geq0$.  Since $g'(z)>0$, this is
equivalent to $|(\log g')'(z)|\leq R$ and, in terms of the eliciting
potential, to
$
|\varphi'''(m)|\leq R\{\varphi''(m)\}^2.
$
Thus, self-concordance uniformly controls the variation of the
dual-to-primal sensitivity relative to its magnitude.

\begin{proposition}[Self-concordance and decision stability]
\label{prop:self-concordance-stability}
Suppose that the shared link satisfies
\eqref{eq:self-concordance-link}. Define
$
\kappa(a)=e^{a^2/2}\Phi(-a),
$
where $\Phi$ denotes the standard normal distribution function. Then the decision stability coefficient satisfies
\[
\lambda_\star
\geq
2\mu^\star
\left\{
\frac{\Delta_\beta}{2}
-
\phi_0(0)
\left(1-\frac{\Delta_\beta^2}{4}\right)
\frac{R}{\kappa(R\Delta_\beta/2)}
\right\}.
\]
Consequently, a sufficient condition for local decision stability is
$\Delta_\beta/2 >
\phi_0(0)(1-\Delta_\beta^2/4)
R/\kappa(R\Delta_\beta/2)$.
\end{proposition}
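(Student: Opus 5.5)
The plan is to reduce the claim to a single comparison between $m_2$ and $\mu^\star$, namely
\[
(m_2)_+\le \frac{R\,\mu^\star}{\kappa(R\Delta_\beta/2)}.
\]
Substituting this into the definition \eqref{eq:decision_stability} of $\lambda_\star$ gives
\[
\lambda_\star\ge \Delta_\beta\mu^\star-2\phi_0(0)\Big(1-\frac{\Delta_\beta^2}{4}\Big)\frac{R\mu^\star}{\kappa(R\Delta_\beta/2)},
\]
which is exactly the displayed bound after factoring out $2\mu^\star$. The sufficient condition then follows because $\mu^\star>0$: $g$ is strictly increasing and $C^1$, so $g'\ge0$ and $g'$ is not a.e.\ zero, and $U+\frac{\Delta_\beta}{2}Z$ has a positive density on $\{Z>0\}$.

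\textbf{Bounding $m_2$ by $\E\{g'(U)\}$.} By \eqref{eq:self-concordance-link}, $g''(U)\le |g''(U)|\le R\,g'(U)$. Hence $m_2=\E\{g''(U)\}\le R\,\E\{g'(U)\}$, and since the right side is nonnegative, $(m_2)_+\le R\,\E\{g'(U)\}$. Integrability of both expectations is supplied by Assumption~\ref{assum:polygrowth_g}.

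\textbf{Comparing $\E\{g'(U)\}$ with $\mu^\star$.} Since $g'>0$ wherever the condition is used and $|(\log g')'|\le R$, the function $\log g'$ is $R$-Lipschitz. With $c=\Delta_\beta/2$, this gives for $Z>0$
\[
g'(U+cZ)\ge g'(U)\,e^{-RcZ}.
\]
If $g'$ vanishes somewhere, I would first argue that the self-concordance inequality forces $g'$ to be either identically zero or everywhere positive, by a Gr\"onwall argument on $|(g')'|\le Rg'$. Multiplying by $\1\{Z>0\}$, taking expectations, and using the independence of $U$ and $Z$ from Section~\ref{sec:nbl} gives
\[
\mu^\star\ge \E\{g'(U)\}\,\E\big[e^{-RcZ}\1\{Z>0\}\big].
\]
Completing the square evaluates the last factor as
\[
\int_0^\infty e^{-az}\phi_0(z)\,dz=e^{a^2/2}\Phi(-a)=\kappa(a),\qquad a=R\Delta_\beta/2.
\]
Therefore $\E\{g'(U)\}\le \mu^\star/\kappa(R\Delta_\beta/2)$, and combining with the previous step yields the target comparison.

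\textbf{Main obstacle.} The only delicate point is justifying the pointwise Gr\"onwall-type lower bound $g'(U+cZ)\ge g'(U)e^{-RcZ}$, which requires $g'>0$ everywhere; the rest is direct computation. I would handle this by noting that $\psi=g'$ satisfies $\psi'\ge -R\psi$. Then $(e^{Rz}\psi(z))'\ge0$, so $e^{Rz}\psi(z)$ is nondecreasing, which gives $\psi(z+s)\ge e^{-Rs}\psi(z)$ for $s\ge0$ without any positivity assumption. This is precisely the inequality used above.
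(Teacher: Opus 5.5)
Your proposal is correct and matches the paper's proof step for step. Both arguments bound $(m_2)_+\le R\,\E\{g'(U)\}$, then derive $g'(U+\tfrac{\Delta_\beta}{2}Z)\ge e^{-R\Delta_\beta Z/2}g'(U)$ on $\{Z>0\}$, use the independence of $U$ and $Z$ to get $\mu^\star\ge\kappa(R\Delta_\beta/2)\,\E\{g'(U)\}$, and substitute; your monotonicity-of-$e^{Rz}g'(z)$ version of the middle step is only a slightly cleaner way to avoid the paper's tacit assumption $g'>0$.
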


Proposition~\ref{prop:self-concordance-stability} (proof in Appendix ~\ref{app:self_concordance}) should be viewed as
a sufficient condition rather than as a replacement for the exact
stability criterion $\lambda_\star>0$. The latter depends on the
average slope and signed curvature of the shared link under the
population distribution, whereas self-concordance imposes uniform
control on curvature relative to slope. The self-concordance bound can
therefore be conservative, and decision stability may hold even when
the sufficient condition above is not satisfied.

This distinction is also useful in comparing NBL with existing
analyses of greedy generalized linear bandits. Self-concordance
controls the regularity of the response link, while conditions such
as covariate diversity \cite{bastani2021mostly} ensure that greedy
allocation continues to provide sufficient information for estimating
arm-specific parameters. Decision stability plays a different role: it determines whether the
population dynamics of a direct boundary estimator exhibit restoring
drift toward the optimal decision boundary, accounting jointly for
the geometry of the boundary and the magnitude of the reward gap
across it.

Taken together, these results reveal a distinction between the
geometry of the decision problem and the geometry of learning.
Monotonicity makes the optimal boundary independent of the shape of
the shared link, while the dual-to-primal sensitivity and its
variation determine the local dynamics by which that decision boundary is
learned. These dynamics depend jointly on the separation between the
arm-specific indices and on how the shared link translates this
separation into a reward gap. Thus, different elicitation geometries
can induce the same optimal decision boundary while producing
different reward gaps and different local learning behavior. We make
this distinction explicit in Section~\ref{sec:numerics}, where we
vary the geometry of the shared link while holding the arm-specific
indices, and hence the optimal decision boundary, fixed.

\section{Regret Guarantees}
\label{sec:regret}
The population analysis in Section~\ref{sec:nbl} shows that, when
$\lambda_\star>0$, the mean NBL update is locally contractive around
$v^\star$. We now show that this contraction persists for the
stochastic iterates following a sufficiently accurate randomized
initialization and yields logarithmic cumulative regret. Detailed
stochastic arguments are deferred to
Appendix~\ref{app:stochastic_analysis}.
\paragraph{Local stochastic convergence.}
Let
$
e_t=\|v_t-v^\star\|_2^2.
$
By the definition of the Stein contrast $G_t$ in \eqref{eq:tangent_update},
$G_t\in T_{v_{t-1}}\mathbb S^{d-1}$. Hence, normalization of the
tentative update cannot increase its distance from $v^\star$, giving
\begin{equation}
e_t
\leq
e_{t-1}
+
2\eta_t
\langle v_{t-1}-v^\star,G_t\rangle
+
\eta_t^2\|G_t\|_2^2.
\label{eq:pathwise-error-recursion}
\end{equation}

To separate the population drift from the stochastic fluctuation,
define
$
\xi_t=G_t-h(v_{t-1}).
$
By Lemma~\ref{lem:conditional_population_field},
$
\E[\xi_t\mid\mathcal H_{t-1}]=0.
$
Substituting
$
G_t=h(v_{t-1})+\xi_t
$
into \eqref{eq:pathwise-error-recursion} gives
\begin{align}
e_t
\leq{}&
e_{t-1}
+
2\eta_t
\langle v_{t-1}-v^\star,h(v_{t-1})\rangle
\nonumber\\
&+
2\eta_t
\langle v_{t-1}-v^\star,\xi_t\rangle
+
\eta_t^2\|G_t\|_2^2.
\label{eq:stochastic-error-decomposition}
\end{align}
Thus, the evolution of the directional error consists of a population
drift term, a martingale fluctuation, and a second-order stochastic
term. Under Assumption~\ref{assum:positive-stability}, the population
drift is locally contractive by
Corollary~\ref{cor:local_stability}. The remaining terms determine
whether the stochastic iterates remain within this stability region.
Combining the pathwise decomposition with the local contraction gives
the following one-step conditional error bound.

\begin{lemma}[Local one-step contraction]
\label{lem:local_one_step}
Suppose that Assumptions~\ref{assum:gaussian_context},
\ref{assum:sequential_rewards}, \ref{assum:bounded_rewards}, and
\ref{assum:positive-stability} hold. If
$
\|v_{t-1}-v^\star\|_2\leq r_0,
$
then
\begin{equation}
\E[e_t\mid\mathcal H_{t-1}]
\leq
(1-2\lambda_0\eta_t)e_{t-1}
+
B_Y^2d\,\eta_t^2.
\label{eq:local-conditional-error-recursion}
\end{equation}
\end{lemma}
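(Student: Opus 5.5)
Starting from the pathwise decomposition \eqref{eq:stochastic-error-decomposition}, the plan is to take conditional expectations given $\mathcal H_{t-1}$ term by term. Since $v_{t-1}$ and $e_{t-1}$ are $\mathcal H_{t-1}$-measurable, the first two terms pass through the conditional expectation unchanged. For the martingale term, $\E[\xi_t\mid\mathcal H_{t-1}]=0$ by Lemma~\ref{lem:conditional_population_field}, and $v_{t-1}-v^\star$ is measurable, so $\E[\langle v_{t-1}-v^\star,\xi_t\rangle\mid\mathcal H_{t-1}]=0$. Integrability is not an issue here, because $\xi_t$ has finite conditional moments: $W_t$ is a bounded multiple of a Gaussian vector, and $h$ is bounded.

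For the drift term, the hypothesis $\|v_{t-1}-v^\star\|_2\le r_0$ puts us inside the region of Corollary~\ref{cor:local_stability}, which Assumption~\ref{assum:positive-stability} activates. This gives
\[
2\eta_t\langle v_{t-1}-v^\star,h(v_{t-1})\rangle\le-2\lambda_0\eta_t\,e_{t-1},
\]
and produces the factor $(1-2\lambda_0\eta_t)$. The step sizes are taken nonnegative, as they are implicitly in the algorithm. Assumption~\ref{assum:polygrowth_g} is not listed in the lemma, but it is used through Corollary~\ref{cor:local_stability} and Proposition~\ref{prop:local_jacobian}. I would note that it is implicitly needed, or regard it as part of the standing setup.

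The remaining step is the second-order term, where I need $\E[\|G_t\|_2^2\mid\mathcal H_{t-1}]\le B_Y^2 d$. Since $P^\perp_{v_{t-1}}$ is an orthogonal projection and $|A_t|=1$,
\[
\|G_t\|_2^2\le\|W_t\|_2^2=Y_t^2\|X_t\|_2^2\le B_Y^2\|X_t\|_2^2
\]
by Assumption~\ref{assum:bounded_rewards}. By Assumption~\ref{assum:gaussian_context}, $X_t$ is independent of $\mathcal H_{t-1}$ with $\E\|X_t\|_2^2=d$. This holds even though $A_t$ depends on $X_t$, because the bound has already removed all dependence on the action. (One could even sharpen $d$ to $d-1$ via $\|P^\perp_{v_{t-1}}X_t\|_2^2$, but this is unnecessary.)

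Combining the three pieces gives \eqref{eq:local-conditional-error-recursion}. The only delicate point, and the one I would check most carefully, is the martingale term: the claim that $\E[G_t\mid\mathcal H_{t-1}]=h(v_{t-1})$ under greedy selection. That relies on Lemma~\ref{lem:conditional_population_field}, which in turn uses Assumption~\ref{assum:sequential_rewards} and the independence of $X_t$ from the past. Everything else is a direct bound.
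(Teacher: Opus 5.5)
Your proposal is correct and follows the paper's proof essentially step for step. You condition on $\mathcal H_{t-1}$ and use Lemma~\ref{lem:conditional_population_field} to reduce the first-order term to the population drift $\langle v_{t-1}-v^\star,h(v_{t-1})\rangle$; starting from the $\xi_t$-decomposition rather than the pathwise recursion is only a cosmetic difference. You then apply Corollary~\ref{cor:local_stability} inside the $r_0$-ball and bound $\E[\|G_t\|_2^2\mid\mathcal H_{t-1}]\le B_Y^2d$ using the projection, bounded rewards and the independence of $X_t$ from $\mathcal H_{t-1}$. Your remark that Assumption~\ref{assum:polygrowth_g} is implicitly needed through the corollary is accurate, since the lemma statement omits it.
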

Lemma~\ref{lem:local_one_step} shows that the directional error
contracts locally, up to a stochastic term of order $d\eta_t^2$.
Since this recursion is valid only while the iterates remain near
$v^\star$, we next control their exit probability.

\paragraph{Localization. } Let $\tau=\inf\{t\geq n_0:\|v_t-v^\star\|_2>r_0\}$ denote the first exit
time from the stability neighborhood.
To localize the stochastic recursion, we multiply the increments in
\eqref{eq:stochastic-error-decomposition} by
$\1\{t\leq\tau\}$. Since $\{t\leq\tau\}$ is $\mathcal H_{t-1}$-measurable, the
stopping indicator is predictable and preserves the
martingale-difference structure of the fluctuation term. Summing the
resulting stopped recursion reduces the localization argument to
controlling the accumulated martingale fluctuation and second-order
stochastic term. We use the harmonic step size
$
\eta_t=\gamma/(t+t_0),
$
which balances the contracting population drift with summable second-order stochastic fluctuations.
\begin{lemma}[Finite-horizon localization]
\label{lem:finite-horizon-localization}
Under the conditions as in Lemma~\ref{lem:local_one_step} above, fix $n\geq n_0$ and $\delta\in(0,1)$.
There exists a problem-dependent constant $C_{\rm loc}<\infty$,
independent of $d$, $n$, and $\delta$, such that, if
$
n_0\geq C_{\rm loc}\{d+\log(1/\delta)\},
$
then $\Prob(\tau\leq n)\leq\delta$.
\end{lemma}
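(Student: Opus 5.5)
On the event $E_0=\{\|v_{n_0}-v^\star\|_2\le r_0/2\}$, I will bound the stopped error $e_{t\wedge\tau}$ pathwise through the decomposition \eqref{eq:stochastic-error-decomposition}. The plan is to show that, with high probability, the martingale fluctuation and the second-order term together add at most $r_0^2/4$ over the entire horizon. An exit would then require $e_\tau>r_0^2$ starting from $e_{n_0}\le r_0^2/4$, which is impossible.

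\textbf{Step 1 (initialization).} Apply Proposition~\ref{prop:randomized_initialization} with confidence $\delta/3$. If $n_0\ge C\{B_Y/(\Delta_\beta\mu_g r_0)\}^2\{d+\log(3/\delta)\}$, then $\Prob(E_0^c)\le\delta/3$. This fixes part of $C_{\rm loc}$.

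\textbf{Step 2 (stopped recursion).} For $n_0<t\le n$, multiply the increments in \eqref{eq:stochastic-error-decomposition} by $\1\{t\le\tau\}$. On $\{t\le\tau\}$ we have $\|v_{t-1}-v^\star\|_2\le r_0$, so Corollary~\ref{cor:local_stability} makes the drift term nonpositive. Summing gives
\[
e_{(n\wedge\tau)}\le e_{n_0}+M_n+S_n,
\]
where
\[
M_n=\sum_{t=n_0+1}^n 2\eta_t\1\{t\le\tau\}\langle v_{t-1}-v^\star,\xi_t\rangle,\qquad
S_n=\sum_{t=n_0+1}^{n}\eta_t^2\|G_t\|_2^2.
\]
The drift term is in fact strongly negative, which I will use in Step 4.

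\textbf{Step 3 (second-order term).} The bound $\|G_t\|_2\le B_Y\|X_t\|_2$ gives $\E S_n\le B_Y^2 d\gamma^2\sum_{t>n_0}(t+t_0)^{-2}\le B_Y^2d\gamma^2/(n_0+t_0)$. For a high-probability statement I will use $\chi^2_d$ concentration: with probability at least $1-\delta/3$, uniformly over $t\le n$ after a union bound, $\|X_t\|_2^2\lesssim d+\log(n/\delta)$. To avoid any dependence on $n$, I will instead sum via a Bernstein-type bound for the weighted sum $\sum_t\eta_t^2\|X_t\|_2^2$, whose weights are summable; this gives $S_n\lesssim B_Y^2\gamma^2\{d+\log(1/\delta)\}/(n_0+t_0)$. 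Taking $n_0\ge C\{d+\log(1/\delta)\}$ makes this at most $r_0^2/8$.

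\textbf{Step 4 (martingale term; the main obstacle).} The increments of $M_n$ are conditionally sub-Gaussian given $\mathcal H_{t-1}$. Indeed, $\langle v_{t-1}-v^\star,\xi_t\rangle$ involves $Y_tA_t\langle P^\perp_{v_{t-1}}(v_{t-1}-v^\star),X_t\rangle$, which is a bounded multiple of a one-dimensional Gaussian projection. Its scale is $B_Y\|v_{t-1}-v^\star\|_2\le B_Yr_0$, and crucially this scale is free of $d$.

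A Freedman or sub-Gaussian Azuma inequality then bounds the sum of conditional variances by $\sum_t4\eta_t^2B_Y^2r_0^2\lesssim\gamma^2B_Y^2r_0^2/(n_0+t_0)$. This yields, with probability at least $1-\delta/3$ and uniformly in $n$ via Ville's maximal inequality,
\[
\sup_n M_n\lesssim \gamma B_Y r_0\sqrt{\log(1/\delta)/(n_0+t_0)}.
\]
This is at most $r_0^2/8$ once $n_0\gtrsim\gamma^2B_Y^2\log(1/\delta)/r_0^2$.

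The delicate point is that the martingale must be applied to the stopped process, so that the variance bound uses $\|v_{t-1}-v^\star\|_2\le r_0$. A cruder route bounds the inner product by $\|v_{t-1}-v^\star\|\,\|\xi_t\|$, but that brings in a factor of $d$ and should be avoided. If a sharper constant were needed, I would instead absorb the variance into the drift $-2\lambda_0\eta_te_{t-1}$ using a self-normalized supermartingale, namely $\exp\{\lambda M-\lambda^2V\}$.

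\textbf{Step 5 (conclusion).} On the intersection of the three good events, $e_{n\wedge\tau}\le r_0^2/4+r_0^2/8+r_0^2/8<r_0^2$, so $\tau>n$. Hence $\Prob(\tau\le n)\le\delta$ with $C_{\rm loc}$ equal to the maximum of the constants above. These constants depend on $B_Y,\Delta_\beta,\mu_g,r_0,\gamma,t_0$ but not on $d$, $n$, or $\delta$.
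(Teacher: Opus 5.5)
Your proposal is correct and follows essentially the same route as the paper. The shared steps are: initialization to within $r_0/2$ via Proposition~\ref{prop:randomized_initialization} at confidence $\delta/3$; the stopped recursion with the nonpositive drift dropped; a weighted chi-square (Laurent--Massart) bound for the quadratic term; Ville's maximal inequality for the stopped martingale, whose conditionally sub-Gaussian increments have the $d$-free scale $B_Y r_0\eta_t$; and finally the contradiction between $e_\tau>r_0^2$ and $e_{n\wedge\tau}<r_0^2$. The differences are cosmetic: you give each stochastic term a budget of $r_0^2/8$ rather than $r_0^2/4$, and you never actually use the drift you said you would invoke in Step~4, nor do you need it.
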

The explicit form of $C_{\rm loc}$ is given in
Appendix~\ref{app:stochastic_analysis}. 
Lemma~\ref{lem:finite-horizon-localization} shows that an
initialization period of order $d+\log(1/\delta)$ keeps the NBL
iterates within the local stability region with probability at least
$1-\delta$. Within this region, the one-step contraction yields the
following estimation rate.
\begin{proposition}[Localized estimation rate]
\label{prop:localized-estimation}
Suppose that Assumptions~\ref{assum:gaussian_context},
 \ref{assum:sequential_rewards},
\ref{assum:bounded_rewards}, and
\ref{assum:positive-stability} hold. Let
$\eta_t=\gamma/(t+t_0)$, where $2\gamma\lambda_0>1$,
$n_0+1+t_0\geq2\gamma\lambda_0$, and $t_0\leq C_{\rm off}n_0$ for some constant 
$C_{\rm off}<\infty$. Then, for some $C_e<\infty$ independent of $d$ and $t$,
\[
\E\left[
\|v_t-v^\star\|_2^2\1\{\tau>t\}
\right]
\leq
C_e\frac{d}{t+t_0},
\qquad t\geq n_0.
\]
\end{proposition}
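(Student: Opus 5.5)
We localize the conditional recursion of Lemma~\ref{lem:local_one_step} with the stopping time $\tau$ and then unroll the resulting scalar recursion, in the style of the standard Chung-type lemma for stochastic approximation. Define $a_t=\E[e_t\1\{\tau>t\}]$ for $t\geq n_0$. Since $\{\tau>t\}\subseteq\{\tau>t-1\}$ and $e_t\geq0$, we have $e_t\1\{\tau>t\}\leq e_t\1\{\tau>t-1\}$. The event $\{\tau>t-1\}$ is $\mathcal H_{t-1}$-measurable and forces $\|v_{t-1}-v^\star\|_2\leq r_0$. Lemma~\ref{lem:local_one_step} therefore applies on this event, and after taking expectations it gives
\[
a_t\leq \E\bigl[\E[e_t\mid\mathcal H_{t-1}]\1\{\tau>t-1\}\bigr]\leq (1-2\lambda_0\eta_t)a_{t-1}+B_Y^2d\,\eta_t^2 .
\]
Two points justify this step. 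First, the condition $n_0+1+t_0\geq 2\gamma\lambda_0$ ensures $1-2\lambda_0\eta_t\geq0$ for all $t>n_0$, so the contraction factor may be applied to the smaller quantity $a_{t-1}$. Second, the bound $\Prob(\tau>t-1)\leq1$ is used on the noise term.

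For the initial condition, $e_{n_0}\leq 4$ trivially, and $a_{n_0}\leq r_0^2$ on $\{\tau>n_0\}$. A better bound comes from Proposition~\ref{prop:randomized_initialization}: integrating its tail bound over $\delta$ gives
\[
\E[e_{n_0}]\lesssim \frac{B_Y^2}{\Delta_\beta^2\mu_g^2}\,\frac{d}{n_0}.
\]
Since $t_0\leq C_{\rm off}n_0$, we have $n_0+t_0\leq(1+C_{\rm off})n_0$, and hence $a_{n_0}\leq C_0\,d/(n_0+t_0)$ with $C_0$ independent of $d$. Alternatively, one can avoid the initialization proposition by using $a_{n_0}\leq 4$ together with $n_0\geq C_{\rm loc}d$, which also gives a $d/(n_0+t_0)$ bound. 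I would record both routes and use whichever the constants favor.

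The main step is an induction on $t$. Write $k=t+t_0$ and $c=2\gamma\lambda_0>1$. The recursion becomes
\[
a_t\leq (1-c/k)\,a_{t-1}+B_Y^2\gamma^2 d/k^2 .
\]
I claim $a_t\leq C_e d/k$ with
\[
C_e=\max\Bigl\{C_0,\ \frac{B_Y^2\gamma^2}{c-1}\cdot 2\Bigr\}
\]
or a comparable choice. Suppose $a_{t-1}\leq C_e d/(k-1)$. Then the recursion gives
\[
a_t\leq C_e d\,\frac{k-c}{k(k-1)}+\frac{B_Y^2\gamma^2d}{k^2}.
\]
It therefore suffices to show
\[
C_e\,\frac{k-c}{k-1}+\frac{B_Y^2\gamma^2}{k}\leq C_e .
\]
Rearranging, this is $C_e(c-1)/(k-1)\geq B_Y^2\gamma^2/k$, which holds whenever $C_e\geq B_Y^2\gamma^2/(c-1)$, using $(k-1)/k\leq1$. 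This is exactly where $2\gamma\lambda_0>1$ enters. None of the constants depend on $d$ or $t$.

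I expect the main obstacle to be the bookkeeping in the localization step rather than the induction. One must check that the stopped quantity still satisfies the recursion even though $\E[e_t\mid\mathcal H_{t-1}]$ is controlled only on $\{\tau>t-1\}$. One must also make sure that dropping the indicator on the noise term does not introduce $d$-dependence beyond the factor $d$ already present. A secondary issue is that Lemma~\ref{lem:local_one_step} is stated with the constant $B_Y^2d$, which rests on the bound $\E\|G_t\|_2^2\leq B_Y^2\E\|X_t\|_2^2=B_Y^2d$. I would simply inherit this bound.
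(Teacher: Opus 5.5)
Your argument is the paper's proof. You use the same stopped sequence $a_t=\E[e_t\1\{\tau>t\}]$ and the same localization through the $\mathcal H_{t-1}$-measurable event $\{\tau>t-1\}=\{\tau\geq t\}$. Your induction is the Chung-type recursion of Lemma~\ref{lem:sa-recursion}, and you control the initial term by the initialization mean-squared error. The appendix version of Proposition~\ref{prop:randomized_initialization} states that error directly as \eqref{eq:initialization-mse}, so integrating the tail bound over $\delta$ is valid but unnecessary.

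Two corrections are needed.

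\begin{itemize}
\item You should discard the ``alternative route'' for the initial condition, because it fails. The bound $a_{n_0}\leq4$ only gives $(n_0+t_0)a_{n_0}\leq4(1+C_{\rm off})n_0$, which grows with $n_0$. Making this $O(d)$ would require an \emph{upper} bound $n_0\lesssim d$. A lower bound $n_0\geq C_{\rm loc}d$ points the wrong way, and the proposition imposes no condition of that kind anyway. With $n_0\asymp d+\log n$, as in the regret theorem, this route would only give $(d+\log n)/(t+t_0)$.
\item Nonnegativity of $1-2\lambda_0\eta_t$ is not needed in the localization step. There $\E[e_{t-1}\1\{\tau>t-1\}]$ is exactly $a_{t-1}$. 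It is needed in the induction step, where you multiply the inductive hypothesis $a_{t-1}\leq C_ed/(k-1)$ by $1-c/k$.
\end{itemize}
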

Thus, within the local stability region, the squared directional
error is of order $d/t$. We next show that this rate translates
directly into logarithmic cumulative regret through the geometry of
boundary disagreement.
\paragraph{From boundary error to regret.}
The greedy action differs from the optimal action only when
$v_{t-1}$ and $v^\star$ assign different signs to $X_t$. Thus, the
instantaneous regret can be written as
\[
r_t
=
|m_+(X_t)-m_-(X_t)|
\1\left\{
\operatorname{sgn}(v_{t-1}^\top X_t)
\neq
\operatorname{sgn}((v^\star)^\top X_t)
\right\}.
\]
Moreover, because the two conditional mean functions agree on the
optimal decision boundary, the reward gap vanishes as
$(v^\star)^\top X_t$ approaches zero. Under the polynomial-growth
condition on $g'$ and Gaussian contexts, the weighted Gaussian
disagreement bound in Lemma~\ref{lem:gaussian-polynomial-disagreement}
therefore gives
\begin{equation}
\E[r_t\mid\mathcal H_{t-1}]
\leq
C_{\rm gap}\frac{\Delta_\beta}{2}
\|v_{t-1}-v^\star\|_2^2,
\label{eq:error-to-regret}
\end{equation}
where $C_{\rm gap}<\infty$ depends only on the polynomial-growth
constants for $g'$ and on $q$ from Assumption~\ref{assum:polygrowth_g}, and is independent of
$\Delta_\beta$, $d$, and $t$.
\begin{theorem}[Regret of NBL]
\label{thm:regret-rate}
Suppose that Assumptions~\ref{assum:gaussian_context},
\ref{assum:polygrowth_g}, \ref{assum:sequential_rewards},
\ref{assum:bounded_rewards}, and
\ref{assum:positive-stability} hold. Let
$\eta_t=\gamma/(t+t_0)$ with $2\gamma\lambda_0>1$ and suppose that
$t_0\leq C_{\rm off}n_0$ for some fixed $C_{\rm off}<\infty$. There exists a
sufficiently large problem-dependent constant $C_0<\infty$,
independent of $d$ and $n$, such that, with
$n_0=\lceil C_0(d+\log n)\rceil$ and $n_0<n$, Natural Boundary
Learning satisfies
\[
\E[R_n]=O(d\log n),
\]
where the implicit constant is independent of $d$ and $n$.
\end{theorem}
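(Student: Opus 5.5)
The proof splits the regret into three parts: the randomized initialization, the greedy rounds before the localization exit time $\tau$, and the greedy rounds on the bad event $\{\tau\le n\}$.

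\emph{Initialization and bad event.} First I will bound the instantaneous regret by an integrable envelope. By Assumption~\ref{assum:bounded_rewards}, $|m_\pm(x)|\le B_Y$, so $r_t\le 2B_Y$ for every $t$. Hence the first $n_0$ rounds contribute at most
\[
2B_Yn_0=O(d+\log n).
\]
For the greedy rounds I will write
\[
r_t=r_t\1\{\tau\ge t\}+r_t\1\{\tau<t\}.
\]
The second piece sums to at most $2B_Y n\,\1\{\tau\le n\}$. Apply Lemma~\ref{lem:finite-horizon-localization} with $\delta=1/n$; this requires $n_0\ge C_{\rm loc}(d+\log n)$, which holds once $C_0\ge C_{\rm loc}$. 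Then $\Prob(\tau\le n)\le 1/n$, and the expected contribution of the bad event is at most $2B_Y$.

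\emph{Good event.} Since $\{\tau\ge t\}=\{\tau>t-1\}$ is $\mathcal H_{t-1}$-measurable, conditioning and \eqref{eq:error-to-regret} give
\[
\E[r_t\1\{\tau\ge t\}]\le C_{\rm gap}\frac{\Delta_\beta}{2}\,\E\bigl[\|v_{t-1}-v^\star\|_2^2\1\{\tau>t-1\}\bigr].
\]
I then apply Proposition~\ref{prop:localized-estimation} at time $t-1\ge n_0$. Its hypotheses are $2\gamma\lambda_0>1$, which is assumed, and $n_0+1+t_0\ge 2\gamma\lambda_0$, which holds once $C_0\ge 2\gamma\lambda_0$. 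It yields the bound $C_e d/(t-1+t_0)$. Summing over $t=n_0+1,\dots,n$ and comparing with an integral gives
\[
C_{\rm gap}\frac{\Delta_\beta}{2}\,C_e\,d\sum_{t=n_0+1}^n\frac{1}{t-1+t_0}\le C\,d\log n.
\]

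\emph{Combining, and the main check.} Adding the three pieces gives
\[
\E[R_n]\le 2B_YC_0(d+\log n)+2B_Y+C_{\rm gap}\frac{\Delta_\beta}{2}C_e\,d\,(1+\log n)=O(d\log n),
\]
with constants independent of $d$ and $n$. The step needing the most care is confirming that the constants $C_e$ and $C_{\rm loc}$ do not secretly depend on $n$ through $n_0=\lceil C_0(d+\log n)\rceil$ and $t_0\le C_{\rm off}n_0$. $C_{\rm loc}$ is independent of $n$ by the statement of Lemma~\ref{lem:finite-horizon-localization}. The remaining worry is that the initial error term in the recursion behind Proposition~\ref{prop:localized-estimation} might scale like $r_0^2(n_0+t_0)/(t+t_0)$. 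Since $n_0+t_0\lesssim d+\log n$, such a term would sum to at most $O((d+\log n)\log n)$. This is of the stated order only if $\log n\lesssim d$, so in general it is not.

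To avoid this, I would rely on the proposition's stated form $C_e d/(t+t_0)$ with $C_e$ independent of $d$ and $t$. This form presupposes that the initialization error is already $O(d/n_0)$, as delivered by Proposition~\ref{prop:randomized_initialization}. Indeed, at $t=n_0$ that error is $O(d/(n_0+t_0))$, since $t_0\le C_{\rm off}n_0$. The $\log(1/\delta)=\log n$ part of the initialization bound enters only through the probability of leaving the stability neighborhood, which is already handled by the bad-event term above.
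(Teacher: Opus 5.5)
Your proof is correct and follows the paper's argument essentially step for step. The regret splits into initialization, the good event $\{\tau\ge t\}$ (handled via \eqref{eq:error-to-regret} and Proposition~\ref{prop:localized-estimation} at time $t-1$), and the bad event (controlled by Lemma~\ref{lem:finite-horizon-localization}); $C_e$ stays $n$-free exactly as you argue, because the paper bounds $(n_0+t_0)a_{n_0}$ with the mean-squared initialization bound of Proposition~\ref{prop:randomized_initialization}. The only difference is that the paper takes $\delta=n^{-4}$, so the bad event contributes $2B_Yn^{-3}$, whereas your $\delta=1/n$ gives $2B_Y$; this does not affect the $O(d\log n)$ rate.
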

Theorem~\ref{thm:regret-rate} shows that, after an initialization
period of order $d+\log n$, NBL achieves $O(d\log n)$ expected regret
while acting greedily throughout Stage II. The rate follows from the
direct connection between boundary estimation and regret. The
localized squared estimation error is of order $d/t$, while the
reward gap vanishes near the true decision boundary. Consequently,
\eqref{eq:error-to-regret} converts squared boundary error of order
$d/t$ into instantaneous regret of order $d/t$. Summing over time
then gives $\sum_{t=1}^n d/t=O(d\log n).$
\begin{figure*}[t]
\centering
\setlength{\tabcolsep}{1.5pt}

{\small\bfseries Decision stability and stochastic behavior}

\vspace{1mm}
\begin{tabular}{cccc}
{\scriptsize (a)} & {\scriptsize (b)} & {\scriptsize (c)} & {\scriptsize (d)} \\[-2mm]
\includegraphics[width=0.242\textwidth]{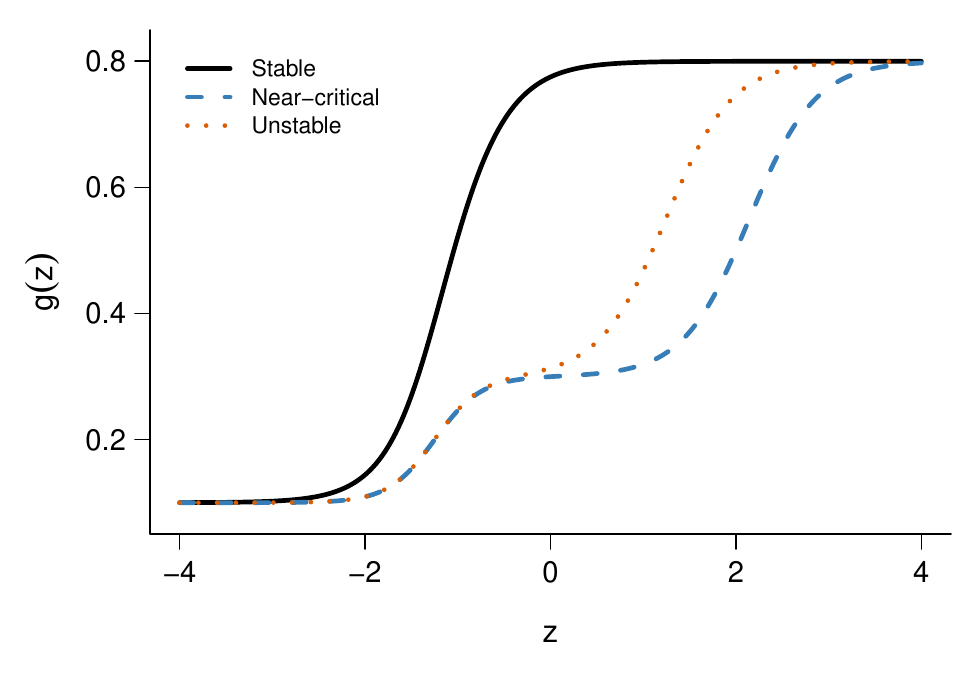}
&
\includegraphics[width=0.242\textwidth]{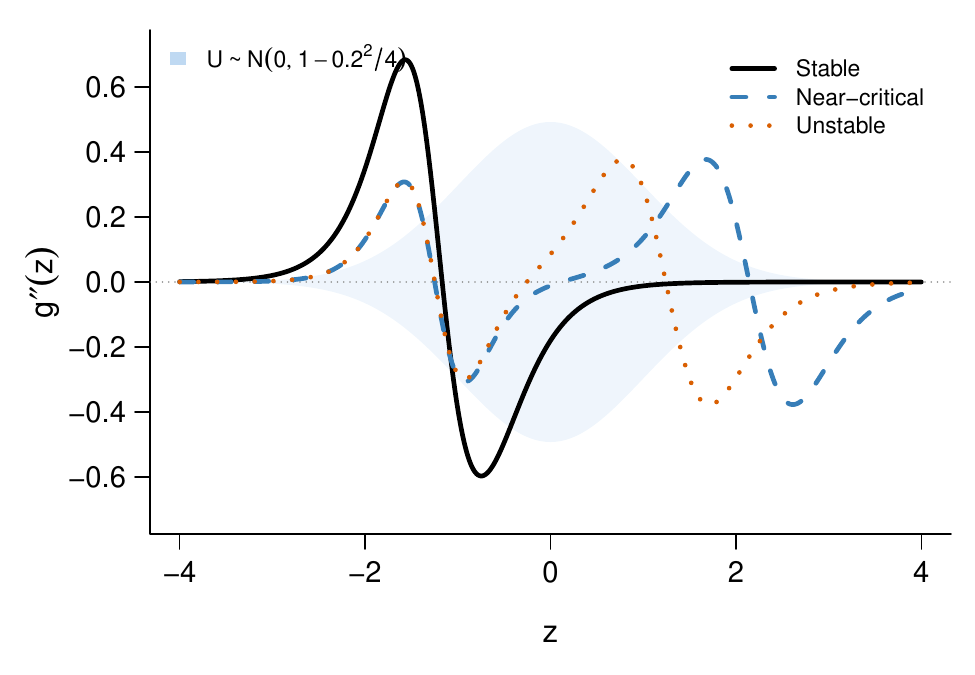}
&
\includegraphics[width=0.242\textwidth]{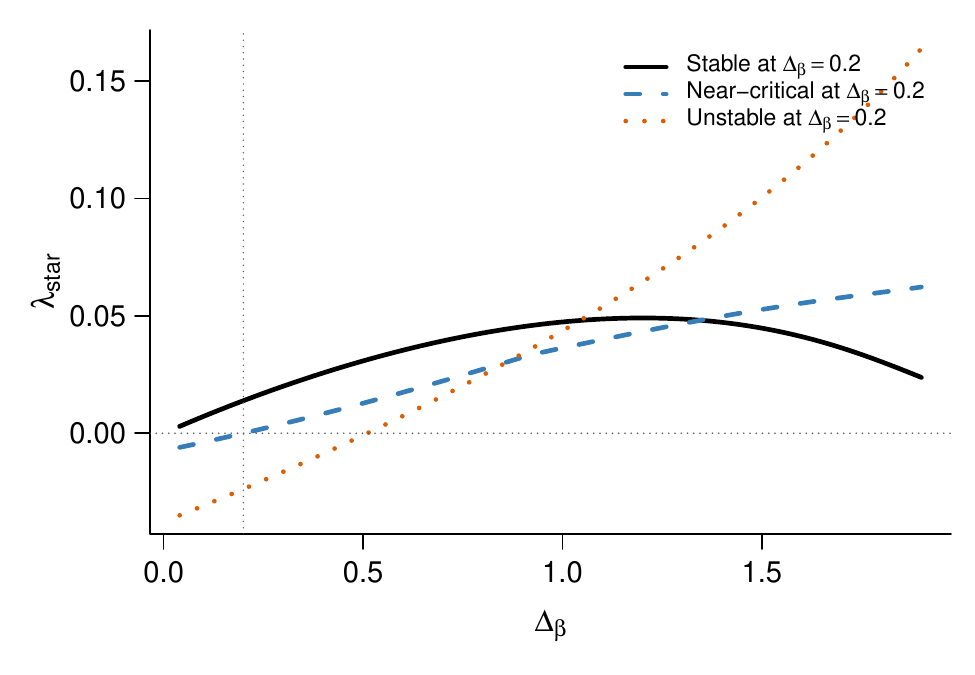}
&
\includegraphics[width=0.242\textwidth]{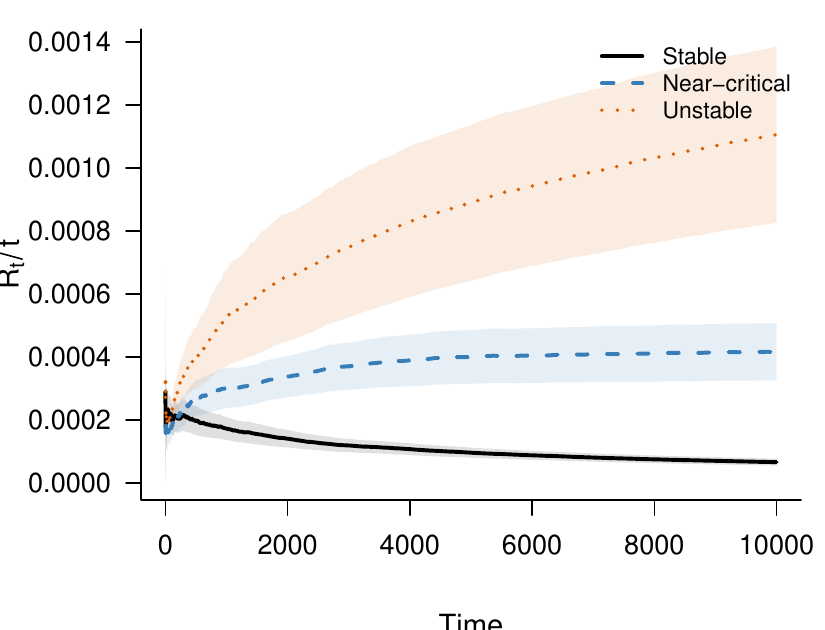}
\end{tabular}

\vspace{2mm}
{\small\bfseries Link misspecification and parametric comparison}

\vspace{1mm}
\begin{tabular}{cccc}
{\scriptsize (e)} & {\scriptsize (f)} & {\scriptsize (g)} & {\scriptsize (h)} \\[-2mm]
\includegraphics[width=0.242\textwidth]{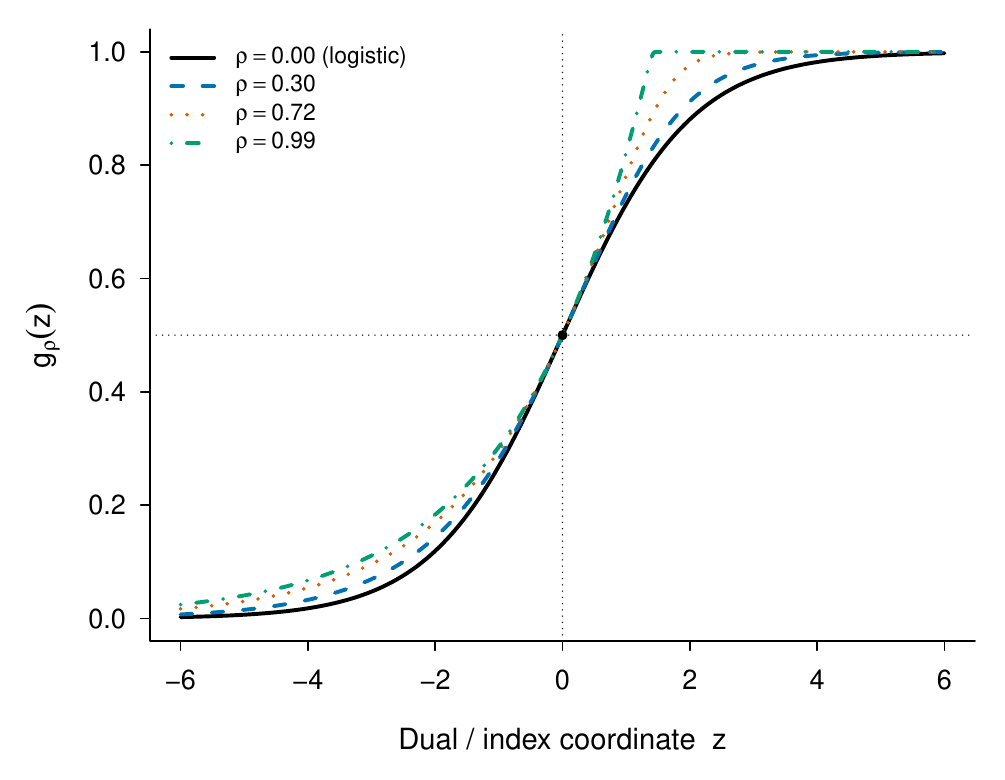}
&
\includegraphics[width=0.242\textwidth]{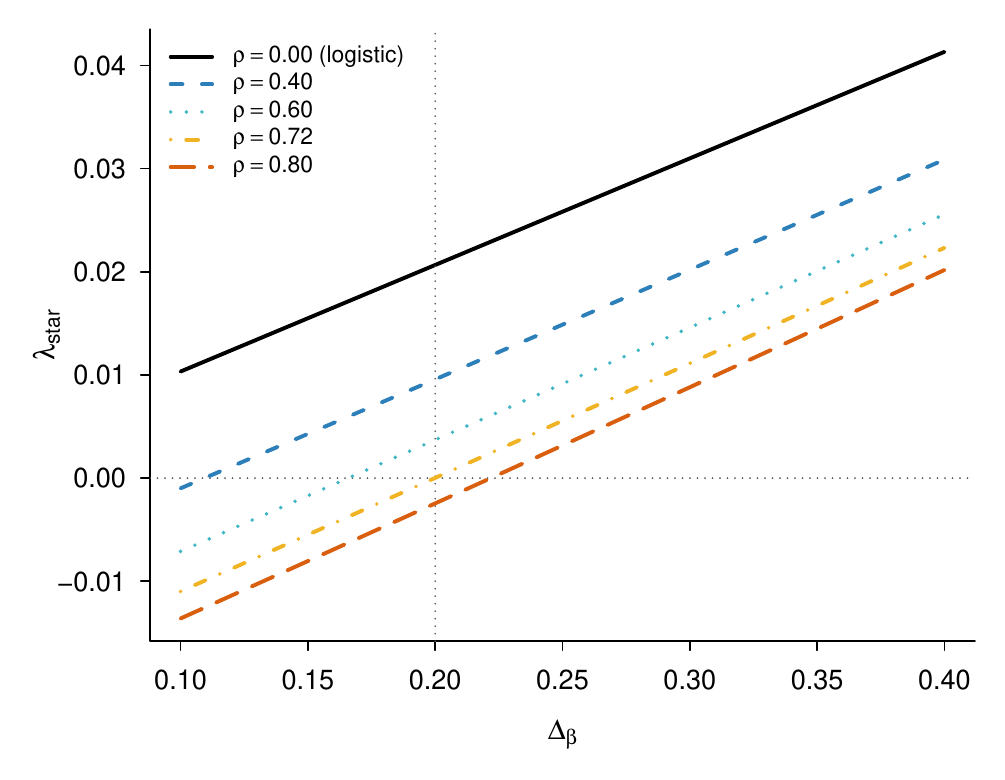}
&
\includegraphics[width=0.242\textwidth]{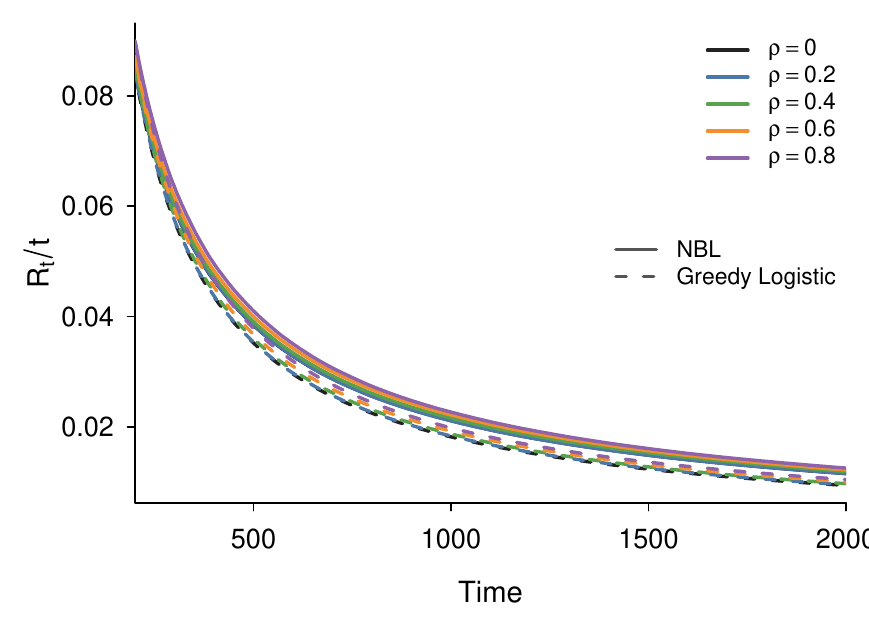}
&
\includegraphics[width=0.242\textwidth]{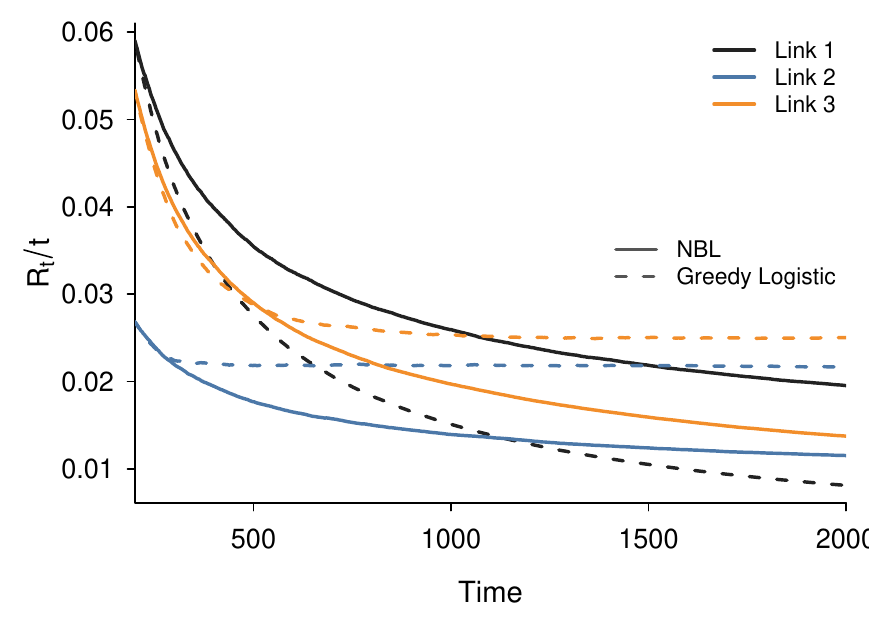}
\end{tabular}

\caption{Numerical illustration of decision stability and link
misspecification. Top: two-transition links yielding stable, near-critical,
and unstable dynamics, their curvature, the resulting decision stability
coefficient $\lambda_\star$, and stochastic NBL regret.
Bottom: the elicitation-induced link family, its decision stability under
weak separation, and comparisons of NBL with Greedy Logistic under link
misspecification.}
\label{fig:numerical-summary}
\end{figure*}
The shared-link model is related to generalized linear contextual
bandits, where a known link permits $\widetilde O(\sqrt{dn})$ regret
using UCB-style exploration \cite{li2017provably}. In contrast, our
link is unknown, but its shared monotonicity makes the optimal decision
boundary identifiable without estimating the link. Our result is also related to exploration-free linear contextual bandits: \cite{bastani2021mostly} obtain logarithmic regret under
covariate diversity, while \cite{kim2024local} establish
$O(\operatorname{poly}\log n)$ regret under local anti-concentration.
For the shared-link single-index model, NBL similarly requires
randomized exploration only initially and achieves
$O(d\log n)$ regret through local stability of the boundary dynamics.

\section{Numerical Illustrations}
\label{sec:numerics}
We use two sets of experiments to illustrate the role of decision stability
and the effect of link misspecification. Throughout, $d=2$; complete
simulation settings and additional experiments are given in
Appendix~\ref{app:numerics}.

\paragraph{Decision stability and stochastic behavior.}
We first consider the two-transition family
$g(z)=p_0+a_1\{1+\tanh[b_1(z-\mu_1)]\}
+a_2\{1+\tanh[b_2(z-\mu_2)]\}$ and select links yielding stable,
near-critical, and unstable dynamics at $\Delta_\beta=0.2$.
Figure~\ref{fig:numerical-summary}(a) shows that all three links are
strictly increasing, but their local geometry differs substantially.
In particular, panel (b) shows differences in curvature over regions
receiving appreciable Gaussian mass. Through the balance between the
sensitivity and curvature terms in the decision stability coefficient in \eqref{eq:decision_stability},
these differences lead to positive, nearly zero, and negative values of
$\lambda_\star$ at $\Delta_\beta=0.2$. Panel (c) shows, however, that instability is largely
confined to weakly separated arms, typically with $\Delta_\beta\lesssim0.2$
in these examples, and disappears as the arm separation increases. The stochastic experiment in
panel (d) reflects the same population geometry: average regret decreases
under stable dynamics, while near-critical and unstable dynamics exhibit
increasingly persistent regret. Additional directional-error diagnostics
and experiments varying $\Delta_\beta$ are in
Appendix~\ref{app:numerics}.

\paragraph{Link misspecification and parametric comparison.}
We next compare NBL with Greedy Logistic, a natural parametric benchmark
for greedy contextual-bandit learning \cite{bastani2021mostly}.
Figure~\ref{fig:numerical-summary}(e) considers the elicitation-induced
family $g_\rho=(\varphi_\rho')^{-1}$, where $\rho=0$ is logistic and
increasing $\rho$ gives controlled departures from logistic geometry.
Panel (f) shows that these departures can also alter decision stability
under weak separation. In panel (g), Greedy Logistic has lower regret when
the link is logistic or nearly logistic, while larger departures from the
logistic link lead to lower regret for NBL. Panel (h) returns to the
two-transition family in (a), now in the well-separated regime
$\Delta_\beta=1$, and similarly shows that NBL can outperform Greedy
Logistic under link misspecification. Together, these comparisons highlight
the parametric efficiency of Greedy Logistic under correct specification
and the advantage of directly learning the decision boundary when the link
is misspecified.
\vspace{-0.2cm}
\section{Conclusion}
\vspace{-0.2cm}
We introduced Natural Boundary Learning for two-arm contextual bandits
with a shared unknown monotone link. Rather than estimating the reward
functions, NBL uses a sequential Stein contrast to learn the optimal
decision boundary directly. Our analysis connects Stein identification,
Riemannian population dynamics, and elicitation geometry through the
decision stability coefficient $\lambda_\star$, and establishes
logarithmic expected regret under local stability. The numerical results
illustrate how link geometry and arm separation jointly determine the
resulting learning dynamics.

Several directions remain open. The present theory relies on isotropic
Gaussian contexts and two arms, and extending boundary learning to more
general context distributions and multiple interacting decision
boundaries is of particular interest. Our guarantees are also local and
require a sufficiently accurate initialization. Developing global or
adaptive initialization guarantees, as well as understanding robustness
to departures from the shared-link assumption, are natural directions
for future work.
\bibliographystyle{plain}
\bibliography{SIB_paper}

\clearpage
\appendix
\thispagestyle{empty}

\onecolumn
\aistatstitle{Appendix for ``Elicitation and Decision Geometry in Single-Index Bandits"}
\section{Proofs and supporting theory}
\subsection{Randomized initialization}

We first establish that the randomized initialization places NBL in
a prescribed neighborhood of $v^\star$ with sufficiently high
probability. Recall that
$\E[W_t\mid\mathcal H_{t-1}]
=(\Delta_\beta\mu_g/2)v^\star$ during the randomized stage.

\begin{proposition}[Randomized initialization]
\label{prop:randomized_initialization}
Suppose that Assumptions~\ref{assum:gaussian_context},
\ref{assum:polygrowth_g}, \ref{assum:sequential_rewards}, and
\ref{assum:bounded_rewards} hold. Then
\begin{equation}
\E\!\left[
\|v_{n_0}-v^\star\|_2^2
\right]
\leq
\frac{16B_Y^2d}
{\Delta_\beta^2\mu_g^2n_0}.
\label{eq:initialization-mse}
\end{equation}
Moreover, there exists a universal constant $C>0$ such that, for
any $\delta\in(0,1)$, with probability at least $1-\delta$,
\begin{equation}
\|v_{n_0}-v^\star\|_2
\leq
\frac{CB_Y}{\Delta_\beta\mu_g}
\sqrt{\frac{d+\log(2/\delta)}{n_0}}.
\label{eq:initialization-high-probability}
\end{equation}
Consequently, for any $r>0$ and $\delta_0\in(0,1)$, there exists a
universal constant $C_{\mathrm{init}}>0$ such that
\[
n_0
\geq
C_{\mathrm{init}}
\frac{B_Y^2\{d+\log(2/\delta_0)\}}
{\Delta_\beta^2\mu_g^2r^2}
\]
implies
$
\Prob(\|v_{n_0}-v^\star\|_2\leq r)\geq1-\delta_0.
$
\end{proposition}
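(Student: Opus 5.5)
The plan is to treat $v_{n_0}$ as the normalization of an empirical mean $\overline W_{n_0}$ whose target is $w:=(\Delta_\beta\mu_g/2)v^\star$. The argument then splits into two parts: a concentration bound for $\overline W_{n_0}-w$, and a deterministic normalization inequality. For the second part I would use the elementary bound
\[
\Bigl\|\frac{x}{\|x\|_2}-\frac{w}{\|w\|_2}\Bigr\|_2\le\frac{2\|x-w\|_2}{\|w\|_2},\qquad w\neq0 .
\]
It follows from the triangle inequality after inserting $x/\|w\|_2$. If $x=0$, we define $v_{n_0}$ to be an arbitrary unit vector. The bound still holds in that case, because the left side is at most $2$ while the right side equals $2$. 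Since $\|w\|_2=\Delta_\beta\mu_g/2$, we get $\|v_{n_0}-v^\star\|_2\le 4\|\overline W_{n_0}-w\|_2/(\Delta_\beta\mu_g)$. Both claims of the proposition therefore reduce to bounds on $\|\overline W_{n_0}-w\|_2$.

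\emph{Mean-square bound.} Set $\xi_t=W_t-w$. By the conditional-mean computation recalled above (using Assumptions~\ref{assum:gaussian_context}, \ref{assum:polygrowth_g}, \ref{assum:sequential_rewards} and the independent uniform $A_t$), $\{\xi_t\}$ is a martingale difference sequence with respect to $\mathcal H_t$. The cross terms therefore vanish, and
\[
\E\|\overline W_{n_0}-w\|_2^2=n_0^{-2}\sum_t\E\|\xi_t\|_2^2\le n_0^{-2}\sum_t\E\|W_t\|_2^2 .
\]
The inequality holds because $\E[\|\xi_t\|_2^2\mid\mathcal H_{t-1}]=\E[\|W_t\|_2^2\mid\mathcal H_{t-1}]-\|w\|_2^2$. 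Assumption~\ref{assum:bounded_rewards} gives $\|W_t\|_2^2\le B_Y^2\|X_t\|_2^2$, and Assumption~\ref{assum:gaussian_context} gives $\E\|X_t\|_2^2=d$. Hence $\E\|\overline W_{n_0}-w\|_2^2\le B_Y^2d/n_0$. Squaring the normalization inequality then yields \eqref{eq:initialization-mse}, with constant $16$.

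\emph{High-probability bound.} I would use a $1/2$-net $\mathcal N$ of $\mathbb S^{d-1}$ with $|\mathcal N|\le5^d$, together with the standard fact that $\|y\|_2\le2\max_{u\in\mathcal N}u^\top y$. For a fixed $u$, the increments $u^\top\xi_t$ are martingale differences. Each satisfies $|u^\top W_t|\le B_Y|u^\top X_t|$, and $u^\top X_t\sim N(0,1)$ is independent of $\mathcal H_{t-1}$.

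I expect the main obstacle to be conditional sub-Gaussianity. The reward $Y_t$ may depend on $X_t$ and on the past, so the increments are neither independent nor bounded. I would handle this with a direct conditional MGF bound. The conditional mean of $u^\top\xi_t$ is zero, and $|u^\top\xi_t|\le B_Y(|u^\top X_t|+\E|u^\top X_t|)$, where the dominating variable is independent of $\mathcal H_{t-1}$. A standard argument (bounding the even moments and using a zero-mean Taylor expansion) then gives
\[
\E[e^{\lambda u^\top\xi_t}\mid\mathcal H_{t-1}]\le e^{cB_Y^2\lambda^2}
\]
for a universal constant $c$. Azuma--Hoeffding for sub-Gaussian martingale differences then gives
\[
\Prob\bigl(u^\top(\overline W_{n_0}-w)>s\bigr)\le\exp\bigl(-n_0s^2/(4cB_Y^2)\bigr).
\]
A union bound over the net, with $s\asymp B_Y\sqrt{(d\log5+\log(2/\delta))/n_0}$, gives $\|\overline W_{n_0}-w\|_2\lesssim B_Y\sqrt{(d+\log(2/\delta))/n_0}$ with probability at least $1-\delta$. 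The normalization inequality then gives \eqref{eq:initialization-high-probability}.

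\emph{Sample-size condition.} For the final claim, I would apply \eqref{eq:initialization-high-probability} with $\delta=\delta_0$ and require its right-hand side to be at most $r$. Solving for $n_0$ gives the stated threshold with $C_{\mathrm{init}}=C^2$.
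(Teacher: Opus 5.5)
Your proposal is correct and follows essentially the same route as the paper. The mean-square bound comes from orthogonality of the martingale increments $W_t-(\Delta_\beta\mu_g/2)v^\star$ together with the normalization inequality $\|x/\|x\|_2-y/\|y\|_2\|_2\le 2\|x-y\|_2/\|y\|_2$. The high-probability bound comes from a conditional sub-Gaussian MGF bound for $u^\top\xi_t$, an exponential-martingale Chernoff argument, and a union bound over a $1/2$-net of size at most $5^d$. Your additions are minor refinements rather than a different approach: the convention for $\overline W_{n_0}=0$, the one-sided net inequality, and the explicit domination $|u^\top\xi_t|\le B_Y(|u^\top X_t|+\E|u^\top X_t|)$, which justifies the conditional centering the paper asserts.
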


\subsection{Gaussian integration-by-parts identities}
\label{app:gaussian_stein}

We record the Gaussian integration-by-parts identities used throughout
the proofs.

\begin{lemma}[Gaussian Stein identities]
\label{lem:gaussian_stein}
Let $X\sim N(0,I_d)$ and let $\beta\in\R^d$. Suppose that
$f:\R\to\R$ is twice continuously differentiable and satisfies the
integrability conditions required for the expectations below to be
finite. Then
\begin{align}
\E\left[Xf(\beta^\top X)\right]
&=
\E\left[f'(\beta^\top X)\right]\beta,
\label{eq:stein_first}
\\
\E\left[XX^\top f(\beta^\top X)\right]
&=
\E\left[f(\beta^\top X)\right]I_d
+
\E\left[f''(\beta^\top X)\right]\beta\beta^\top.
\label{eq:stein_second}
\end{align}
Equivalently,
\[
\E\left[
(XX^\top-I_d)f(\beta^\top X)
\right]
=
\E\left[f''(\beta^\top X)\right]\beta\beta^\top.
\]
\end{lemma}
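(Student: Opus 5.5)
The plan is to reduce both identities to one-dimensional Gaussian integration by parts, applied along the direction $\beta$. If $\beta=0$ both identities are trivial: $\E[X]=0$, $\E[XX^\top]=I_d$, and $f$ is constant in $X$. So assume $\beta\neq0$, and write $b=\|\beta\|_2$, $w=\beta/b$, and $S=w^\top X\sim N(0,1)$. Decompose $X=Sw+X_\perp$, where $X_\perp=P_w^\perp X$. Under Assumption~\ref{assum:gaussian_context}, $X_\perp$ is a centered Gaussian vector with covariance $P_w^\perp$ and is independent of $S$. Since $\beta^\top X=bS$, every expectation then reduces to a scalar Gaussian integral in $S$, together with moments of $X_\perp$.

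For \eqref{eq:stein_first}, the independence of $X_\perp$ and $S$ together with $\E X_\perp=0$ gives $\E[X_\perp f(bS)]=0$. This leaves $w\,\E[Sf(bS)]$. Set $k(s)=f(bs)$ and apply the scalar Stein identity $\E[Sk(S)]=\E[k'(S)]$. That identity follows from integrating by parts against $\phi_0$, using $\phi_0'(s)=-s\phi_0(s)$. The result is $\E[Sf(bS)]=b\,\E[f'(bS)]$, and therefore $w\cdot b\,\E f'(\beta^\top X)=\beta\,\E f'(\beta^\top X)$.

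For \eqref{eq:stein_second}, expand
$XX^\top=S^2ww^\top+S(wX_\perp^\top+X_\perp w^\top)+X_\perp X_\perp^\top$. The cross terms vanish in expectation after multiplying by $f(bS)$, again by independence and $\E X_\perp=0$. The last term contributes $P_w^\perp\,\E f(bS)$. For the first term, apply the scalar identity twice, in the form $\E[S^2k(S)]=\E[k(S)]+\E[k''(S)]$. This follows by applying $\E[S\ell(S)]=\E\ell'(S)$ with $\ell(s)=sk(s)$ and then again with $\ell=k'$. It gives $\E[S^2 f(bS)]=\E f(bS)+b^2\E f''(bS)$. Summing the pieces yields $(ww^\top+P_w^\perp)\E f+b^2ww^\top\E f''=I_d\,\E f+\beta\beta^\top\E f''$. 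The equivalent centered form follows by subtracting $I_d\,\E f(\beta^\top X)$.

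The only delicate point is justifying the one-dimensional integration by parts. We need the boundary terms $k(s)\phi_0(s)$ and $s k(s)\phi_0(s)$ to vanish as $|s|\to\infty$, and we need $k,k',k''$ to be integrable against $\phi_0$. I would take the assumed integrability to mean polynomial growth as in Assumption~\ref{assum:polygrowth_g}. Under that reading, the Gaussian tail kills every boundary term, and dominated convergence on $[-M,M]$ as $M\to\infty$ finishes the argument. More generally, if only integrability is assumed, I would pass through the limit along a sequence $M_j$ on which the boundary terms tend to zero. Such a sequence exists because $|k|\phi_0$ and $|s k|\phi_0$ are integrable.
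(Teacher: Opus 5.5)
Your proof is correct, but it takes a different route from the paper's.

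The paper never decomposes $X$. It applies the coordinatewise multivariate Gaussian integration-by-parts formulas $\E[X_jh(X)]=\E[\partial_jh(X)]$ and $\E[(X_jX_k-\delta_{jk})h(X)]=\E[\partial_{jk}h(X)]$ to $h(X)=f(\beta^\top X)$. The chain rule $\partial_{jk}f(\beta^\top X)=\beta_j\beta_kf''(\beta^\top X)$ then gives both identities entry by entry. There, the $I_d$ term comes from the $\delta_{jk}$ produced by differentiating the factor $X_k$.

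You instead exploit the ridge structure. The split $X=Sw+X_\perp$, with $S$ independent of $X_\perp$, reduces everything to the scalar Stein identity, applied once for the first identity and twice for the second. The identity matrix is then assembled as $P_w^\perp\,\E f(bS)$ from the orthogonal component plus $ww^\top\,\E f(bS)$ from $\E[S^2k(S)]$.

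\textbf{What each buys.} The paper's argument is shorter, rests on formulas valid for any smooth $h$ rather than only ridge functions, and needs no separate case $\beta=0$. However, it treats the multivariate formulas as black boxes and says nothing about boundary terms. Yours is more elementary and makes the regularity requirements explicit. It also uses the same $P_{v^\star}^\perp X+Zv^\star$ decomposition that the paper itself relies on in the proofs of Propositions~\ref{prop:population_equilibrium} and~\ref{prop:local_jacobian}.

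\textbf{One bookkeeping point.} The second scalar application (with $\ell=k'$) also produces the boundary term $k'(s)\phi_0(s)$, which should be on your list. It is disposed of exactly like the others, either by polynomial growth or by your subsequence argument, since you already assume $k'$ is integrable against $\phi_0$.
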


\begin{proof}
For the first identity, the multivariate Gaussian integration-by-parts
formula gives
\[
\E[X_j h(X)]
=
\E[\partial_j h(X)].
\]
Taking $h(X)=f(\beta^\top X)$ yields
\[
\partial_j h(X)
=
\beta_j f'(\beta^\top X),
\]
and hence
\[
\E[X_j f(\beta^\top X)]
=
\beta_j\E[f'(\beta^\top X)].
\]
Collecting the coordinates gives \eqref{eq:stein_first}.

For the second identity, applying Gaussian integration by parts twice
gives
\[
\E[(X_jX_k-\delta_{jk})h(X)]
=
\E[\partial_{jk}h(X)].
\]
Since
\[
\partial_{jk}f(\beta^\top X)
=
\beta_j\beta_k f''(\beta^\top X),
\]
we obtain
\[
\E[(X_jX_k-\delta_{jk})f(\beta^\top X)]
=
\beta_j\beta_k\E[f''(\beta^\top X)].
\]
Collecting the entries gives \eqref{eq:stein_second}.
\end{proof}

\subsection{Proof of Proposition~\ref{prop:randomized_initialization}}
\label{app:initialization}

\begin{proof}
Note that $Y_{t,a}$ which we denote as $Y_{t,+}$ and $Y_{t,-}$ denotes the potential rewards for the two arms $\{+1,-1\}$ at time $t$, respectively (these are not to the observed rewards).  
Recall that during the randomized initialization stage,
\[
W_t
=
A_tY_tX_t,
\qquad
t=1,\ldots,n_0,
\]
where $A_t$ is chosen uniformly from $\{-1,+1\}$ using fresh
randomization independent of $X_t$ and $\mathcal H_{t-1}$.
We first identify the conditional mean of $W_t$ given the past.
Since $X_t$ is independent of $\mathcal H_{t-1}$ and the randomized
action is selected uniformly,
\begin{align*}
\E[W_t\mid\mathcal H_{t-1}]
&=
\frac12
\E[Y_{t,+}X_t\mid\mathcal H_{t-1}]
-
\frac12
\E[Y_{t,-}X_t\mid\mathcal H_{t-1}].
\end{align*}
For each $a\in\{-1,+1\}$, the tower property and the conditional-mean
model give
\begin{align*}
\E[Y_{t,a}X_t\mid\mathcal H_{t-1}]
&=
\E\left[
X_t
\E[Y_{t,a}\mid X_t,\mathcal H_{t-1}]
\,\middle|\,
\mathcal H_{t-1}
\right]
\\
&=
\E\left[
X_t g(\beta_a^\top X_t)
\,\middle|\,
\mathcal H_{t-1}
\right].
\end{align*}
Because $X_t$ is independent of $\mathcal H_{t-1}$ and
$X_t\sim N(0,I_d)$, the last expectation is equal to
$
\E[X_tg(\beta_a^\top X_t)]
=
\mu_g\beta_a,
$
where the final equality follows from the first-order Gaussian Stein
identity. Consequently,
\begin{align}
\E[W_t\mid\mathcal H_{t-1}]
&=
\frac{\mu_g}{2}(\beta_+-\beta_-)
=
\frac{\Delta_\beta\mu_g}{2}v^\star.
\label{eq:Wt-conditional-mean-proof}
\end{align}
Define the centered increment
$
D_t
=
W_t-\frac{\Delta_\beta\mu_g}{2}v^\star.
$
By \eqref{eq:Wt-conditional-mean-proof},
$
\E[D_t\mid\mathcal H_{t-1}]
=
0.
$
Thus, $\{D_t\}_{t=1}^{n_0}$ forms a martingale-difference sequence
with respect to the filtration generated by the sequential
observations together with the randomization used to select the
actions. We first record a mean-squared bound for the initialization error.
Since $\{D_t\}_{t=1}^{n_0}$ is a martingale-difference sequence,
the cross terms vanish, and hence
\begin{align*}
\E\left\|
\overline W_{n_0}-\frac{\Delta_\beta\mu_g}{2}v^\star
\right\|_2^2
=
\frac{1}{n_0^2}
\E\left\|
\sum_{t=1}^{n_0}D_t
\right\|_2^2
=
\frac{1}{n_0^2}
\sum_{t=1}^{n_0}
\E\|D_t\|_2^2.
\end{align*}
Moreover, since
$D_t=W_t-\E[W_t\mid\mathcal H_{t-1}]$,
$
\E\|D_t\|_2^2
\leq
\E\|W_t\|_2^2
\leq
B_Y^2\E\|X_t\|_2^2
=
B_Y^2d.
$
Therefore,
\[
\E\left\|
\overline W_{n_0}-\frac{\Delta_\beta\mu_g}{2}v^\star
\right\|_2^2
\leq
\frac{B_Y^2d}{n_0}.
\]
Using
\[
\left\|
\frac{x}{\|x\|_2}
-
\frac{y}{\|y\|_2}
\right\|_2
\leq
\frac{2\|x-y\|_2}{\|y\|_2},
\]
with $x=\overline W_{n_0}$ and
$y=(\Delta_\beta\mu_g/2)v^\star$, we obtain
\[
\E\left[
\|v_{n_0}-v^\star\|_2^2
\right]
\leq
\frac{16B_Y^2d}
{\Delta_\beta^2\mu_g^2n_0}.
\]
This proves \eqref{eq:initialization-mse}. For the high-probability bound note that,
\[
\overline W_{n_0}-\frac{\Delta_\beta\mu_g}{2}v^\star
=
\frac1{n_0}\sum_{t=1}^{n_0}D_t.
\]
We therefore control the empirical initialization moment by
concentrating the martingale sum.
Fix any $u\in\mathbb S^{d-1}$. Under
Assumption~\ref{assum:bounded_rewards},
\[
|u^\top W_t|
=
|Y_t|\,|u^\top X_t|
\leq
B_Y|u^\top X_t|.
\]
Conditional on $\mathcal H_{t-1}$, the random variable
$u^\top X_t$ remains standard Gaussian because $X_t$ is independent
of $\mathcal H_{t-1}$. Hence, for every $s>0$,
\begin{align*}
\Prob\left(
|u^\top W_t|>s
\,\middle|\,
\mathcal H_{t-1}
\right)
\leq
\Prob\left(
B_Y|u^\top X_t|>s
\,\middle|\,
\mathcal H_{t-1}
\right)
=
\Prob\left(
B_Y|Z|>s
\right)
\leq
2\exp\left(
-\frac{s^2}{2B_Y^2}
\right),
\end{align*}
where $Z\sim N(0,1)$. Thus, conditionally on $\mathcal H_{t-1}$, every projection
$u^\top W_t$ has a sub-Gaussian tail with scale of order $B_Y$,
uniformly over $u\in\mathbb S^{d-1}$ and $t\leq n_0$.
The preceding conditional tail bound shows that, conditional on
$\mathcal H_{t-1}$, the random variable $u^\top W_t$ is sub-Gaussian
with scale of order $B_Y$. Since centering a sub-Gaussian random
variable preserves sub-Gaussianity up to a universal constant, there
exists a universal constant $C_0>0$ such that
\[
\E\left[
\exp\left\{
\lambda
\left(
u^\top W_t
-
\E[u^\top W_t\mid\mathcal H_{t-1}]
\right)
\right\}
\,\middle|\,
\mathcal H_{t-1}
\right]
\leq
\exp\left(
C_0\lambda^2B_Y^2
\right),
\qquad
\lambda\in\R.
\]
Recalling that
$
D_t
=
W_t-\E[W_t\mid\mathcal H_{t-1}]
=
W_t-\frac{\Delta_\beta\mu_g}{2}v^\star,
$
we therefore have
\[
\E\left[
\exp\{\lambda u^\top D_t\}
\,\middle|\,
\mathcal H_{t-1}
\right]
\leq
\exp\left(
C_0\lambda^2B_Y^2
\right),
\qquad
\lambda\in\R.
\]
We can now apply the standard exponential martingale argument.
Iterating the preceding conditional moment-generating-function bound
gives
\[
\E\left[
\exp\left\{
\lambda\sum_{t=1}^{n_0}u^\top D_t
\right\}
\right]
\leq
\exp\left(
C_2n_0\lambda^2B_Y^2
\right).
\]
Indeed, conditioning successively on the past yields
\begin{align*}
\E\left[
\exp\left\{
\lambda\sum_{t=1}^{n_0}u^\top D_t
\right\}
\right]
&=
\E\left[
\exp\left\{
\lambda\sum_{t=1}^{n_0-1}u^\top D_t
\right\}
\E\left[
\exp\{\lambda u^\top D_{n_0}\}
\,\middle|\,
\mathcal H_{n_0-1}
\right]
\right]
\\
&\quad\leq
\exp(C_2\lambda^2B_Y^2)
\E\left[
\exp\left\{
\lambda\sum_{t=1}^{n_0-1}u^\top D_t
\right\}
\right],
\end{align*}
and repeating this argument gives the stated bound.
Applying Chernoff's inequality and optimizing over $\lambda$ therefore
gives a universal constant $c_0>0$ such that
\[
\Prob\left(
\left|
\sum_{t=1}^{n_0}u^\top D_t
\right|
>n_0s
\right)
\leq
2\exp\left(
-c_0\frac{n_0s^2}{B_Y^2}
\right).
\]
Equivalently,
\begin{align}
\Prob\left(
\left|
u^\top
\left\{
\overline W_{n_0}
-\frac{\Delta_\beta\mu_g}{2}v^\star
\right\}
\right|
>s
\right)
\leq
2\exp\left(
-c_0\frac{n_0s^2}{B_Y^2}
\right).
\label{eq:init-fixed-direction-proof}
\end{align}
It remains to make this concentration bound uniform over directions
$u\in\mathbb S^{d-1}$. Let $\mathcal N$ be a $1/2$-net of
$\mathbb S^{d-1}$. We may choose $\mathcal N$ so that
$
|\mathcal N|\leq 5^d.
$
Moreover, for every $z\in\R^d$,
$
\|z\|_2
\leq
2\max_{u\in\mathcal N}|u^\top z|.
$
Taking
$
z
=
\overline W_{n_0}-\frac{\Delta_\beta\mu_g}{2}v^\star
$
and applying a union bound together with
\eqref{eq:init-fixed-direction-proof}, we obtain
\begin{align*}
\Prob\left(
\left\|
\overline W_{n_0}
-\frac{\Delta_\beta\mu_g}{2}v^\star
\right\|_2
>2s
\right)
&\leq
\Prob\left(
\max_{u\in\mathcal N}
\left|
u^\top
\left\{
\overline W_{n_0}
-\frac{\Delta_\beta\mu_g}{2}v^\star
\right\}
\right|
>s
\right)
\\
&\qquad\leq
\sum_{u\in\mathcal N}
\Prob\left(
\left|
u^\top
\left\{
\overline W_{n_0}
-\frac{\Delta_\beta\mu_g}{2}v^\star
\right\}
\right|
>s
\right)
\\
&\qquad\leq
2|\mathcal N|
\exp\left(
-c_0\frac{n_0s^2}{B_Y^2}
\right)
\\
&\qquad\leq
2\exp\left(
d\log 5
-
c_0\frac{n_0s^2}{B_Y^2}
\right).
\end{align*}
Choosing
$
s
=
C_3B_Y
\sqrt{
\frac{d+\log(2/\delta)}{n_0}
}
$
for a sufficiently large universal constant $C_3>0$ gives, with
probability at least $1-\delta$,
\begin{align}
\left\|
\overline W_{n_0}
-\frac{\Delta_\beta\mu_g}{2}v^\star
\right\|_2
\leq
C_4B_Y
\sqrt{
\frac{d+\log(2/\delta)}{n_0}
}
\label{eq:init-vector-concentration-proof}
\end{align}
for another universal constant $C_4>0$. We next translate the concentration of the unnormalized moment
$\overline W_{n_0}$ into concentration of its direction. For any
nonzero vectors $x,y\in\R^d$,
\[
\left\|
\frac{x}{\|x\|_2}
-
\frac{y}{\|y\|_2}
\right\|_2
\leq
\frac{2\|x-y\|_2}{\|y\|_2}.
\]
Applying this inequality with
$
x=\overline W_{n_0},
y=(\Delta_\beta\mu_g/2)v^\star,
$
and using $\|v^\star\|_2=1$, we obtain
\[
\|v_{n_0}-v^\star\|_2
\leq
\frac{4}{\Delta_\beta\mu_g}
\left\|
\overline W_{n_0}
-\frac{\Delta_\beta\mu_g}{2}v^\star
\right\|_2.
\]
Combining this inequality with
\eqref{eq:init-vector-concentration-proof} and absorbing numerical
factors into a universal constant $C>0$ gives, with probability at
least $1-\delta$,
\[
\|v_{n_0}-v^\star\|_2
\leq
\frac{CB_Y}{\Delta_\beta\mu_g}
\sqrt{
\frac{d+\log(2/\delta)}{n_0}
}.
\]
Finally, let $r>0$ and $\delta_0\in(0,1)$. The preceding upper bound
is at most $r$ whenever
\[
n_0
\geq
C_{\mathrm{init}}
\frac{
B_Y^2\{d+\log(2/\delta_0)\}
}{
\Delta_\beta^2\mu_g^2r^2
}
\]
for a sufficiently large universal constant
$C_{\mathrm{init}}>0$. Therefore,
\[
\Prob\left(
\|v_{n_0}-v^\star\|_2\leq r
\right)
\geq
1-\delta_0.
\]
This completes the proof.
\end{proof}
\subsection{Population dynamics proofs}
\subsubsection{Conditional mean of the greedy NBL update}
\label{app:conditional_population_field}

\begin{lemma}[Conditional population field]
\label{lem:conditional_population_field}
During Stage II of NBL,
\begin{equation}
\E[W_t\mid\mathcal H_{t-1}]
=
F(v_{t-1}),
\label{eq:app-conditional-F}
\end{equation}
where $F$ is defined in \eqref{eq:Fdef}. Consequently,
\begin{equation}
\E[G_t\mid\mathcal H_{t-1}]
=
h(v_{t-1}),
\label{eq:app-conditional-h}
\end{equation}
where $h(v)=P_v^\perp F(v)$.
\end{lemma}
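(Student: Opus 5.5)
The plan is to mirror the conditional-mean computation in the proof of Proposition~\ref{prop:randomized_initialization}, with the uniform randomization replaced by the deterministic greedy rule. The key observation is that in Stage II, $v_{t-1}$ is $\mathcal H_{t-1}$-measurable. Hence, conditionally on $\mathcal H_{t-1}$, the action $A_t=\operatorname{sgn}(v_{t-1}^\top X_t)$ is a deterministic function of $X_t$. We use the convention $\operatorname{sgn}(0)=+1$, matching the indicators $\1\{v^\top X\geq0\}$ and $\1\{v^\top X<0\}$ in \eqref{eq:Fdef}; the tie event has probability zero under Assumption~\ref{assum:gaussian_context} in any case.

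First decompose the Stein contrast according to the chosen arm:
\[
W_t=A_tY_tX_t
= Y_{t,+}X_t\1\{v_{t-1}^\top X_t\geq0\}-Y_{t,-}X_t\1\{v_{t-1}^\top X_t<0\}.
\]
Then apply the tower property, conditioning first on $(X_t,\mathcal H_{t-1})$. The indicators and $X_t$ are measurable with respect to this $\sigma$-field, so Assumption~\ref{assum:sequential_rewards} replaces each $Y_{t,a}$ by $m_a(X_t)$. This gives
\[
\E[W_t\mid\mathcal H_{t-1}]
=\E\bigl[m_+(X_t)X_t\1\{v_{t-1}^\top X_t\geq0\}-m_-(X_t)X_t\1\{v_{t-1}^\top X_t<0\}\bigm|\mathcal H_{t-1}\bigr].
\]
Because $X_t$ is independent of $\mathcal H_{t-1}$ and $v_{t-1}$ is $\mathcal H_{t-1}$-measurable, the freezing (substitution) lemma lets us evaluate this as the unconditional expectation at the frozen value $v=v_{t-1}$. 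That unconditional expectation is exactly $F(v_{t-1})$ from \eqref{eq:Fdef}, which proves \eqref{eq:app-conditional-F}.

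Integrability is needed to justify the conditional expectations. Assumption~\ref{assum:bounded_rewards} gives $\|W_t\|_2\leq B_Y\|X_t\|_2$, which is integrable; alternatively, the polynomial growth in Assumption~\ref{assum:polygrowth_g} also gives integrability of $m_a(X)X$. For \eqref{eq:app-conditional-h}, note that $P^\perp_{v_{t-1}}$ is an $\mathcal H_{t-1}$-measurable, bounded linear map. It can therefore be pulled out of the conditional expectation, giving $\E[G_t\mid\mathcal H_{t-1}]=P^\perp_{v_{t-1}}F(v_{t-1})=h(v_{t-1})$.

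The only delicate step is the freezing argument. It requires joint measurability of $(v,x)\mapsto m_a(x)x\1\{v^\top x\geq0\}$, which holds because the map is Borel, together with the integrability just noted. Beyond that, the proof is a direct computation.
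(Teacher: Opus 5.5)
Your proposal is correct and matches the paper's proof step for step: you decompose $W_t$ by arm via the greedy indicators, apply the tower property through $\sigma(X_t,\mathcal H_{t-1})$ with Assumption~\ref{assum:sequential_rewards}, freeze $v_{t-1}$ using the independence of $X_t$ from $\mathcal H_{t-1}$, and pull the $\mathcal H_{t-1}$-measurable projection $P^\perp_{v_{t-1}}$ out of the conditional expectation. Your remarks on integrability, the $\operatorname{sgn}(0)$ convention, and joint measurability for the freezing step fill in details the paper leaves implicit.
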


\begin{proof}
During Stage II, the greedy action rule is
\[
A_t
=
\begin{cases}
+1,&v_{t-1}^\top X_t\geq0,\\
-1,&v_{t-1}^\top X_t<0.
\end{cases}
\]
Since
$
W_t
=
A_tY_tX_t,
$
we may write
\[
W_t
=
Y_{t,+}X_t\1\{v_{t-1}^\top X_t\geq0\}
-
Y_{t,-}X_t\1\{v_{t-1}^\top X_t<0\}.
\]
Taking conditional expectation given $\mathcal H_{t-1}$ and applying
the tower property gives
\begin{align*}
\E[W_t\mid\mathcal H_{t-1}]
&=
\E\Big[
\E\big[
Y_{t,+}X_t\1\{v_{t-1}^\top X_t\geq0\}
-
Y_{t,-}X_t\1\{v_{t-1}^\top X_t<0\}
\,\big|\,
X_t,\mathcal H_{t-1}
\big]
\,\Big|\,
\mathcal H_{t-1}
\Big].
\end{align*}
Because $X_t$ and $v_{t-1}$ are measurable with respect to
$\sigma(X_t,\mathcal H_{t-1})$, while
\[
\E[Y_{t,a}\mid X_t,\mathcal H_{t-1}]
=
g(\beta_a^\top X_t),
\]
the inner conditional expectation equals
$
g(\beta_+^\top X_t)X_t
\1\{v_{t-1}^\top X_t\geq0\}
-
g(\beta_-^\top X_t)X_t
\1\{v_{t-1}^\top X_t<0\}.
$
Therefore,
\begin{align*}
\E[W_t\mid\mathcal H_{t-1}]
&=
\E\left[
g(\beta_+^\top X_t)X_t
\1\{v_{t-1}^\top X_t\geq0\}
-
g(\beta_-^\top X_t)X_t
\1\{v_{t-1}^\top X_t<0\}
\,\middle|\,
\mathcal H_{t-1}
\right].
\end{align*}

Now $v_{t-1}$ is $\mathcal H_{t-1}$-measurable, while $X_t$ is
independent of $\mathcal H_{t-1}$ and has the same distribution as
$X\sim N(0,I_d)$. Conditional on $\mathcal H_{t-1}$, we may therefore
treat $v_{t-1}$ as fixed and integrate only with respect to the
distribution of $X_t$. By the definition of $F$ in
\eqref{eq:Fdef},
$
\E[W_t\mid\mathcal H_{t-1}]
=
F(v_{t-1}),
$
which proves \eqref{eq:app-conditional-F}.

Finally,
$
G_t
=
P_{v_{t-1}}^\perp W_t.
$
Since $v_{t-1}$ is $\mathcal H_{t-1}$-measurable,
$P_{v_{t-1}}^\perp$ is also $\mathcal H_{t-1}$-measurable. Hence
\begin{align*}
\E[G_t\mid\mathcal H_{t-1}]
=
P_{v_{t-1}}^\perp
\E[W_t\mid\mathcal H_{t-1}]
=
P_{v_{t-1}}^\perp F(v_{t-1})
=
h(v_{t-1}),
\end{align*}
which proves \eqref{eq:app-conditional-h}.
\end{proof}

\begin{proof}[Proof of Proposition~\ref{prop:population_equilibrium}]
We show that the true boundary direction is an equilibrium of the
population tangent field:
$
h(v^\star)=0.
$
The key observation is that, at $v^\star$, the two greedy decision
regions are the two halfspaces separated by the hyperplane
orthogonal to $v^\star$. Under the isotropic Gaussian distribution,
these two halfspaces can be paired by reflection. Let
\begin{equation}
Z=(v^\star)^\top X.
\label{eq:Z-def}
\end{equation}
Using the orthogonal decomposition of $X$ along $v^\star$ and its
orthogonal complement,
\[
X
=
P_{v^\star}^\perp X+Zv^\star.
\]
Since $X\sim N(0,I_d)$, we have $Z\sim N(0,1)$. Moreover, $Z$ and
$P_{v^\star}^\perp X$ are jointly Gaussian and uncorrelated, since
\[
\operatorname{Cov}
\left(
P_{v^\star}^\perp X,Z
\right)
=
P_{v^\star}^\perp v^\star
=
0.
\]
Hence $Z$ is independent of $P_{v^\star}^\perp X$. At $v=v^\star$,
the definition of the population field in \eqref{eq:Fdef} gives
\[
F(v^\star)
=
\E\left[
g(\beta_+^\top X)X\1\{Z\geq0\}
\right]
-
\E\left[
g(\beta_-^\top X)X\1\{Z<0\}
\right].
\]
Since $\Prob(Z=0)=0$, we may freely interchange the events
$\{Z\geq0\}$ and $\{Z>0\}$ below. For a realization with $Z>0$,
define its reflection through the hyperplane orthogonal to
$v^\star$ by
$
\widetilde X
=
P_{v^\star}^\perp X-Zv^\star.
$
Since $Z\sim N(0,1)$ is independent of $P_{v^\star}^\perp X$ and
$Z\overset{d}=-Z$, we have
$
\widetilde X\overset{d}=X.
$
Moreover,
\[
(v^\star)^\top X=Z>0,
\qquad
(v^\star)^\top\widetilde X=-Z<0.
\]
Thus, the true greedy rule selects arm $+1$ at $X$ and arm $-1$ at
$\widetilde X$. Write
$
\beta_a=\theta+a(\Delta_\beta/2)v^\star
$
for $a\in\{-1,+1\}$, where
$
\theta^\top v^\star=0.
$
Using the orthogonal decomposition of $X$,
\begin{align*}
\beta_+^\top X
=
\left(\theta+\frac{\Delta_\beta}{2}v^\star\right)^\top
\left(
P_{v^\star}^\perp X+Zv^\star
\right)
=
\theta^\top P_{v^\star}^\perp X
+\frac{\Delta_\beta}{2}Z.
\end{align*}
At the reflected point,
\begin{align*}
\beta_-^\top\widetilde X
=
\left(\theta-\frac{\Delta_\beta}{2}v^\star\right)^\top
\left(
P_{v^\star}^\perp X-Zv^\star
\right)
=
\theta^\top P_{v^\star}^\perp X
+\frac{\Delta_\beta}{2}Z.
\end{align*}
Hence
$
\beta_+^\top X
=
\beta_-^\top\widetilde X,
$
and therefore, by the shared-link structure,
$
g(\beta_+^\top X)
=
g(\beta_-^\top\widetilde X).
$
Thus, each point in the positive halfspace and its reflection in the
negative halfspace have the same selected mean reward.
Because reflection preserves the Gaussian distribution, we may
rewrite the contribution from the negative halfspace over the
positive halfspace. In particular,
\begin{align*}
F(v^\star)
&=
\E\left[
\left\{
g(\beta_+^\top X)X
-
g(\beta_-^\top\widetilde X)\widetilde X
\right\}
\1\{Z>0\}
\right]\\
&=
\E\left[
g\left(
\theta^\top P_{v^\star}^\perp X
+\frac{\Delta_\beta}{2}Z
\right)
(X-\widetilde X)
\1\{Z>0\}
\right].
\end{align*}
By construction,
$
X-\widetilde X
=
2Zv^\star.
$
Therefore,
\begin{align*}
F(v^\star)
&=
2\E\left[
Z
g\left(
\theta^\top P_{v^\star}^\perp X
+\frac{\Delta_\beta}{2}Z
\right)
\1\{Z>0\}
\right]v^\star
=
\kappa^\star v^\star,
\end{align*}
where
\[
\kappa^\star
=
2\E\left[
Z
g\left(
\theta^\top P_{v^\star}^\perp X
+\frac{\Delta_\beta}{2}Z
\right)
\1\{Z>0\}
\right].
\]
Thus, the mean signed observation at the true boundary need not
vanish; rather, it is purely radial. Consequently, it has no
tangential component:
\[
h(v^\star)
=
P_{v^\star}^\perp F(v^\star)
=
\kappa^\star P_{v^\star}^\perp v^\star
=
0.
\]
Hence $v^\star$ is an equilibrium of the population dynamics.
\end{proof}
\subsubsection{Decision stability in two dimensions}
\label{app:local_stability_in_2D}

Before carrying out the general $d$-dimensional stability analysis, it
is useful to visualize the relevant geometry in two dimensions. Suppose
$d=2$ and, after rotating coordinates, let
$v^\star=(1,0)^\top$. A direction on the unit circle can be
parameterized by its signed angle from $v^\star$ as
$
v(\alpha)=(\cos\alpha,\sin\alpha)^\top.
$
The corresponding unit tangent vector is
\[
v_\perp(\alpha)
=
(-\sin\alpha,\cos\alpha)^\top
=
\frac{d}{d\alpha}v(\alpha).
\]
Thus, $v_\perp(\alpha)$ points in the direction of increasing
$\alpha$. Since $h(v(\alpha))$ lies in the one-dimensional tangent space at
$v(\alpha)$, there exists a scalar function $H$ such that
\begin{equation}
h(v(\alpha))
=
H(\alpha)v_\perp(\alpha).
\label{eq:H-alpha}
\end{equation}
Since $v_\perp(\alpha)$ is a unit tangent vector and
$h(v)=P_v^\perp F(v)$,
\[
H(\alpha)
=
v_\perp(\alpha)^\top h(v(\alpha))
=
v_\perp(\alpha)^\top F(v(\alpha)).
\]
Hence, the sign of $H(\alpha)$ determines the direction of the
population motion along the unit circle. Since $v^\star$ is a
population equilibrium, $H(0)=0$.
If $\alpha>0$, then $v(\alpha)$ lies counterclockwise from
$v^\star$, so restoring motion toward $\alpha=0$ requires movement in
the direction of decreasing $\alpha$, and hence $H(\alpha)<0$.
Conversely, if $\alpha<0$, restoring motion requires movement in the
direction of increasing $\alpha$, and hence $H(\alpha)>0$.
Combining the two cases, local attraction toward $v^\star$ is
characterized by
\begin{equation}
\alpha H(\alpha)<0
\qquad
\text{for all sufficiently small }\alpha\neq0.
\label{eq:2d-restoring-condition}
\end{equation}
Figure~\ref{fig:local-stability-2d} illustrates this restoring
geometry.

\begin{figure}[H]
\centering
\begin{tikzpicture}[scale=3.2,>=stealth]

\def\ap{30}
\def\am{-30}

\coordinate (O)     at (0,0);
\coordinate (Vstar) at (1,0);
\coordinate (Vp)    at ({cos(\ap)},{sin(\ap)});
\coordinate (Vm)    at ({cos(\am)},{sin(\am)});

\draw[thick]
({cos(55)},{sin(55)})
arc[start angle=55,end angle=-55,radius=1];

\draw[->,thick] (O) -- (Vstar)
    node[right=2pt] {$v^\star$};

\draw[->,thick] (O) -- (Vp)
    node[pos=0.72,above left=-1pt] {$v(\alpha_+)$};

\draw[->,thick] (O) -- (Vm)
    node[pos=0.72,below left=-1pt] {$v(\alpha_-)$};

\draw[->]
(0.35,0)
arc[start angle=0,end angle=\ap,radius=0.35];

\node at ({0.45*cos(15)},{0.45*sin(15)})
{$\alpha_+$};

\draw[->]
(0.42,0)
arc[start angle=0,end angle=\am,radius=0.42];

\node at ({0.53*cos(-15)},{0.53*sin(-15)})
{$\alpha_-$};

\draw[->,thin]
(Vp) -- ++({-0.32*sin(\ap)},{0.32*cos(\ap)})
node[above left=-1pt] {$v_\perp(\alpha_+)$};

\draw[->,very thick]
(Vp) -- ++({0.28*sin(\ap)},{-0.28*cos(\ap)})
node[below right=-1pt] {$h(v(\alpha_+))$};

\draw[<-,thin]
(Vm) -- ++({0.32*sin(\am)},{-0.32*cos(\am)})
node[above right=8pt] {$v_\perp(\alpha_-)$};

\coordinate (HmStart) at
($(Vm)+({0.04*sin(\am)},{-0.04*cos(\am)})$);

\draw[->,very thick]
(HmStart) -- ++({-0.28*sin(\am)},{0.28*cos(\am)})
node[below right=-0.5pt] {$h(v(\alpha_-))$};

\fill (O) circle (0.7pt);

\end{tikzpicture}
\caption{
Local stability geometry in two dimensions.
The tangent vector $v_\perp(\alpha)$ points in the direction of
increasing $\alpha$. For $\alpha_+>0$, a restoring population field
points opposite to $v_\perp(\alpha_+)$, toward decreasing $\alpha$.
For $\alpha_-<0$, it points in the same direction as
$v_\perp(\alpha_-)$, toward increasing $\alpha$. In both cases the
population field moves the candidate direction back toward
$v^\star$.
}
\label{fig:local-stability-2d}
\end{figure}

The two-dimensional picture suggests that the relevant object is the
first-order change in the population tangent field when $v$ is
perturbed away from $v^\star$. We next derive this local behavior in
arbitrary dimension.

\subsubsection{Proof of the local population Jacobian}
\label{app:local_jacobian}

\begin{proof}
The proof proceeds in three steps. First, we characterize admissible
first-order perturbations of $v^\star$ on the sphere and define the
directional derivative of the population tangent field. Second, we
decompose this derivative into the change in the tangent projection
and the change in the population vector induced by the moving greedy
decision boundary. Third, we simplify these two contributions using
Gaussian Stein identities and the equilibrium representation of
$F(v^\star)$, yielding the Jacobian in
Proposition~\ref{prop:local_jacobian}.

\paragraph{Step 1: First-order perturbations on the sphere.}

We have shown that $v^\star$ is an equilibrium of the population
tangent field,
$
h(v^\star)=0.
$
However, equilibrium alone does not imply stability. We therefore
study what happens to the population field when the current boundary
direction is moved a small amount away from $v^\star$ while remaining
on the sphere.

For $\epsilon>0$, consider a first-order perturbation of $v^\star$ of
the form
\[
v(\epsilon)
=
v^\star+\epsilon u+o(\epsilon),
\qquad
\epsilon\to0.
\]
Because the boundary direction is constrained to lie on
$\mathbb S^{d-1}$, we must have
$
\|v(\epsilon)\|_2^2=1.
$
Expanding the squared norm gives
\begin{align*}
\|v(\epsilon)\|_2^2
&=
\left\langle
v^\star+\epsilon u+o(\epsilon),
v^\star+\epsilon u+o(\epsilon)
\right\rangle\\
&=
\|v^\star\|_2^2
+
2\epsilon(v^\star)^\top u
+
o(\epsilon)\\
&=
1+2\epsilon(v^\star)^\top u+o(\epsilon).
\end{align*}
Therefore, for the perturbation to remain on the sphere to first
order, we must have
$
(v^\star)^\top u=0.
$
Thus, the admissible first-order perturbations are exactly the vectors
in the tangent space
\[
T_{v^\star}\mathbb S^{d-1}
=
\left\{
u\in\R^d:(v^\star)^\top u=0
\right\}.
\]

The question we want to answer is therefore the following: if we move
a small amount away from $v^\star$ in a tangent direction $u$, how
does the population tangent field $h(v)$ change? To make this precise,
for a fixed $u\in T_{v^\star}\mathbb S^{d-1}$, consider the normalized
path
\[
v_u(\epsilon)
=
\frac{v^\star+\epsilon u}
{\|v^\star+\epsilon u\|_2}.
\]
By construction, $v_u(\epsilon)\in\mathbb S^{d-1}$ and
$v_u(0)=v^\star$. Moreover, since $(v^\star)^\top u=0$,
\[
\left.
\frac{d}{d\epsilon}v_u(\epsilon)
\right|_{\epsilon=0}
=
u.
\]
Hence the path moves away from $v^\star$ initially in the direction
$u$. We now expand the population field along this path. A first-order
Taylor expansion gives
\begin{align*}
h(v_u(\epsilon))
&=
h(v_u(0))
+
\epsilon
\left.
\frac{d}{d\epsilon}
h(v_u(\epsilon))
\right|_{\epsilon=0}
+
o(\epsilon)\\
&=
h(v^\star)
+
\epsilon
\left.
\frac{d}{d\epsilon}
h(v_u(\epsilon))
\right|_{\epsilon=0}
+
o(\epsilon).
\end{align*}
This motivates defining the directional derivative of $h$ at
$v^\star$ in the tangent direction $u$ by
\[
Dh(v^\star)[u]
:=
\left.
\frac{d}{d\epsilon}
h(v_u(\epsilon))
\right|_{\epsilon=0}.
\]
Since $h(v^\star)=0$, the expansion reduces to
\begin{equation}
h(v_u(\epsilon))
=
\epsilon Dh(v^\star)[u]+o(\epsilon).
\label{eq:app-h-linearization}
\end{equation}
Equation~\eqref{eq:app-h-linearization} gives the local population
behavior around the equilibrium. For $\epsilon>0$, the perturbation
from $v^\star$ is, to first order, in the direction $u$. A population
field that moves the direction back toward $v^\star$ should therefore
have a component opposite to $u$. This leads us to study
$
\left\langle u,Dh(v^\star)[u]\right\rangle.
$
If this quantity is negative for every nonzero tangent direction
$u$, then the first-order population field points back toward
$v^\star$.

\paragraph{Step 2: Decomposition of the local derivative.}

Recall that
$
h(v)=P_v^\perp F(v),
P_v^\perp=I_d-vv^\top.
$
Therefore,
\[
Dh(v^\star)[u]
=
\left.
\frac{d}{d\epsilon}
\left\{
P_{v_u(\epsilon)}^\perp
F(v_u(\epsilon))
\right\}
\right|_{\epsilon=0}.
\]
Applying the product rule gives
\begin{equation}
Dh(v^\star)[u]
=
DP_{v^\star}^\perp[u]F(v^\star)
+
P_{v^\star}^\perp DF(v^\star)[u].
\label{eq:app-Dh-product}
\end{equation}
Here
\[
DP_{v^\star}^\perp[u]
=
\left.
\frac{d}{d\epsilon}
P_{v_u(\epsilon)}^\perp
\right|_{\epsilon=0}
\]
and
\[
DF(v^\star)[u]
=
\left.
\frac{d}{d\epsilon}
F(v_u(\epsilon))
\right|_{\epsilon=0}.
\]
The two terms in \eqref{eq:app-Dh-product} arise for different
reasons. The first describes how the tangent projection changes as
the candidate direction rotates while the population vector is held
fixed at $F(v^\star)$. The second describes how the population vector
$F(v)$ itself changes because rotating $v$ changes the two greedy
decision regions. We calculate these terms separately.

\paragraph{Change in the tangent projection.}
Since
$
P_v^\perp=I_d-vv^\top,
$
we have
\[
DP_{v^\star}^\perp[u]
=
-\left\{
u(v^\star)^\top+v^\star u^\top
\right\}.
\]
By Proposition~\ref{prop:population_equilibrium},
$
F(v^\star)=\kappa^\star v^\star.
$
Using $(v^\star)^\top v^\star=1$ and
$u^\top v^\star=0$, we therefore obtain
\begin{align*}
DP_{v^\star}^\perp[u]F(v^\star)
=
-\left\{
u(v^\star)^\top+v^\star u^\top
\right\}
\kappa^\star v^\star
=
-\kappa^\star u.
\end{align*}
Thus,
\begin{equation}
DP_{v^\star}^\perp[u]F(v^\star)
=
-\kappa^\star u.
\label{eq:app-projection-derivative}
\end{equation}

\paragraph{Change in the population vector.}

It remains to calculate $DF(v^\star)[u]$. Recall from
\eqref{eq:Fdef} that
\[
F(v)=F_+(v)-F_-(v),
\]
where
\[
F_+(v)
=
\E\left[
g(\beta_+^\top X)X
\1\{v^\top X\geq0\}
\right]
\]
and
\[
F_-(v)
=
\E\left[
g(\beta_-^\top X)X
\1\{v^\top X<0\}
\right].
\]
Thus, the dependence on $v$ enters only through the two halfspaces.
To determine how these halfspaces change when $v$ is perturbed away
from $v^\star$, fix a nonzero tangent direction
$u\in T_{v^\star}\mathbb S^{d-1}$ and consider the spherical path
\[
v_\epsilon
=
\cos(\epsilon\|u\|_2)v^\star
+
\sin(\epsilon\|u\|_2)
\frac{u}{\|u\|_2},
\qquad
\epsilon\geq0.
\]
Because $u\perp v^\star$, this path remains on the sphere:
$\|v_\epsilon\|_2=1$. Moreover,
\[
v_0=v^\star,
\qquad
\left.
\frac{d}{d\epsilon}v_\epsilon
\right|_{\epsilon=0}
=
u.
\]
Recall that
$
Z=(v^\star)^\top X.
$
Using the orthogonal decomposition of $X$ along $v^\star$,
\[
X
=
P_{v^\star}^\perp X
+
Zv^\star.
\]
Since $X\sim N(0,I_d)$, we have $Z\sim N(0,1)$, and $Z$ is
independent of $P_{v^\star}^\perp X$. Also, because
$u\perp v^\star$,
$
u^\top X
=
u^\top P_{v^\star}^\perp X.
$
We can therefore write
\begin{align*}
v_\epsilon^\top X
&=
\left\{
\cos(\epsilon\|u\|_2)v^\star
+
\sin(\epsilon\|u\|_2)
\frac{u}{\|u\|_2}
\right\}^\top X\\
&=
\cos(\epsilon\|u\|_2)Z
+
\frac{\sin(\epsilon\|u\|_2)}
{\|u\|_2}
u^\top P_{v^\star}^\perp X.
\end{align*}
For sufficiently small $\epsilon$,
$\cos(\epsilon\|u\|_2)>0$. Therefore,
\begin{equation}
v_\epsilon^\top X\geq0
\quad\Longleftrightarrow\quad
Z
\geq
-
\frac{
u^\top P_{v^\star}^\perp X
}{
\|u\|_2
}
\tan(\epsilon\|u\|_2).
\label{eq:app-moving-halfspace}
\end{equation}
We first differentiate $F_+(v_\epsilon)$. Conditional on
$P_{v^\star}^\perp X$, the quantity
$u^\top P_{v^\star}^\perp X$ is fixed, whereas $Z$ remains standard
normal. Hence
\begin{align*}
&\E\left[
g(\beta_+^\top X)X
\1\{v_\epsilon^\top X\geq0\}
\,\middle|\,
P_{v^\star}^\perp X
\right]\\
&\quad=
\int_{
-\frac{
u^\top P_{v^\star}^\perp X
}{
\|u\|_2
}
\tan(\epsilon\|u\|_2)
}^{\infty}
g\left\{
\beta_+^\top
\left(
P_{v^\star}^\perp X+zv^\star
\right)
\right\}
\left(
P_{v^\star}^\perp X+zv^\star
\right)
\phi_0(z)\,dz,
\end{align*}
where $\phi_0$ denotes the standard normal density.
Only the lower limit of this integral depends on $\epsilon$. Its
derivative at $\epsilon=0$ is
\begin{align*}
\left.
\frac{d}{d\epsilon}
\left\{
-
\frac{
u^\top P_{v^\star}^\perp X
}{
\|u\|_2
}
\tan(\epsilon\|u\|_2)
\right\}
\right|_{\epsilon=0}
=
-u^\top P_{v^\star}^\perp X.
\end{align*}
Therefore, by the fundamental theorem of calculus and the chain rule,
\begin{align*}
\left.
\frac{d}{d\epsilon}
\E\left[
g(\beta_+^\top X)X
\1\{v_\epsilon^\top X\geq0\}
\,\middle|\,
P_{v^\star}^\perp X
\right]
\right|_{\epsilon=0}
=
\phi_0(0)
\left(
u^\top P_{v^\star}^\perp X
\right)
g\left(
\beta_+^\top P_{v^\star}^\perp X
\right)
P_{v^\star}^\perp X.
\end{align*}
To pass from this conditional derivative to the derivative of
$F_+(v_\epsilon)$, we interchange differentiation and expectation.
For $\epsilon$ in a sufficiently small neighborhood of zero, the
derivative of the conditional integral is obtained by evaluating the
integrand at the moving boundary
\[
z=
-
\frac{
u^\top P_{v^\star}^\perp X
}{
\|u\|_2
}
\tan(\epsilon\|u\|_2)
\]
and multiplying by the derivative of this boundary. Since
$\tan(\epsilon\|u\|_2)$ and
$\sec^2(\epsilon\|u\|_2)$ are uniformly bounded for sufficiently
small $\epsilon$, Assumption~\ref{assum:polygrowth_g} implies that
the norm of this derivative is bounded by a polynomial in
$\|X\|_2$, uniformly over such $\epsilon$. Because $X$ is Gaussian,
all polynomial moments of $\|X\|_2$ are finite. Dominated convergence
therefore permits differentiation under the expectation.

Recall that
$
\beta_+=\theta+(\Delta_\beta/2)v^\star.
$
Since $P_{v^\star}^\perp X$ is orthogonal to $v^\star$,
\[
\beta_+^\top P_{v^\star}^\perp X
=
\theta^\top P_{v^\star}^\perp X.
\]
Taking expectation therefore gives
\begin{align}
DF_+(v^\star)[u]
&=
\phi_0(0)
\E\left[
g\left(
\theta^\top P_{v^\star}^\perp X
\right)
\left(
u^\top P_{v^\star}^\perp X
\right)
P_{v^\star}^\perp X
\right]
\nonumber\\
&=
\phi_0(0)
\E\left[
g\left(
\theta^\top P_{v^\star}^\perp X
\right)
(P_{v^\star}^\perp X)
(P_{v^\star}^\perp X)^\top
\right]u.
\label{eq:app-boundary-Fplus}
\end{align}

We next differentiate $F_-(v_\epsilon)$. From
\eqref{eq:app-moving-halfspace},
\[
v_\epsilon^\top X<0
\quad\Longleftrightarrow\quad
Z
<
-
\frac{
u^\top P_{v^\star}^\perp X
}{
\|u\|_2
}
\tan(\epsilon\|u\|_2).
\]
Conditional on $P_{v^\star}^\perp X$, we therefore have
\begin{align*}
&\E\left[
g(\beta_-^\top X)X
\1\{v_\epsilon^\top X<0\}
\,\middle|\,
P_{v^\star}^\perp X
\right]\\
&\quad=
\int_{-\infty}^{
-\frac{
u^\top P_{v^\star}^\perp X
}{
\|u\|_2
}
\tan(\epsilon\|u\|_2)
}
g\left\{
\beta_-^\top
\left(
P_{v^\star}^\perp X+zv^\star
\right)
\right\}
\left(
P_{v^\star}^\perp X+zv^\star
\right)
\phi_0(z)\,dz.
\end{align*}
Here the moving boundary is the upper limit of integration.
Consequently, the fundamental theorem of calculus gives the opposite
sign. Since
$
\beta_-=\theta-(\Delta_\beta/2)v^\star
$
and hence
$
\beta_-^\top P_{v^\star}^\perp X
=
\theta^\top P_{v^\star}^\perp X,
$
we obtain
\begin{align}
DF_-(v^\star)[u]
&=
-\phi_0(0)
\E\left[
g\left(
\theta^\top P_{v^\star}^\perp X
\right)
\left(
u^\top P_{v^\star}^\perp X
\right)
P_{v^\star}^\perp X
\right]
\nonumber\\
&=
-\phi_0(0)
\E\left[
g\left(
\theta^\top P_{v^\star}^\perp X
\right)
(P_{v^\star}^\perp X)
(P_{v^\star}^\perp X)^\top
\right]u.
\label{eq:app-boundary-Fminus}
\end{align}
Finally, because $F(v)=F_+(v)-F_-(v)$, the two boundary
contributions in \eqref{eq:app-boundary-Fplus} and
\eqref{eq:app-boundary-Fminus} add. Hence
\begin{equation}
DF(v^\star)[u]
=
2\phi_0(0)
\E\left[
g\left(
\theta^\top P_{v^\star}^\perp X
\right)
(P_{v^\star}^\perp X)
(P_{v^\star}^\perp X)^\top
\right]u.
\label{eq:app-DF}
\end{equation}

\paragraph{Step 3: Simplifying the local derivative.}

We next simplify the matrix expectation in
\eqref{eq:app-DF}. Since $\theta\perp v^\star$,
$
P_{v^\star}^\perp\theta=\theta
$
and therefore
$
\theta^\top P_{v^\star}^\perp X
=
\theta^\top X.
$
Moreover,
\begin{align*}
\E\left[
g\left(
\theta^\top P_{v^\star}^\perp X
\right)
(P_{v^\star}^\perp X)
(P_{v^\star}^\perp X)^\top
\right]
=
P_{v^\star}^\perp
\E\left[
g(\theta^\top X)XX^\top
\right]
P_{v^\star}^\perp.
\end{align*}
Applying the second-order Gaussian Stein identity in
Lemma~\ref{lem:gaussian_stein} with
$f(z)=g(z)$ and index direction $\theta$ gives
\[
\E\left[
g(\theta^\top X)XX^\top
\right]
=
\E[g(\theta^\top X)]I_d
+
\E[g''(\theta^\top X)]
\theta\theta^\top.
\]
Define
\begin{equation}
m_0
=
\E[g(\theta^\top X)].
\label{eq:app-m0}
\end{equation}
Recall that $m_2=\E[g''(\theta^\top X)]$. It follows that
\begin{align*}
\E\left[
g\left(
\theta^\top P_{v^\star}^\perp X
\right)
(P_{v^\star}^\perp X)
(P_{v^\star}^\perp X)^\top
\right]
=
m_0P_{v^\star}^\perp
+
m_2\theta\theta^\top.
\end{align*}
Substituting this identity into \eqref{eq:app-DF} gives
\[
DF(v^\star)[u]
=
2\phi_0(0)
\left\{
m_0P_{v^\star}^\perp
+
m_2\theta\theta^\top
\right\}u.
\]
Because $u$ is tangent at $v^\star$,
$P_{v^\star}^\perp u=u$. Hence
\[
P_{v^\star}^\perp DF(v^\star)[u]
=
2\phi_0(0)
\left\{
m_0u
+
m_2\theta(\theta^\top u)
\right\}.
\]
Combining this with
\eqref{eq:app-Dh-product} and
\eqref{eq:app-projection-derivative} gives
\begin{equation}
Dh(v^\star)[u]
=
\left\{
-\kappa^\star+2\phi_0(0)m_0
\right\}u
+
2\phi_0(0)m_2\theta(\theta^\top u).
\label{eq:app-Dh-pre-kappa}
\end{equation}

It remains to simplify the coefficient involving $\kappa^\star$.
From the proof of Proposition~\ref{prop:population_equilibrium},
\[
\kappa^\star
=
2\E\left[
Z
g\left(
\theta^\top P_{v^\star}^\perp X
+\frac{\Delta_\beta}{2}Z
\right)
\1\{Z>0\}
\right].
\]
Using the law of iterated expectations,
\begin{align*}
\kappa^\star
&=
2\E\Bigg[
\E\left[
Z
g\left(
\theta^\top P_{v^\star}^\perp X
+\frac{\Delta_\beta}{2}Z
\right)
\1\{Z>0\}
\,\middle|\,
P_{v^\star}^\perp X
\right]
\Bigg].
\end{align*}
Since $Z\sim N(0,1)$ is independent of
$P_{v^\star}^\perp X$, conditional on
$P_{v^\star}^\perp X$ the scalar
$\theta^\top P_{v^\star}^\perp X$ is fixed and $Z$ retains its
standard normal distribution. Therefore,
\begin{align*}
&\E\left[
Z
g\left(
\theta^\top P_{v^\star}^\perp X
+\frac{\Delta_\beta}{2}Z
\right)
\1\{Z>0\}
\,\middle|\,
P_{v^\star}^\perp X
\right]=
\int_0^\infty
z
g\left(
\theta^\top P_{v^\star}^\perp X
+\frac{\Delta_\beta}{2}z
\right)
\phi_0(z)\,dz.
\end{align*}
Since $\phi_0'(z)=-z\phi_0(z)$,
\begin{align*}
\int_0^\infty
z
g\left(
\theta^\top P_{v^\star}^\perp X
+\frac{\Delta_\beta}{2}z
\right)
\phi_0(z)\,dz
=
-\int_0^\infty
g\left(
\theta^\top P_{v^\star}^\perp X
+\frac{\Delta_\beta}{2}z
\right)
\phi_0'(z)\,dz.
\end{align*}
Integration by parts gives
\begin{align*}
-\int_0^\infty
g\left(
\theta^\top P_{v^\star}^\perp X
+\frac{\Delta_\beta}{2}z
\right)
\phi_0'(z)\,dz
&=
-\left[
g\left(
\theta^\top P_{v^\star}^\perp X
+\frac{\Delta_\beta}{2}z
\right)
\phi_0(z)
\right]_{z=0}^{z=\infty}\\
&\qquad\quad+
\frac{\Delta_\beta}{2}
\int_0^\infty
g'\left(
\theta^\top P_{v^\star}^\perp X
+\frac{\Delta_\beta}{2}z
\right)
\phi_0(z)\,dz.
\end{align*}
Under Assumption~\ref{assum:polygrowth_g}, $g$ grows at most
polynomially, whereas the Gaussian density decays faster than any
polynomial. Therefore,
\[
\lim_{z\to\infty}
g\left(
\theta^\top P_{v^\star}^\perp X
+\frac{\Delta_\beta}{2}z
\right)\phi_0(z)
=
0.
\]
At the lower endpoint,
$
g\left(
\theta^\top P_{v^\star}^\perp X
\right)\phi_0(0)
$
remains. Hence
\begin{align*}
&\int_0^\infty
z
g\left(
\theta^\top P_{v^\star}^\perp X
+\frac{\Delta_\beta}{2}z
\right)
\phi_0(z)\,dz\\
&\qquad=
\phi_0(0)
g\left(
\theta^\top P_{v^\star}^\perp X
\right)
+
\frac{\Delta_\beta}{2}
\int_0^\infty
g'\left(
\theta^\top P_{v^\star}^\perp X
+\frac{\Delta_\beta}{2}z
\right)
\phi_0(z)\,dz.
\end{align*}
Taking expectation over $P_{v^\star}^\perp X$ gives
\[
\kappa^\star
=
2\phi_0(0)m_0
+
\Delta_\beta
\E\left[
g'\left(
\theta^\top P_{v^\star}^\perp X
+\frac{\Delta_\beta}{2}Z
\right)
\1\{Z>0\}
\right].
\]
Since $\theta^\top P_{v^\star}^\perp X=\theta^\top X=U$, the
definition of $\mu^\star$ gives
\begin{equation}
\kappa^\star
=
2\phi_0(0)m_0
+
\Delta_\beta\mu^\star.
\label{eq:app-kappa-identity}
\end{equation}
Substituting \eqref{eq:app-kappa-identity} into
\eqref{eq:app-Dh-pre-kappa}, the terms involving $m_0$ cancel:
\begin{align*}
Dh(v^\star)[u]
&=
\left\{
-\left(
2\phi_0(0)m_0+\Delta_\beta\mu^\star
\right)
+
2\phi_0(0)m_0
\right\}u
+
2\phi_0(0)m_2\theta(\theta^\top u)\\
&=
-\Delta_\beta\mu^\star u
+
2\phi_0(0)m_2\theta(\theta^\top u).
\end{align*}
Thus,
\[
Dh(v^\star)[u]
=
-\Delta_\beta\mu^\star u
+
2\phi_0(0)m_2\theta(\theta^\top u),
\]
which proves Proposition~\ref{prop:local_jacobian}.
\end{proof}

\subsection{Proof of the local stability corollary}
\label{app:local_stability}

\begin{proof}[Proof of Corollary~\ref{cor:local_stability}]
We first show that the derivative of the population tangent field at
$v^\star$ is strictly contracting along every tangent direction.
Recall from Proposition~\ref{prop:local_jacobian} that, for every
$u\in T_{v^\star}\mathbb S^{d-1}$,
$
Dh(v^\star)[u]
=
-\Delta_\beta\mu^\star u
+
2\phi_0(0)m_2\theta(\theta^\top u).
$
Taking the inner product with $u$ gives
\begin{align*}
\left\langle
u,Dh(v^\star)[u]
\right\rangle
&=
-\Delta_\beta\mu^\star\|u\|_2^2
+
2\phi_0(0)m_2(\theta^\top u)^2\\
&\leq
-\Delta_\beta\mu^\star\|u\|_2^2
+
2\phi_0(0)(m_2)_+(\theta^\top u)^2.
\end{align*}
By Cauchy--Schwarz and
$\|\theta\|_2^2=1-\Delta_\beta^2/4$,
\[
(\theta^\top u)^2
\leq
\|\theta\|_2^2\|u\|_2^2
=
\left(1-\frac{\Delta_\beta^2}{4}\right)\|u\|_2^2.
\]
Consequently,
\begin{align*}
\left\langle
u,Dh(v^\star)[u]
\right\rangle
&\leq
-\left\{
\Delta_\beta\mu^\star
-
2\phi_0(0)(m_2)_+
\left(1-\frac{\Delta_\beta^2}{4}\right)
\right\}
\|u\|_2^2
=
-\lambda_\star\|u\|_2^2,
\end{align*}
where $\lambda_\star$ is the decision stability coefficient defined
in \eqref{eq:decision_stability}. Thus, whenever
$\lambda_\star>0$, the linearized population field is strictly
contracting in every nonzero tangent direction at $v^\star$.

We next show that this infinitesimal contraction extends to a
neighborhood of $v^\star$ on the sphere. Let
$v\in\mathbb S^{d-1}$ and write
$
\Delta=v-v^\star.
$
Decompose $\Delta$ into its tangent and normal components at
$v^\star$:
\[
\Delta
=
u+\{(v^\star)^\top\Delta\}v^\star,
\qquad
u=P_{v^\star}^\perp\Delta
\in T_{v^\star}\mathbb S^{d-1}.
\]
Since both $v$ and $v^\star$ are unit vectors,
$
\|\Delta\|_2^2
=
\|v-v^\star\|_2^2
=
2-2v^\top v^\star.
$
Hence
\[
(v^\star)^\top\Delta
=
(v^\star)^\top(v-v^\star)
=
v^\top v^\star-1
=
-\frac12\|\Delta\|_2^2.
\]
Therefore, the decomposition above can be written exactly as
\begin{equation}
\Delta
=
u-\frac12\|\Delta\|_2^2v^\star.
\label{eq:app-delta-decomposition}
\end{equation}
Because $u\perp v^\star$, it follows from
\eqref{eq:app-delta-decomposition} that
\begin{align}
\|\Delta\|_2^2
&=
\|u\|_2^2
+
\frac14\|\Delta\|_2^4,
\nonumber\\
\|u\|_2^2
&=
\|\Delta\|_2^2
-
\frac14\|\Delta\|_2^4.
\label{eq:app-tangent-delta-relation}
\end{align}
In particular,
\[
\|u\|_2^2
=
\|\Delta\|_2^2
+
o(\|\Delta\|_2^2)
\qquad
\text{as }v\to v^\star.
\]

We now justify the first-order expansion of the population tangent
field used below. For
$w\in T_{v^\star}\mathbb S^{d-1}$ with $\|w\|_2<1$, define the local
parametrization
\[
q(w)
=
\sqrt{1-\|w\|_2^2}\,v^\star+w.
\]
Then $q(w)\in\mathbb S^{d-1}$, $q(0)=v^\star$, and
$Dq(0)[w]=w$. Moreover, writing
$Z=(v^\star)^\top X$ and using
$w^\top X=w^\top P_{v^\star}^\perp X$, we have
\[
q(w)^\top X
=
\sqrt{1-\|w\|_2^2}\,Z
+
w^\top P_{v^\star}^\perp X.
\]
Thus,
\[
q(w)^\top X\geq0
\quad\Longleftrightarrow\quad
Z
\geq
-
\frac{
w^\top P_{v^\star}^\perp X
}{
\sqrt{1-\|w\|_2^2}
}.
\]
Conditional on $P_{v^\star}^\perp X$, the right-hand side is a
smoothly varying boundary for the one-dimensional Gaussian integral
in $Z$. Differentiating this representation with respect to $w$,
the resulting derivatives are obtained by evaluating the integrand
at this moving boundary and multiplying by derivatives of the
boundary. For $w$ in a sufficiently small neighborhood of zero,
Assumption~\ref{assum:polygrowth_g} implies that these derivatives
are bounded uniformly by an integrable polynomial function of
$\|X\|_2$. Since $X$ is Gaussian, dominated convergence permits
differentiation under the expectation and shows that $F\circ q$ is
continuously differentiable near zero. Since
$P_{q(w)}^\perp$ is smooth in $w$, it follows that
$h\circ q$ is also continuously differentiable near zero.

For $v$ sufficiently close to $v^\star$, we have
$v^\top v^\star>0$. Since
$
u=P_{v^\star}^\perp v,
$
the unit-norm constraint gives
\[
v
=
\sqrt{1-\|u\|_2^2}\,v^\star+u
=
q(u).
\]
Hence, using $h(v^\star)=0$ and the differentiability of $h\circ q$
at zero,
\begin{equation}
h(v)
=
Dh(v^\star)[u]+r(v),
\qquad
\|r(v)\|_2=o(\|u\|_2).
\label{eq:app-h-local-expansion-u}
\end{equation}
By \eqref{eq:app-tangent-delta-relation},
$\|u\|_2/\|\Delta\|_2\to1$ as $v\to v^\star$. Therefore,
\begin{equation}
h(v)
=
Dh(v^\star)[u]+r(v),
\qquad
\|r(v)\|_2=o(\|\Delta\|_2).
\label{eq:app-h-local-expansion}
\end{equation}

Using \eqref{eq:app-delta-decomposition} and
\eqref{eq:app-h-local-expansion}, we obtain
\begin{align*}
\langle \Delta,h(v)\rangle
&=
\left\langle
u-\frac12\|\Delta\|_2^2v^\star,
Dh(v^\star)[u]+r(v)
\right\rangle\\
&=
\left\langle
u,Dh(v^\star)[u]
\right\rangle
+
\langle u,r(v)\rangle
-
\frac12\|\Delta\|_2^2
\left\langle
v^\star,Dh(v^\star)[u]
\right\rangle
-
\frac12\|\Delta\|_2^2
\langle v^\star,r(v)\rangle.
\end{align*}
We now bound the three remainder terms. First,
\[
|\langle u,r(v)\rangle|
\leq
\|u\|_2\|r(v)\|_2
=
o(\|\Delta\|_2^2).
\]
Next, because $Dh(v^\star)$ is a linear map on the
finite-dimensional tangent space, its operator norm is finite.
Hence, for some constant $C<\infty$,
$
\|Dh(v^\star)[u]\|_2
\leq
C\|u\|_2
\leq
C\|\Delta\|_2,
$
and therefore
\[
\left|
\frac12\|\Delta\|_2^2
\left\langle
v^\star,Dh(v^\star)[u]
\right\rangle
\right|
\leq
\frac{C}{2}\|\Delta\|_2^3
=
o(\|\Delta\|_2^2).
\]
Finally,
\[
\left|
\frac12\|\Delta\|_2^2
\langle v^\star,r(v)\rangle
\right|
\leq
\frac12\|\Delta\|_2^2\|r(v)\|_2
=
o(\|\Delta\|_2^2).
\]
Combining these bounds yields
\begin{equation}
\langle \Delta,h(v)\rangle
=
\left\langle
u,Dh(v^\star)[u]
\right\rangle
+
o(\|\Delta\|_2^2).
\label{eq:app-local-inner-product}
\end{equation}
Using the infinitesimal contraction established above together with
\eqref{eq:app-tangent-delta-relation},
\begin{align*}
\langle \Delta,h(v)\rangle
&\leq
-\lambda_\star\|u\|_2^2
+
o(\|\Delta\|_2^2)
=
-\lambda_\star\|\Delta\|_2^2
+
o(\|\Delta\|_2^2).
\end{align*}
Equivalently,
\[
\left\langle
v-v^\star,h(v)
\right\rangle
\leq
-\lambda_\star\|v-v^\star\|_2^2
+
o(\|v-v^\star\|_2^2)
\]
as $v\to v^\star$ on $\mathbb S^{d-1}$.

Because $\lambda_\star>0$, there exists $r_0>0$ such that whenever
$v\in\mathbb S^{d-1}$ satisfies
$\|v-v^\star\|_2\leq r_0$, the remainder term satisfies
\[
\left|
o(\|v-v^\star\|_2^2)
\right|
\leq
\frac{\lambda_\star}{2}
\|v-v^\star\|_2^2.
\]
Therefore,
\[
\left\langle
v-v^\star,h(v)
\right\rangle
\leq
-\frac{\lambda_\star}{2}
\|v-v^\star\|_2^2.
\]
Thus, the claimed neighborhood contraction holds with, for example,
$
\lambda_0=\lambda_\star/2>0.
$
\end{proof}

\section{Additional Details on Elicitation Geometry}
\label{app:elicitation}

This appendix provides additional details for the elicitation
interpretation developed in Section~\ref{sec:elicitation}. We first
formalize the Bregman representation of the conditional mean, then
describe the associated primal--dual geometry. We next show that a
strictly increasing response link itself induces a convex potential,
and finally relate this geometry explicitly to the decision stability
coefficient and the self-concordance condition.

\subsection{Bregman elicitation of the conditional mean}
\label{app:bregman_mean}
For completeness, we briefly recall the relevant definitions; a more
detailed treatment of consistency and elicitability is given in
\cite{gneiting2011making}.
Recall that $\mathcal F$ denotes a class of distributions and
$T:\mathcal F\to\R$ the statistical functional of interest. A loss
$\ell(r,y)$ is strictly consistent for $T$ if, for every
$F\in\mathcal F$,
\[
T(F)
=
\arg\min_r \E_F[\ell(r,Y)],
\]
whenever the expectation is well defined. A functional admitting a
strictly consistent loss is called elicitable.
We specialize to the mean functional. Let $\mathcal I\subseteq\R$ be
an interval containing the relevant reward and prediction values, and
let $\varphi:\mathcal I\to\R$ be differentiable and strictly convex.
The Bregman loss \cite{savage1971elicitation} generated by $\varphi$ is
\begin{equation}
\ell_\varphi(r,y)
=
\varphi(y)-\varphi(r)-\varphi'(r)(y-r).
\label{eq:app_bregman_loss}
\end{equation}
The following standard calculation shows that every such loss elicits
the mean.

\begin{lemma}[Bregman consistency for the mean]
\label{lem:bregman_mean}
Let $F\in\mathcal F$ have finite mean
$\mu_F=\E_F[Y]\in\mathcal I$, and suppose that the expectations below
are finite. Then
\[
\E_F[\ell_\varphi(r,Y)]
-
\E_F[\ell_\varphi(\mu_F,Y)]
=
\ell_\varphi(r,\mu_F).
\]
Consequently, if $\varphi$ is strictly convex,
\[
\mu_F
=
\arg\min_{r\in\mathcal I}
\E_F[\ell_\varphi(r,Y)].
\]
\end{lemma}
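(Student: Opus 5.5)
The plan is to expand the expected Bregman loss directly and observe that every term depending on $Y$ nonlinearly is free of $r$. By definition \eqref{eq:app_bregman_loss} and finiteness of the expectations,
\[
\E_F[\ell_\varphi(r,Y)]
=
\E_F[\varphi(Y)]-\varphi(r)-\varphi'(r)\{\E_F[Y]-r\}
=
\E_F[\varphi(Y)]-\varphi(r)-\varphi'(r)(\mu_F-r),
\]
where linearity of expectation is the only tool needed. Note that $\ell_\varphi(r,y)$ is affine in $y$ apart from the term $\varphi(y)$, which does not depend on $r$; this is the structural fact driving the whole argument.

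Evaluating the same expression at $r=\mu_F$ kills the last term and gives $\E_F[\ell_\varphi(\mu_F,Y)]=\E_F[\varphi(Y)]-\varphi(\mu_F)$. Subtracting, the $\E_F[\varphi(Y)]$ terms cancel (this is where finiteness of the expectations is used, so that no $\infty-\infty$ arises), leaving
\[
\varphi(\mu_F)-\varphi(r)-\varphi'(r)(\mu_F-r)=\ell_\varphi(r,\mu_F),
\]
which is the claimed identity.

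For the consequence, I will use that for a differentiable strictly convex $\varphi$ on the interval $\mathcal I$, the tangent line inequality holds strictly: $\varphi(y)>\varphi(r)+\varphi'(r)(y-r)$ for all $y\neq r$ in $\mathcal I$. Hence $\ell_\varphi(r,\mu_F)\geq0$ with equality iff $r=\mu_F$, so $\E_F[\ell_\varphi(r,Y)]$ is uniquely minimized over $r\in\mathcal I$ at $\mu_F$, which lies in $\mathcal I$ by hypothesis. The only point requiring any care is justifying the strict tangent inequality (e.g. via the mean value theorem and strict monotonicity of $\varphi'$, which follows from strict convexity plus differentiability); I do not expect a genuine obstacle.
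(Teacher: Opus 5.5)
Your proposal is correct and matches the paper's proof: expand $\E_F[\ell_\varphi(r,Y)]$ by linearity, evaluate at $r=\mu_F$, subtract to obtain $\ell_\varphi(r,\mu_F)$, and conclude by strict convexity. Your mean-value-theorem justification of the strict tangent inequality, using strict monotonicity of $\varphi'$, fills in the step the paper states in one line.
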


\begin{proof}
By \eqref{eq:app_bregman_loss},
\[
\E_F[\ell_\varphi(r,Y)]
=
\E_F[\varphi(Y)]
-
\varphi(r)
-
\varphi'(r)(\mu_F-r),
\]
whereas
$
\E_F[\ell_\varphi(\mu_F,Y)]
=
\E_F[\varphi(Y)]-\varphi(\mu_F)
$.
Subtracting gives
\[
\E_F[\ell_\varphi(r,Y)]
-
\E_F[\ell_\varphi(\mu_F,Y)]
=
\varphi(\mu_F)-\varphi(r)-\varphi'(r)(\mu_F-r)
=
\ell_\varphi(r,\mu_F).
\]
Strict convexity implies that the final expression is nonnegative and
vanishes only when $r=\mu_F$.
\end{proof}

Applying Lemma~\ref{lem:bregman_mean} to the conditional distribution
$F_{a,x}$ gives
$
T(F_{a,x})
=
m_a(x)
=
\arg\min_r
\E_{F_{a,x}}[\ell_\varphi(r,Y)],
$
which is the representation used in Section~\ref{sec:elicitation}.
If $\varphi$ is twice continuously differentiable with
$\varphi''(r)>0$, define
$
w(r)=\varphi''(r).
$
Differentiating \eqref{eq:app_bregman_loss} with respect to the
prediction gives
\begin{equation}
\frac{\partial}{\partial r}\ell_\varphi(r,y)
=
w(r)(r-y).
\label{eq:app_bregman_derivative}
\end{equation}
Thus, $w(r)$ locally weights the usual mean identification function
$r-y$. Equivalently, a second-order expansion around $r$ gives
\[
\ell_\varphi(r,y)
=
\frac12 w(r)(y-r)^2
+
o\{(y-r)^2\},
\qquad y\to r,
\]
so $w=\varphi''$ describes the local geometry of the loss on the
mean-prediction scale.
Lemma~\ref{lem:bregman_mean} also makes clear that the mean does not
determine a unique loss: different strictly convex potentials can
elicit the same mean functional while inducing different local
geometries.
\subsection{Primal--dual geometry and response maps}
\label{app:primal_dual}

We next make precise the primal--dual representation used in
Section~\ref{sec:elicitation}. Let
$\mathcal I\subseteq\R$ be an open interval and suppose that
$\varphi:\mathcal I\to\R$ is differentiable and strictly convex.
Its convex conjugate is given by~\cite[Section 7.3.3]{deisenroth20mathml},
\begin{equation}
\varphi^*(\vartheta)
=
\sup_{r\in\mathcal I}
\{r\vartheta-\varphi(r)\}.
\label{eq:app_convex_conjugate}
\end{equation}
For a fixed $\vartheta$, an interior maximizer of
\eqref{eq:app_convex_conjugate} satisfies
$
\vartheta-\varphi'(r)=0.
$
Thus, the primal and dual coordinates are related by
$
\vartheta=\varphi'(r).
$
Since $\varphi$ is strictly convex, $\varphi'$ is strictly increasing
and therefore one-to-one. Let
$
\mathcal J=\varphi'(\mathcal I)
$
denote its range. For $\vartheta\in\mathcal J$,
$
r
=
(\varphi')^{-1}(\vartheta).
$
Under standard Legendre-type regularity conditions, the conjugate is
differentiable on $\mathcal J$ and
$
(\varphi^*)'(\vartheta)
=
(\varphi')^{-1}(\vartheta).
$
Thus, the response map induced by $\varphi$ may equivalently be
written as
$
g_\varphi(\vartheta)
=
(\varphi')^{-1}(\vartheta)
=
(\varphi^*)'(\vartheta).
$
Geometrically, $\vartheta=\varphi'(r)$ is the slope of the tangent to
the graph of $\varphi$ at the primal coordinate $r$. Strict convexity
ensures that different primal coordinates have different tangent
slopes. The inverse map $(\varphi')^{-1}$ therefore recovers the
primal coordinate from its tangent slope. For the shared-link model,
suppose that the conditional mean functional satisfies
\begin{equation}
\varphi'\{T(F_{a,x})\}
=
\varphi'\{m_a(x)\}
=
\beta_a^\top x.
\label{eq:app_dual_single_index}
\end{equation}
Whenever $\beta_a^\top x\in\mathcal J$, inversion gives
\begin{equation}
m_a(x)
=
(\varphi')^{-1}(\beta_a^\top x)
=
g_\varphi(\beta_a^\top x).
\label{eq:app_primal_single_index}
\end{equation}
A common potential $\varphi$ across actions therefore induces a
common response geometry through $g_\varphi$, while the linear
representations $\beta_a^\top x$ remain action-specific.
There is a domain issue in this representation that is particularly
relevant under our Gaussian context model. Since
$
X_t\sim N(0,I_d)
$
and
$
\|\beta_a\|_2=1,
$
we have
$
\beta_a^\top X_t\sim N(0,1),
$
so the dual coordinate has support $\R$. Consequently, for
\eqref{eq:app_dual_single_index} to hold on the full support of the
index, we require
\begin{equation}
\varphi'(\mathcal I)=\R.
\label{eq:app_full_dual_domain}
\end{equation}
Under \eqref{eq:app_full_dual_domain},
$
g_\varphi=(\varphi')^{-1}
$
is defined on all of $\R$. Several familiar convex potentials illustrate both the primal--dual
construction and this domain requirement.
\begin{table}[ht]
\centering
\caption{Examples of convex potentials, Bregman weights, dual
coordinates, and induced response maps.}
\label{tab:app_bregman_examples}
\begin{tabular}{llll}
\hline
$\varphi(r)$
&
$w(r)=\varphi''(r)$
&
$\vartheta=\varphi'(r)$
&
$g_\varphi(\vartheta)$
\\
\hline
$\frac12r^2$
&
$1$
&
$r$
&
$\vartheta$
\\[2mm]

$r\log r-r$
&
$\frac1r$
&
$\log r$
&
$\exp(\vartheta)$
\\[2mm]

$-\log r$
&
$\frac1{r^2}$
&
$-\frac1r$
&
$-\frac1\vartheta$
\\[2mm]

$r\log r+(1-r)\log(1-r)$
&
$\frac{1}{r(1-r)}$
&
$\log\!\left(\frac{r}{1-r}\right)$
&
$\displaystyle
\frac{\exp(\vartheta)}
{1+\exp(\vartheta)}
$
\\
\hline
\end{tabular}
\end{table}

The quadratic potential and binary negative entropy have dual domain
$\R$, producing the identity and logistic response maps,
respectively. Negative entropy produces the exponential response map,
while Burg entropy produces an inverse-type response map on a
restricted dual domain. Thus, the primal--dual construction itself is
more general than the Gaussian-index model considered in this paper:
compatibility additionally requires the dual domain to contain the
support of the linear index.
The construction above begins with a convex potential $\varphi$ and
derives the corresponding response map
$g_\varphi=(\varphi')^{-1}$. Our model is specified in the opposite
direction, beginning with an unknown strictly increasing link $g$.
We next show that such a link itself induces a convex potential and
therefore admits the same primal-dual interpretation.

\subsection{From a monotone response link to a convex potential}
\label{app:link_to_potential}

The preceding discussion begins with a convex potential $\varphi$ and
constructs the response map
$
g_\varphi=(\varphi')^{-1}.
$
For the model in this paper, however, it is also useful to consider the
reverse direction. We begin with an unknown strictly increasing link
$g$ and ask whether it induces a convex potential having $g$ as its
primal-recovery map.
Let
$
\mathcal M=g(\R)
$
denote the range of $g$. Since $g$ is strictly increasing, it is
one-to-one and therefore admits an inverse
$
g^{-1}:\mathcal M\to\R.
$
Fix an arbitrary reference point $m_0\in\mathcal M$ and define
\begin{equation}
\varphi_g(m)
=
\int_{m_0}^m g^{-1}(s)\,ds,
\qquad
m\in\mathcal M.
\label{eq:app_potential_from_link}
\end{equation}
Then
$
\varphi_g'(m)
=
g^{-1}(m).
$
Because $g^{-1}$ is strictly increasing, $\varphi_g$ is strictly
convex. Moreover,
$
\varphi_g'\{g(\vartheta)\}
=
g^{-1}\{g(\vartheta)\}
=
\vartheta,
$
and hence
$
g
=
(\varphi_g')^{-1}.
$
This gives the following representation result.

\begin{proposition}[Convex potential induced by a monotone link]
\label{prop:link_induced_potential}
Let $g:\R\to\mathcal M$ be continuous and strictly increasing, where
$\mathcal M=g(\R)$. Define $\varphi_g$ by
\eqref{eq:app_potential_from_link}. Then $\varphi_g$ is differentiable
and strictly convex on $\mathcal M$, with
$
\varphi_g'=g^{-1}.
$
Consequently,
$
g=(\varphi_g')^{-1}.
$
Therefore, the shared-link model
$
m_a(x)=g(\beta_a^\top x)
$
can equivalently be written in the dual-coordinate form
$
\varphi_g'\{m_a(x)\}
=
\beta_a^\top x.
$
\end{proposition}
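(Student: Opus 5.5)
The plan is to work directly from the definition \eqref{eq:app_potential_from_link}, using only continuity and strict monotonicity of $g$. First, $\mathcal M=g(\R)$ is an open interval, by the intermediate value theorem together with strict monotonicity: the image of $\R$ under a continuous strictly increasing map is an interval with no largest or smallest element. The inverse $g^{-1}:\mathcal M\to\R$ is strictly increasing because $g$ is. It is also continuous, since a monotone function whose image is an interval has no jump discontinuities, and $g^{-1}(\mathcal M)=\R$ is an interval. Hence the integral in \eqref{eq:app_potential_from_link} is a well-defined Riemann integral of a continuous function between any two points of $\mathcal M$; orient the integral in the usual way when $m<m_0$.

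Next, we apply the fundamental theorem of calculus. Because the integrand $g^{-1}$ is continuous on the open interval $\mathcal M$, $\varphi_g$ is differentiable at every $m\in\mathcal M$ with $\varphi_g'(m)=g^{-1}(m)$. Strict convexity then follows from the standard characterization that a differentiable function on an interval is strictly convex if and only if its derivative is strictly increasing. If one prefers a self-contained argument, take $m_1<m_2$ in $\mathcal M$ and $\lambda\in(0,1)$, and set $m_\lambda=\lambda m_1+(1-\lambda)m_2$. Compare the increments $\varphi_g(m_\lambda)-\varphi_g(m_1)=\int_{m_1}^{m_\lambda}g^{-1}<(m_\lambda-m_1)g^{-1}(m_\lambda)$ and $\varphi_g(m_2)-\varphi_g(m_\lambda)>(m_2-m_\lambda)g^{-1}(m_\lambda)$. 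Both inequalities are strict because $g^{-1}$ is strictly increasing and continuous. Combining them with weights $1-\lambda$ and $\lambda$ gives the strict convexity inequality.

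Finally, we derive the representation. Since $\varphi_g'=g^{-1}$ is a bijection from $\mathcal M$ onto $\R$, it is invertible with inverse $g$, so $(\varphi_g')^{-1}=g$. Applying $\varphi_g'=g^{-1}$ to both sides of $m_a(x)=g(\beta_a^\top x)$, which is legitimate because $m_a(x)\in\mathcal M$, yields $\varphi_g'\{m_a(x)\}=\beta_a^\top x$. Conversely, applying $g$ recovers the primal form, so the two descriptions are equivalent. As a by-product, $\varphi_g'(\mathcal M)=\R$, which is exactly the full dual domain condition \eqref{eq:app_full_dual_domain} needed for Gaussian indices.

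The only step requiring care is continuity of $g^{-1}$ and openness of $\mathcal M$, which are needed to invoke the fundamental theorem of calculus at every point rather than only almost everywhere. I would handle this with the monotone-image argument above; everything else is routine.
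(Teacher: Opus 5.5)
Your proposal is correct and follows the paper's route: continuity and strict monotonicity of $g^{-1}$, then the fundamental theorem of calculus giving $\varphi_g'=g^{-1}$, then strict convexity from a strictly increasing derivative, then inversion. It improves on the paper by actually justifying that $\mathcal M$ is open and that $g^{-1}$ is continuous, which the paper only asserts. One cosmetic slip in your optional self-contained convexity check: since $m_\lambda-m_1=(1-\lambda)(m_2-m_1)$ and $m_2-m_\lambda=\lambda(m_2-m_1)$, the first increment inequality should be weighted by $\lambda$ and the second by $1-\lambda$, not the other way around.
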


\begin{proof}
Continuity and strict monotonicity of $g$ imply that its inverse
$g^{-1}$ is continuous and strictly increasing on $\mathcal M$.
The fundamental theorem of calculus applied to
\eqref{eq:app_potential_from_link} therefore gives
\[
\varphi_g'(m)=g^{-1}(m).
\]
Since the derivative of $\varphi_g$ is strictly increasing,
$\varphi_g$ is strictly convex. Finally,
\[
\varphi_g'\{g(\vartheta)\}
=
g^{-1}\{g(\vartheta)\}
=
\vartheta,
\]
which establishes
$g=(\varphi_g')^{-1}$ and the stated dual-coordinate representation.
\end{proof}

The choice of reference point $m_0$ affects $\varphi_g$ only through
an additive constant and therefore does not affect either its
derivative or the induced response map. More generally, Bregman
divergences are invariant to affine modifications of their generating
potential. For the particular dual-coordinate representation above,
however, the normalization
$\varphi_g'=g^{-1}$ fixes the linear component. Thus, the geometry
relevant to the associated Bregman loss is determined by the
curvature of the potential rather than by its additive normalization.

\subsection{Elicitation geometry and decision stability}
\label{app:elicitation_stability}

The preceding construction also makes precise the relationship between
the geometry of the convex potential and the shape of the induced
response link. Recall that $g=(\varphi')^{-1}$, so that
$\varphi'\{g(z)\}=z$. Differentiating gives
$\varphi''\{g(z)\}g'(z)=1$, and hence
$
g'(z)=1/\varphi''\{g(z)\}.
$
If $\varphi$ is three times differentiable, differentiating once more
gives
$
g''(z)
=
-\varphi'''\{g(z)\}/\{\varphi''\{g(z)\}\}^3.
$
Thus, the slope and curvature of the response link are determined by
the local curvature and variation of the elicitation geometry. In
particular, smaller Bregman curvature corresponds to greater
dual-to-primal sensitivity, while variation in this curvature
determines how the sensitivity changes along the index.
These identities connect the elicitation geometry directly to the
local population dynamics of NBL. Recall that
$U=\theta^\top X$ and $Z=(v^\star)^\top X$. Then
\begin{equation}
\mu^\star
=
\E\left[
\frac{\1\{Z>0\}}
{\varphi''\left\{
g\left(U+\frac{\Delta_\beta}{2}Z\right)
\right\}}
\right],
\label{eq:app_mustar_elicitation}
\end{equation}
while
\begin{equation}
m_2
=
-
\E\left[
\frac{\varphi'''\{g(U)\}}
{\{\varphi''\{g(U)\}\}^3}
\right].
\label{eq:app_m2_elicitation}
\end{equation}
The two quantities therefore capture different aspects of the same
elicitation geometry. The quantity $\mu^\star$ is governed by the
dual-to-primal sensitivity $1/\varphi''(m)$. Since
$dm/dz=g'(z)=1/\varphi''(m)$, it measures how strongly a local change
in the dual coordinate is translated into a change in the conditional
mean. By contrast, $m_2$ captures the average variation of this
sensitivity across the population through $g''$.
Substituting \eqref{eq:app_mustar_elicitation} and
\eqref{eq:app_m2_elicitation} into the decision stability coefficient
gives
\begin{align}
\lambda_\star
={}&
\Delta_\beta
\E\left[
\frac{\1\{Z>0\}}
{\varphi''\left\{
g\left(U+\frac{\Delta_\beta}{2}Z\right)
\right\}}
\right]
-
2\phi_0(0)
\left(1-\frac{\Delta_\beta^2}{4}\right)
\left(
-
\E\left[
\frac{\varphi'''\{g(U)\}}
{\{\varphi''\{g(U)\}\}^3}
\right]
\right)_+.
\label{eq:elicitation-stability-coefficient}
\end{align}
Thus, the decision stability coefficient depends jointly on the
separation of the arm-specific indices and on the geometry through
which this separation is translated from the dual scale to the reward
scale. The first term captures the restoring signal associated with
dual-to-primal sensitivity, while the second captures the possible
effect of variation in this sensitivity across the population.
Importantly, the latter depends on the signed population average
$m_2=\E[g''(U)]$, so local curvature may cancel across different
regions of the index space. When $m_2>0$, this average curvature
effect can weaken the restoring component, whereas when $m_2\leq0$,
it cannot oppose the restoring movement.

\subsection{Self-concordance and decision stability}
\label{app:self_concordance}

We next relate the self-concordance condition used in
Section~\ref{sec:elicitation} to the curvature of the induced convex
potential. Recall that
$
g=(\varphi')^{-1}.
$
Differentiating the identity
$
\varphi'\{g(z)\}=z
$
gives
\[
g'(z)
=
\frac{1}{\varphi''\{g(z)\}},
\]
and differentiating once more yields
\[
g''(z)
=
-
\frac{\varphi'''\{g(z)\}}
{\{\varphi''\{g(z)\}\}^3}.
\]
Consequently, the generalized self-concordance condition
$
|g''(z)|\leq Rg'(z),
 z\in\R,
$
is equivalent, with $m=g(z)$, to
$
|\varphi'''(m)|
\leq
R\{\varphi''(m)\}^2.
$
Equivalently,
\[
\left|
\frac{d}{dm}\frac{1}{\varphi''(m)}
\right|
\leq R.
\]
Thus, self-concordance controls how rapidly the local
dual-to-primal sensitivity $1/\varphi''(m)$ can vary over the primal
mean space. The following proof shows how this control bounds the
curvature contribution to the decision stability coefficient relative
to its restoring component.
\begin{proof}[Proof of Proposition~\ref{prop:self-concordance-stability}]
Recall that
$
\lambda_\star
=
\Delta_\beta\mu^\star
-
2\phi_0(0)
\left(1-({\Delta_\beta^2}/{4})\right)
(m_2)_+.
$
Let $U=\theta^\top X$ and $Z=(v^\star)^\top X$. Then
$
U\sim N(0,1-\Delta_\beta^2/4)
$,
$Z\sim N(0,1)$, and $U\perp Z$, while
$
\mu^\star
=
\E[
g'\left(U+({\Delta_\beta}/{2})Z\right)
\1\{Z>0\}
],
m_2=\E[g''(U)].
$
We first control the curvature term. By self-concordance,
$|g''(z)|\leq Rg'(z)$ for every $z\in\R$. Therefore,
\begin{equation}
(m_2)_+
\leq
|m_2|
\leq
\E[|g''(U)|]
\leq
R\E[g'(U)].
\label{eq:self-concordance-m2-bound}
\end{equation}

It remains to relate $\E[g'(U)]$ to $\mu^\star$. Since $g'>0$,
the self-concordance condition implies
$
|d\log g'(z)/dz|\leq R.
$
Hence, for any $z_1,z_2\in\R$,
\[
|\log g'(z_1)-\log g'(z_2)|
\leq
R|z_1-z_2|.
\]
Taking
$
z_1=U+(\Delta_\beta/2)Z
$
and $z_2=U$, on $\{Z>0\}$ we obtain
\[
\log g'\left(U+\frac{\Delta_\beta}{2}Z\right)
\geq
\log g'(U)-\frac{R\Delta_\beta}{2}Z,
\]
and therefore
\[
g'\left(U+\frac{\Delta_\beta}{2}Z\right)
\geq
\exp\left(-\frac{R\Delta_\beta}{2}Z\right)g'(U),
\qquad Z>0.
\]
Using the independence of $U$ and $Z$,
\begin{align}
\mu^\star
&=
\E\left[
g'\left(U+\frac{\Delta_\beta}{2}Z\right)
\1\{Z>0\}
\right]
\geq
\E[g'(U)]
\E\left[
\exp\left(-\frac{R\Delta_\beta}{2}Z\right)
\1\{Z>0\}
\right]
=
\kappa\left(\frac{R\Delta_\beta}{2}\right)
\E[g'(U)],
\label{eq:self-concordance-mu-bound}
\end{align}
where, for $a\geq0$,
$
\kappa(a)
=
\E[e^{-aZ}\1\{Z>0\}]
=
e^{a^2/2}\Phi(-a).
$
The final equality follows by completing the square in the standard
normal integral. It follows from
\eqref{eq:self-concordance-mu-bound} that
$
\E[g'(U)]
\leq
{\mu^\star}/
{(\kappa(R\Delta_\beta/2))}.
$
Combining this with
\eqref{eq:self-concordance-m2-bound} gives
\begin{equation}
(m_2)_+
\leq
\frac{R}{\kappa(R\Delta_\beta/2)}
\mu^\star.
\label{eq:self-concordance-curvature-control}
\end{equation}
Substituting \eqref{eq:self-concordance-curvature-control} into the
definition of $\lambda_\star$ yields
\[
\lambda_\star
\geq
2\mu^\star
\left\{
\frac{\Delta_\beta}{2}
-
\phi_0(0)
\left(1-\frac{\Delta_\beta^2}{4}\right)
\frac{R}{\kappa(R\Delta_\beta/2)}
\right\}.
\]
Since $\mu^\star>0$, the right-hand side is strictly positive whenever
\[
\frac{\Delta_\beta}{2}
>
\phi_0(0)
\left(1-\frac{\Delta_\beta^2}{4}\right)
\frac{R}{\kappa(R\Delta_\beta/2)}.
\]
Thus $\lambda_\star>0$ under the stated condition, which proves the
result.
\end{proof}
\section{Proofs for the stochastic analysis and regret}
\label{app:stochastic_analysis}

We first record the geometric normalization fact used in deriving
\eqref{eq:pathwise-error-recursion}.

\begin{lemma}[Effect of normalization]
\label{lem:normalization}
Let $x,u\in\R^d$ satisfy $\|x\|_2\geq1$ and $\|u\|_2=1$. Then
\[
\left\|
\frac{x}{\|x\|_2}-u
\right\|_2
\leq
\|x-u\|_2.
\]
\end{lemma}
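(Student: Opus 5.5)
The plan is to show that the radial projection $y\mapsto y/\|y\|_2$ onto the unit sphere is nonexpansive with respect to distance to any point $u$ of the sphere, as long as $y$ lies outside the open unit ball. Write $s=\|x\|_2\ge 1$ and $c=\langle x,u\rangle$. We compare the two squared distances by expanding each:
\[
\Bigl\|\tfrac{x}{s}-u\Bigr\|_2^2 = 2-\tfrac{2c}{s},
\qquad
\|x-u\|_2^2 = s^2-2c+1 .
\]
So the claim reduces to the scalar inequality
\[
s^2-2c+1-2+\tfrac{2c}{s}\;\ge\;0,
\]
which after collecting terms reads $s^2-1-2c\bigl(1-\tfrac1s\bigr)\ge 0$.

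Next, factor out $(s-1)\ge0$. Since
\[
s^2-1=(s-1)(s+1)
\qquad\text{and}\qquad
2c\Bigl(1-\tfrac1s\Bigr)=\tfrac{2c(s-1)}{s},
\]
the left-hand side equals
\[
(s-1)\Bigl(s+1-\tfrac{2c}{s}\Bigr).
\]
It therefore suffices to show $s+1-2c/s\ge0$, which is equivalent to $s^2+s\ge 2c$. By Cauchy--Schwarz, $c\le\|x\|_2\|u\|_2=s$. Using $s\ge1$ then gives $2c\le 2s\le s^2+s$. This proves the scalar inequality, and taking square roots gives the lemma.

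There is no real obstacle here. The only care needed is the factoring step, where the hypothesis $s\ge1$ is used twice: once to make $s-1$ nonnegative, and once in $2s\le s^2+s$.

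For the later application in \eqref{eq:pathwise-error-recursion}, take $x=v_{t-1}+\eta_tG_t$ and $u=v^\star$. Because $G_t\perp v_{t-1}$, we have $\|x\|_2^2=1+\eta_t^2\|G_t\|_2^2\ge1$, so the lemma applies.
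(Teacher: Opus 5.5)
Your proposal is correct and follows essentially the same route as the paper: you expand both squared distances, factor their difference as $(s-1)\bigl(s+1-2c/s\bigr)$, and use Cauchy--Schwarz ($c\le s$) with $s\ge1$ to show the second factor is nonnegative. Your application to \eqref{eq:pathwise-error-recursion} via $\|v_{t-1}+\eta_tG_t\|_2^2=1+\eta_t^2\|G_t\|_2^2\ge1$ also matches the paper.
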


\begin{proof}
Let $r=\|x\|_2\geq1$. Since $\|u\|_2=1$,
\[
\left\|\frac{x}{r}-u\right\|_2^2
=
2-\frac{2x^\top u}{r},
\qquad
\|x-u\|_2^2
=
r^2+1-2x^\top u.
\]
Therefore,
\begin{align*}
\|x-u\|_2^2
-
\left\|
\frac{x}{r}-u
\right\|_2^2
=
r^2-1-2x^\top u\left(1-\frac1r\right)
=
(r-1)
\left(
r+1-\frac{2x^\top u}{r}
\right).
\end{align*}
By Cauchy--Schwarz, $x^\top u/r\leq1$, so the second factor is at
least $r-1\geq0$. Hence the difference is nonnegative, proving the
claim.
\end{proof}
Since $G_t\in T_{v_{t-1}}\mathbb S^{d-1}$,
$v_{t-1}^\top G_t=0$, and hence
$
\|v_{t-1}+\eta_tG_t\|_2^2
=
1+\eta_t^2\|G_t\|_2^2
\geq1.
$
Applying Lemma~\ref{lem:normalization} with
$x=v_{t-1}+\eta_tG_t$ and $u=v^\star$ therefore gives
\begin{align*}
e_t
=
\left\|
\frac{v_{t-1}+\eta_tG_t}
{\|v_{t-1}+\eta_tG_t\|_2}
-v^\star
\right\|_2^2
\leq
\|v_{t-1}+\eta_tG_t-v^\star\|_2^2
=
e_{t-1}
+
2\eta_t
\langle v_{t-1}-v^\star,G_t\rangle
+
\eta_t^2\|G_t\|_2^2,
\end{align*}
which gives \eqref{eq:pathwise-error-recursion}.
For completeness, define
$
\xi_t=G_t-h(v_{t-1}).
$
By Lemma~\ref{lem:conditional_population_field},
$
\E[\xi_t\mid\mathcal H_{t-1}]=0.
$ Substituting $G_t=h(v_{t-1})+\xi_t$ into the preceding inequality
gives the stochastic decomposition
\eqref{eq:stochastic-error-decomposition}.
\begin{proof}[Proof of Lemma~\ref{lem:local_one_step}]
Taking conditional expectation in
\eqref{eq:pathwise-error-recursion} given $\mathcal H_{t-1}$ and
using Lemma~\ref{lem:conditional_population_field} gives
\begin{align}
\E[e_t\mid\mathcal H_{t-1}]
\leq{}&
e_{t-1}
+
2\eta_t
\left\langle
v_{t-1}-v^\star,h(v_{t-1})
\right\rangle
+
\eta_t^2
\E[\|G_t\|_2^2\mid\mathcal H_{t-1}].
\label{eq:app-conditional-error-before-bounds}
\end{align}
Here $v_{t-1}$ and $\eta_t$ are
$\mathcal H_{t-1}$-measurable.
It remains to control the conditional second moment of the stochastic
update. Since
$
G_t=P_{v_{t-1}}^\perp W_t
$
and $P_{v_{t-1}}^\perp$ is an orthogonal projection,
$
\|G_t\|_2\leq\|W_t\|_2.
$
Moreover, $W_t=A_tY_tX_t$ and $A_t\in\{-1,+1\}$, so
\[
\|G_t\|_2^2
\leq
Y_t^2\|X_t\|_2^2
\leq
B_Y^2\|X_t\|_2^2.
\]
Since $X_t$ is independent of $\mathcal H_{t-1}$ and
$X_t\sim N(0,I_d)$,
\[
\E[\|G_t\|_2^2\mid\mathcal H_{t-1}]
\leq
B_Y^2\E\|X_t\|_2^2
=
B_Y^2d.
\]
Finally, if $\|v_{t-1}-v^\star\|_2\leq r_0$, then
Corollary~\ref{cor:local_stability} gives
\[
\left\langle
v_{t-1}-v^\star,h(v_{t-1})
\right\rangle
\leq
-\lambda_0\|v_{t-1}-v^\star\|_2^2
=
-\lambda_0e_{t-1}.
\]
Substituting these bounds into
\eqref{eq:app-conditional-error-before-bounds} yields
\[
\E[e_t\mid\mathcal H_{t-1}]
\leq
(1-2\lambda_0\eta_t)e_{t-1}
+
B_Y^2d\,\eta_t^2,
\]
as claimed.
\end{proof}
\subsection{Finite-horizon localization}
\label{app:localization}

The local contraction in Lemma~\ref{lem:local_one_step} holds only
while the iterate remains within the stability neighborhood of
$v^\star$. Recall the first exit time
$
\tau
=
\inf\left\{
t\geq n_0:
\|v_t-v^\star\|_2>r_0
\right\}.
$
We now control the stochastic trajectory up to this exit time.
For $t>n_0$, \eqref{eq:stochastic-error-decomposition} and
Corollary~\ref{cor:local_stability} imply
\begin{align}
(e_t-e_{t-1})\1\{t\leq\tau\}
\leq{}&
-2\lambda_0\eta_t e_{t-1}\1\{t\leq\tau\}
\nonumber\\
&+
2\eta_t
\langle v_{t-1}-v^\star,\xi_t\rangle
\1\{t\leq\tau\}
+
\eta_t^2\|G_t\|_2^2\1\{t\leq\tau\}.
\label{eq:stopped_one_step_inequality}
\end{align}
Since $\{t\leq\tau\}\in\mathcal H_{t-1}$, the stopping indicator is
predictable and therefore preserves the martingale-difference
structure of the fluctuation term.

For $t\geq n_0$, define
\begin{align}
M_t
&=
2\sum_{s=n_0+1}^t
\eta_s
\left\langle
v_{s-1}-v^\star,\xi_s
\right\rangle
\1\{s\leq\tau\},
\label{eq:localization-martingale}
\\
Q_t
&=
\sum_{s=n_0+1}^t
\eta_s^2\|G_s\|_2^2
\1\{s\leq\tau\}.
\label{eq:localization-quadratic}
\end{align}
Summing \eqref{eq:stopped_one_step_inequality} from
$s=n_0+1$ to $t$ gives
\begin{align}
e_{t\wedge\tau}
\leq
e_{n_0}
-
2\lambda_0
\sum_{s=n_0+1}^t
\eta_s e_{s-1}\1\{s\leq\tau\}
+
M_t+Q_t
\leq
e_{n_0}+M_t+Q_t,
\label{eq:stopped-error-bound}
\end{align}
where the last inequality follows by dropping the nonpositive drift
term.

We use the harmonic step size
\begin{equation}
\eta_t=\frac{\gamma}{t+t_0}.
\label{eq:step_size}
\end{equation}
Then
$
\sum_{t\geq1}\eta_t=\infty
$
and
$
\sum_{t\geq1}\eta_t^2<\infty,
$
so the contractive drift can accumulate while the second-order
stochastic fluctuations remain summable. The same choice will also
yield the $t^{-1}$ localized estimation rate below.
\subsection{Concentration of the stopped stochastic terms}
\label{app:stopped-stochastic-control}

The following lemma controls the martingale and quadratic stochastic
terms in \eqref{eq:stopped-error-bound}.

\begin{lemma}[Control of the stopped stochastic terms]
\label{lem:stopped-stochastic-control}
Let $\eta_t=\gamma/(t+t_0)$. There exist universal constants
$C_M,C_Q>0$ such that, for any $\delta\in(0,1)$ and any finite
horizon $n$, if
$
n_0+t_0
\geq
C_M B_Y^2\gamma^2r_0^{-2}\log(3/\delta),
$
then
\[
\Prob\left(
\max_{n_0\leq t\leq n}M_t>\frac{r_0^2}{4}
\right)
\leq\frac{\delta}{3}.
\]
Similarly, if
$
n_0+t_0
\geq
C_Q B_Y^2\gamma^2r_0^{-2}
\left\{
d+\log(3/\delta)
\right\},
$
then
\[
\Prob\left(
Q_n>\frac{r_0^2}{4}
\right)
\leq\frac{\delta}{3}.
\]
\end{lemma}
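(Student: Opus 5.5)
The plan is to treat the two terms separately. For $M_t$ I will use a conditional sub-Gaussian bound together with a maximal inequality for exponential supermartingales. For $Q_n$ I will drop the stopping indicator and apply a chi-square tail bound. The one piece of bookkeeping used throughout is the step-size sum $\sum_{s>n_0}\eta_s^2\leq\gamma^2\sum_{s>n_0}(s+t_0)^{-2}\leq\gamma^2/(n_0+t_0)$.

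\emph{Martingale term.} Fix $s>n_0$ and set $\Delta_{s-1}=(v_{s-1}-v^\star)\1\{s\leq\tau\}$, which is $\mathcal H_{s-1}$-measurable. On $\{s\leq\tau\}$ we have $s-1<\tau$, so by the definition of $\tau$, $\|\Delta_{s-1}\|_2\leq r_0$. Since $P^\perp_{v_{s-1}}$ is symmetric,
\[
\langle\Delta_{s-1},G_s\rangle=A_sY_s\,(P^\perp_{v_{s-1}}\Delta_{s-1})^\top X_s .
\]
Conditionally on $\mathcal H_{s-1}$, the quantity $(P^\perp_{v_{s-1}}\Delta_{s-1})^\top X_s$ is Gaussian with standard deviation at most $r_0$, and $|A_sY_s|\leq B_Y$. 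The argument from the proof of Proposition~\ref{prop:randomized_initialization} therefore applies: tail domination, then centering, which preserves sub-Gaussianity up to a universal constant. It gives
\[
\E\bigl[\exp\{\lambda\langle\Delta_{s-1},\xi_s\rangle\}\mid\mathcal H_{s-1}\bigr]\leq\exp(c\lambda^2B_Y^2r_0^2),
\]
because $\langle\Delta_{s-1},\xi_s\rangle$ is exactly the centered version of $\langle\Delta_{s-1},G_s\rangle$ by Lemma~\ref{lem:conditional_population_field}. Hence, for each $\lambda>0$,
\[
\exp\Bigl\{\lambda M_t-4c\lambda^2B_Y^2r_0^2\sum_{s=n_0+1}^t\eta_s^2\Bigr\}
\]
is a nonnegative supermartingale. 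Writing $V=4cB_Y^2r_0^2\gamma^2/(n_0+t_0)$, Ville's (Doob's) maximal inequality gives
\[
\Prob\Bigl(\max_{n_0\leq t\leq n}M_t>x\Bigr)\leq\exp(-\lambda x+\lambda^2V).
\]
Optimizing over $\lambda$ yields the bound $\exp\{-x^2/(4V)\}$. With $x=r_0^2/4$ the exponent is of order $r_0^2(n_0+t_0)/(B_Y^2\gamma^2)$. This is at least $\log(3/\delta)$ once $n_0+t_0\geq C_MB_Y^2\gamma^2r_0^{-2}\log(3/\delta)$, which is the stated condition on $M_t$.

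\emph{Quadratic term.} First, $\|G_s\|_2\leq\|W_s\|_2\leq B_Y\|X_s\|_2$. Dropping the indicator gives
\[
Q_n\leq B_Y^2\sum_{s=n_0+1}^n a_s\|X_s\|_2^2,\qquad a_s=\eta_s^2 .
\]
The variables $\|X_s\|_2^2$ are i.i.d.\ $\chi^2_d$ and do not depend on the policy, so this is a deterministic weighting of independent chi-squares. The Laurent--Massart inequality gives, with probability at least $1-e^{-x}$,
\[
\sum_s a_s(\|X_s\|_2^2-d)\leq2\|a\|_2\sqrt{dx}+2\|a\|_\infty x .
\]
The weights satisfy $\|a\|_\infty\leq\gamma^2/(n_0+t_0)^2\leq\gamma^2/(n_0+t_0)$ and $\|a\|_2\leq\gamma^2(n_0+t_0)^{-3/2}\leq\gamma^2/(n_0+t_0)$. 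Combining these with $\sum_sa_s\leq\gamma^2/(n_0+t_0)$ and $2\sqrt{dx}\leq d+x$ gives
\[
Q_n\leq 3B_Y^2\gamma^2(d+x)/(n_0+t_0).
\]
Taking $x=\log(3/\delta)$, this is at most $r_0^2/4$ under the stated condition with $C_Q=12$.

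The main obstacle is the conditional sub-Gaussian step for the martingale increments. The radius $r_0$ must enter through the predictable stopped vector $\Delta_{s-1}$, so it matters that $\{s\leq\tau\}$ ensures $\|v_{s-1}-v^\star\|_2\leq r_0$ (and not only the $s$-th iterate). Centering against $h(v_{s-1})$ also has to cost only a universal constant. Once this is in place, the maximal inequality delivers the uniform-in-$t$ bound with no dependence on the horizon $n$.
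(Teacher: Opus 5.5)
Your proposal is correct and follows the paper's proof essentially step for step. For $M_t$, the increments are made conditionally sub-Gaussian with a deterministic proxy of order $B_Y^2r_0^2\eta_s^2$ via Gaussian tail domination and centering, and a Ville-type maximal inequality with $\sum_{s>n_0}\eta_s^2\le\gamma^2/(n_0+t_0)$ then gives $C_M=256C_{\rm sg}$. For $Q_n$, you drop the stopping indicator and apply Laurent--Massart, giving $C_Q=12$. Folding $\1\{s\le\tau\}$ into the predictable vector $\Delta_{s-1}$, rather than into $\lambda$, is only a cosmetic difference.
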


The proof of Lemma~\ref{lem:stopped-stochastic-control} uses the
following three concentration results.

\begin{lemma}[Conditional tail-to-MGF bound]
\label{lem:tail_to_mgf}
Let $U$ be a real-valued random variable and let $\mathcal G$ be a
$\sigma$-field. Suppose that, almost surely,
\[
\Prob\left(
|U|\geq x
\,\middle|\,
\mathcal G
\right)
\leq
2\exp\left(
-\frac{x^2}{2\sigma^2}
\right),
\qquad x>0,
\]
where $\sigma$ is a nonnegative $\mathcal G$-measurable random
variable. Then there exists a universal constant $C_{\rm sg}>0$ such
that
\[
\E\left[
\exp\left\{
\lambda
\left(
U-\E[U\mid\mathcal G]
\right)
\right\}
\,\middle|\,
\mathcal G
\right]
\leq
\exp\left(
C_{\rm sg}\lambda^2\sigma^2
\right),
\qquad \lambda\in\R.
\]
\end{lemma}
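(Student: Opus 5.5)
The statement is Lemma~\ref{lem:tail_to_mgf}. I would prove it by first passing from the conditional tail bound to conditional moment bounds, and then expanding the exponential of the centered variable. All steps are carried out conditionally on $\mathcal G$. Since $\sigma$ is $\mathcal G$-measurable, it behaves as a constant throughout.

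\textbf{Moments.} The layer-cake formula gives
\[
\E[|U|^p\mid\mathcal G]=\int_0^\infty p x^{p-1}\Prob(|U|\geq x\mid\mathcal G)\,dx
\leq 2p\int_0^\infty x^{p-1}e^{-x^2/(2\sigma^2)}dx
= p\,(2\sigma^2)^{p/2}\Gamma(p/2).
\]
Hence $\E[|U|^p\mid\mathcal G]\leq (C\sigma\sqrt p)^p$ for a universal $C$. Let $V=U-\E[U\mid\mathcal G]$. By conditional Jensen and Minkowski,
\[
\|V\|_{p}\leq 2\|U\|_{p}\leq 2C\sigma\sqrt p,
\]
where $\|\cdot\|_p$ denotes the conditional $L^p$ norm.

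\textbf{Exponential expansion.} Because $\E[V\mid\mathcal G]=0$,
\[
\E[e^{\lambda V}\mid\mathcal G]=1+\sum_{p\geq2}\frac{\lambda^p\E[V^p\mid\mathcal G]}{p!}\leq 1+\sum_{p\geq2}\frac{|\lambda|^p(2C\sigma)^p p^{p/2}}{p!}.
\]
Using $p!\geq(p/e)^p$, each term is at most $(2eC|\lambda|\sigma/\sqrt p)^p$.

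\textbf{Case $|\lambda|\sigma$ small.} If $|\lambda|\sigma\leq c$ for a suitable universal $c$, the series is geometric-dominated. It is bounded by $1+C'\lambda^2\sigma^2\leq\exp(C'\lambda^2\sigma^2)$.

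\textbf{Case $|\lambda|\sigma$ large.} Here I use $\lambda V\leq \lambda^2\sigma^2/a+aV^2/(4\sigma^2)$ for a small $a>0$. The quadratic moment generating function is finite: applying the same layer-cake computation to $e^{aV^2/\sigma^2}$ shows that $\E[e^{aV^2/(4\sigma^2)}\mid\mathcal G]\leq 2$ for small universal $a$. This gives
\[
\E[e^{\lambda V}\mid\mathcal G]\leq 2e^{\lambda^2\sigma^2/a}.
\]
Since $\lambda^2\sigma^2\geq c^2$, the factor $2$ can be absorbed as $2\leq e^{\lambda^2\sigma^2\log2/c^2}$. Combining the two cases yields the claim with a universal $C_{\rm sg}$.

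\textbf{Technical points.} Two details need care. The first is that conditional expectations of series may be interchanged by monotone convergence, applied after bounding by absolute values. The second is the degenerate case $\sigma=0$: then $U=0$ almost surely on that event, so both sides equal $1$. The only mildly delicate step is the large-$|\lambda|\sigma$ regime, i.e.\ establishing the bound on the sub-Gaussian quadratic moment generating function uniformly, which is handled via the layer-cake integral with constants tracked loosely.
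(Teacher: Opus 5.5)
Your proof is correct and follows the paper's route. Both arguments use the layer-cake bound $\E[|U|^p\mid\mathcal G]\le p(2\sigma^2)^{p/2}\Gamma(p/2)$, center via conditional Minkowski/Jensen, and then convert moment bounds into an MGF bound for the centered variable. The only difference is that the paper cites that last conversion as the standard sub-Gaussian equivalence (Vershynin), whereas you carry it out explicitly with the small/large $|\lambda|\sigma$ split. One small point: the tail hypothesis is stated for $U$, not $V$, so the quadratic MGF bound in your large-$|\lambda|\sigma$ case is best obtained from your moment bounds on $V$ via the power series, or from the tail of $U$ after absorbing the shift $|\E[U\mid\mathcal G]|\le\sqrt{2\pi}\,\sigma$.
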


\begin{proof}
Condition on $\mathcal G$ throughout, so that $\sigma$ may be
regarded as fixed. The assumed tail bound first implies moment
bounds for $U$. For every $p\geq1$, using the tail-integration
formula,
\begin{align*}
\E[|U|^p\mid\mathcal G]
&=
p\int_0^\infty
x^{p-1}
\Prob(|U|\geq x\mid\mathcal G)\,dx
\leq
2p\int_0^\infty
x^{p-1}
\exp\left(-\frac{x^2}{2\sigma^2}\right)\,dx.
\end{align*}
With the change of variables $z=x^2/(2\sigma^2)$, the right-hand
side is
$
p(2\sigma^2)^{p/2}\Gamma(p/2).
$
Hence, using the standard bound
$\Gamma(p/2)^{1/p}\leq C\sqrt{p}$,
$
\left(
\E[|U|^p\mid\mathcal G]
\right)^{1/p}
\leq
C_1\sigma\sqrt{p}
$
for a universal constant $C_1>0$.

Now define the conditionally centered variable
$
Z
=
U-\E[U\mid\mathcal G].
$
By the conditional Minkowski inequality,
\begin{align*}
\left(
\E[|Z|^p\mid\mathcal G]
\right)^{1/p}
&\leq
\left(
\E[|U|^p\mid\mathcal G]
\right)^{1/p}
+
|\E[U\mid\mathcal G]|
\leq
\left(
\E[|U|^p\mid\mathcal G]
\right)^{1/p}
+
\E[|U|\mid\mathcal G].
\end{align*}
Applying the preceding moment bound with $p$ and with $p=1$ gives
\[
\left(
\E[|Z|^p\mid\mathcal G]
\right)^{1/p}
\leq
C_2\sigma\sqrt{p},
\qquad p\geq1,
\]
for another universal constant $C_2>0$. Moreover,
$\E[Z\mid\mathcal G]=0$. The standard equivalence between the
moment and moment-generating-function characterizations of centered
sub-Gaussian random variables therefore yields
\[
\E[e^{\lambda Z}\mid\mathcal G]
\leq
\exp(C_{\rm sg}\lambda^2\sigma^2),
\qquad \lambda\in\R,
\]
for a universal constant $C_{\rm sg}>0$; see, for example,
\cite[Section~2.6]{vershynin2018high}.
\end{proof}

\begin{lemma}[Maximal inequality for conditionally sub-Gaussian
martingales]
\label{lem:maximal-subgaussian-martingale}
Let $\{D_s,\mathcal H_s\}$ be a martingale-difference sequence
satisfying
\[
\E[\exp(\lambda D_s)\mid\mathcal H_{s-1}]
\leq
\exp\left(\frac{\lambda^2\sigma_s^2}{2}\right)
\]
for every $\lambda\in\R$, where $\sigma_s^2$ is deterministic.
Then, for every $x>0$ and $T\geq1$,
\[
\Prob\left(
\max_{1\leq t\leq T}
\sum_{s=1}^{t}D_s
\geq x
\right)
\leq
\exp\left\{
-\frac{x^2}{
2\sum_{s=1}^{T}\sigma_s^2
}
\right\}.
\]
\end{lemma}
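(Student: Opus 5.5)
The plan is the standard exponential supermartingale argument combined with Doob's maximal inequality. Set $S_t=\sum_{s=1}^t D_s$, $S_0=0$, and $V_T=\sum_{s=1}^T\sigma_s^2$. Fix $\lambda>0$ and consider
\[
L_t=\exp\Bigl\{\lambda S_t-\tfrac{\lambda^2}{2}\textstyle\sum_{s=1}^t\sigma_s^2\Bigr\},\qquad L_0=1.
\]
By the assumed conditional MGF bound and the $\mathcal H_{t-1}$-measurability of $S_{t-1}$ (the $\sigma_s^2$ being deterministic), we get $\E[L_t\mid\mathcal H_{t-1}]=L_{t-1}\,e^{-\lambda^2\sigma_t^2/2}\E[e^{\lambda D_t}\mid\mathcal H_{t-1}]\le L_{t-1}$. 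Hence $\{L_t\}$ is a nonnegative supermartingale with $\E L_0=1$.

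Because the variances are deterministic, there is a cleaner route. The process $\exp(\lambda S_t)$ is a nonnegative \emph{submartingale}, since $\E[e^{\lambda D_t}\mid\mathcal H_{t-1}]\ge e^{\lambda\E[D_t\mid\mathcal H_{t-1}]}=1$ by conditional Jensen and the martingale-difference property. Since $x\mapsto e^{\lambda x}$ is increasing, $\{\max_{t\le T}S_t\ge x\}=\{\max_{t\le T}e^{\lambda S_t}\ge e^{\lambda x}\}$. Doob's maximal inequality for nonnegative submartingales then gives
\[
\Prob\Bigl(\max_{t\le T}S_t\ge x\Bigr)\le e^{-\lambda x}\,\E[e^{\lambda S_T}].
\]
Iterating the conditional MGF bound by the tower property, exactly as in the proof of Proposition~\ref{prop:randomized_initialization}, yields $\E[e^{\lambda S_T}]\le\exp(\lambda^2V_T/2)$. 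So the probability is at most $\exp(-\lambda x+\lambda^2V_T/2)$ for every $\lambda>0$. Choosing $\lambda=x/V_T$ gives the claimed bound $\exp\{-x^2/(2V_T)\}$.

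The only point needing care is that the compensator is deterministic, so that the submartingale route applies directly without any stopping-time argument. If one instead wanted random $\sigma_s^2$, one would apply Ville's inequality to the supermartingale $L_t$, but then the bound on $\sum\sigma_s^2$ would have to be deterministic. The degenerate case $V_T=0$ forces $D_s=0$ almost surely, making the bound trivial, and it can be treated separately.
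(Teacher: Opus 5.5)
Your proof is correct, but it takes a slightly different route from the paper's. The paper never passes to $e^{\lambda S_t}$. It applies Ville's inequality directly to the compensated supermartingale $L_t(\lambda)=\exp\{\lambda S_t-\lambda^2V_t/2\}$, which is the same process you construct and then set aside. It then uses $V_{t^\star}\le V_T$ to show $\{\max_{t\le T}S_t\ge x\}\subseteq\{\max_{t\le T}L_t(\lambda)\ge\exp(\lambda x-\lambda^2V_T/2)\}$. The variance is thus absorbed pathwise by the compensator, and only the MGF hypothesis is used; neither Jensen nor a separate integrability check is needed.

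You instead apply Doob's submartingale inequality to the uncompensated $e^{\lambda S_t}$, which is a submartingale by conditional Jensen and the martingale-difference property. All of the sub-Gaussian structure then enters through the terminal bound $\E e^{\lambda S_T}\le e^{\lambda^2V_T/2}$. This is the classical Doob--Chernoff argument. Its advantage is modularity: any terminal MGF bound immediately becomes a maximal bound. The paper's line-crossing form is instead the natural template for time-uniform concentration. Both arguments are equally short, and both rest on a maximal inequality proved by optional stopping. Your remark that your route avoids a stopping-time argument is therefore not a real distinction.

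One small correction to your closing remark: random predictable $\sigma_s^2$ do not force the Ville route. If $V_T\le\bar V$ almost surely, then $\E e^{\lambda S_T}=\E[L_Te^{\lambda^2V_T/2}]\le e^{\lambda^2\bar V/2}\E L_T\le e^{\lambda^2\bar V/2}$. So the supermartingale $L_t$ you already built makes your submartingale route work in that case too.
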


\begin{proof}
Let
\[
S_t=\sum_{s=1}^tD_s,
\qquad
V_t=\sum_{s=1}^t\sigma_s^2.
\]
For any $\lambda>0$, define
\[
L_t(\lambda)
=
\exp\left\{
\lambda S_t-\frac{\lambda^2}{2}V_t
\right\}.
\]
We first verify that $\{L_t(\lambda)\}$ is a nonnegative
supermartingale. Since
$S_t=S_{t-1}+D_t$ and
$V_t=V_{t-1}+\sigma_t^2$,
\[
L_t(\lambda)
=
L_{t-1}(\lambda)
\exp\left\{
\lambda D_t-\frac{\lambda^2}{2}\sigma_t^2
\right\}.
\]
Hence, conditioning on $\mathcal H_{t-1}$,
\begin{align*}
\E[L_t(\lambda)\mid\mathcal H_{t-1}]
&=
L_{t-1}(\lambda)
e^{-\lambda^2\sigma_t^2/2}
\E[e^{\lambda D_t}\mid\mathcal H_{t-1}]
\leq
L_{t-1}(\lambda),
\end{align*}
where the inequality follows from the conditional
moment-generating-function assumption. Thus
$\{L_t(\lambda)\}$ is a nonnegative supermartingale, with
$L_0(\lambda)=1$. By Ville's inequality for nonnegative
supermartingales,
\[
\Prob\left(
\max_{1\leq t\leq T}L_t(\lambda)\geq u
\right)
\leq
\frac{1}{u},
\qquad u>0.
\]

Now suppose that $\max_{1\leq t\leq T}S_t\geq x.$
Then there exists some $t^\star\leq T$ such that
$S_{t^\star}\geq x$. Since $V_{t^\star}\leq V_T$,
\begin{align*}
L_{t^\star}(\lambda)
&=
\exp\left\{
\lambda S_{t^\star}
-\frac{\lambda^2}{2}V_{t^\star}
\right\}
\geq
\exp\left\{
\lambda x-\frac{\lambda^2}{2}V_T
\right\}.
\end{align*}
Therefore,
\[
\left\{
\max_{1\leq t\leq T}S_t\geq x
\right\}
\subseteq
\left\{
\max_{1\leq t\leq T}L_t(\lambda)
\geq
\exp\left(
\lambda x-\frac{\lambda^2}{2}V_T
\right)
\right\}.
\]
Applying Ville's inequality gives
\[
\Prob\left(
\max_{1\leq t\leq T}S_t\geq x
\right)
\leq
\exp\left\{
-\lambda x+\frac{\lambda^2}{2}V_T
\right\}.
\]
This holds for every $\lambda>0$. If $V_T>0$, the right-hand side is
minimized at
$
\lambda={x}/{V_T},
$
which yields
\[
\Prob\left(
\max_{1\leq t\leq T}S_t\geq x
\right)
\leq
\exp\left\{
-\frac{x^2}{2V_T}
\right\}.
\]
Since $V_T=\sum_{s=1}^T\sigma_s^2$, this proves the claim.
If $V_T=0$, then $\sigma_s^2=0$ for every $s\leq T$. The
moment-generating-function assumption implies
$
\E[e^{\lambda D_s}\mid\mathcal H_{s-1}]\leq1
$
for every $\lambda\in\R$, which forces $D_s=0$ almost surely.
Hence the conclusion is trivial.
\end{proof}

\begin{lemma}[Weighted chi-square concentration]
\label{lem:weighted_chisquare}
Let $Z_1,\ldots,Z_m$ be independent $N(0,I_d)$ random vectors, and
let $a_1,\ldots,a_m\geq0$ be deterministic. Then, for every $x>0$,
\[
\Prob\left(
\sum_{s=1}^m
a_s\{\|Z_s\|_2^2-d\}
\geq
2\sqrt{
d x\sum_{s=1}^m a_s^2
}
+
2x\max_{1\leq s\leq m}a_s
\right)
\leq
e^{-x}.
\]
\end{lemma}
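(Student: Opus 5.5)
The plan is to recognize the sum as a quadratic form in a single standard Gaussian vector and then apply the Laurent--Massart argument: a Chernoff bound built from the exact chi-square moment generating function. Write $\|Z_s\|_2^2=\sum_{j=1}^d Z_{sj}^2$, where the $md$ coordinates $Z_{sj}$ are i.i.d.\ $N(0,1)$. Then
\[
S=\sum_{s=1}^m a_s\{\|Z_s\|_2^2-d\}=\sum_{s=1}^m\sum_{j=1}^d a_s(Z_{sj}^2-1)
\]
is a weighted centered chi-square with $md$ weights. In this family, weight $a_s$ appears $d$ times. Consequently the squared weight norm is $\sum_{s,j}a_s^2=d\sum_s a_s^2$ and the largest weight is $\max_s a_s$. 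The claimed bound is then exactly Laurent and Massart's Lemma~1, namely
\[
\Prob\bigl(S\geq 2\|a\|_2\sqrt{x}+2\|a\|_\infty x\bigr)\leq e^{-x},
\]
applied to this expanded weight vector. I would either cite that lemma directly or reproduce its short proof as follows.

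\textbf{Step 1 (log-MGF bound).} For $0<\lambda<1/(2\max_s a_s)$, independence and the chi-square moment generating function give
\[
\log\E e^{\lambda S}=\sum_{s,j}\Bigl\{-\tfrac12\log(1-2\lambda a_s)-\lambda a_s\Bigr\}.
\]
Using the elementary inequality $-u-\tfrac12\log(1-2u)\leq u^2/(1-2u)$ for $0\leq u<1/2$, this is at most
\[
\frac{\lambda^2 v}{1-2\lambda b},\qquad v=d\sum_s a_s^2,\quad b=\max_s a_s.
\]
This is a sub-gamma bound with variance factor $2v$ and scale $2b$.

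\textbf{Step 2 (Chernoff).} A sub-gamma bound of this form yields, by the standard Chernoff optimization (Boucheron--Lugosi--Massart),
\[
\Prob\bigl(S\geq\sqrt{4vx}+2bx\bigr)\leq e^{-x}.
\]
Concretely, choose $\lambda$ so that the Cramér transform of $\lambda^2 v/(1-2\lambda b)$ equals $x$ at the threshold. This gives the threshold $2\sqrt{vx}+2bx$, which is exactly $2\sqrt{dx\sum_s a_s^2}+2x\max_s a_s$.

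\textbf{Main obstacle.} The step needing care is the elementary inequality in Step 1. It follows from the series expansion
\[
-u-\tfrac12\log(1-2u)=\sum_{k\geq2}\frac{(2u)^k}{2k}\leq u^2\sum_{k\geq0}(2u)^k=\frac{u^2}{1-2u}.
\]
The remaining work is the inversion of the sub-gamma Cramér transform, which is routine and standard. Degenerate cases are trivial: if all $a_s=0$, then $S=0$ and the event is empty.
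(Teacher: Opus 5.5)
Your proposal is correct and follows the paper's own argument. The paper writes the sum as $\sum_{s,j}a_s(Z_{sj}^2-1)$ over $md$ i.i.d.\ standard Gaussian coordinates. Each weight $a_s$ appears $d$ times, so the squared weight norm is $d\sum_s a_s^2$, and the paper then invokes Laurent--Massart's Lemma~1. The paper only cites that lemma. Your reconstruction of it is also correct: the log-MGF bound $\lambda^2 v/(1-2\lambda b)$, the sub-gamma parameters (variance factor $2v$, scale $2b$), and the Chernoff threshold $\sqrt{4vx}+2bx$ all check out.

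One slip: your degenerate-case remark is backwards. If all $a_s=0$, then $S\equiv 0$ and the threshold is $0$. The event $\{S\geq 0\}$ is then the whole sample space and has probability one, not zero, so the inequality as stated with $\geq$ actually fails there. The lemma, like Laurent--Massart's, tacitly requires some $a_s>0$. In that case $v,b>0$ and your Chernoff step holds with $\geq$, since Markov's inequality handles non-strict events. This does not affect the paper, which applies the lemma only with $a_s=\eta_s^2>0$.
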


\begin{proof}
Write
$
Z_s=(Z_{s1},\ldots,Z_{sd})^\top.
$
Then
\[
\sum_{s=1}^m
a_s\{\|Z_s\|_2^2-d\}
=
\sum_{s=1}^m\sum_{j=1}^d
a_s(Z_{sj}^2-1).
\]
The random variables $\{Z_{sj}\}$ are independent standard Gaussian.
Applying the weighted chi-square inequality of
\cite[Lemma~1]{laurent2000adaptive} to the $md$ variables with
coefficient $a_s$ repeated $d$ times gives
\[
\Prob\left(
\sum_{s=1}^m\sum_{j=1}^d
a_s(Z_{sj}^2-1)
\geq
2\sqrt{
x\sum_{s=1}^m\sum_{j=1}^d a_s^2
}
+
2x\max_s a_s
\right)
\leq e^{-x}.
\]
Since
\[
\sum_{s=1}^m\sum_{j=1}^d a_s^2
=
d\sum_{s=1}^m a_s^2,
\]
the claimed bound follows.
\end{proof}

\begin{proof}[Proof of Lemma~\ref{lem:stopped-stochastic-control}]
We control the martingale term $M_t$ and the quadratic term $Q_t$
separately.

\paragraph{Control of $M_t$.}
Recall from \eqref{eq:localization-martingale} that
$
M_t
=
2\sum_{s=n_0+1}^{t}
\eta_s
\left\langle
v_{s-1}-v^\star,\xi_s
\right\rangle
\1\{s\leq\tau\}.
$
Define the increments
\[
\Delta M_s
=
M_s-M_{s-1}
=
2\eta_s
\left\langle
v_{s-1}-v^\star,\xi_s
\right\rangle
\1\{s\leq\tau\}.
\]
We first verify that $\{M_t\}_{t\geq n_0}$ is a martingale with
respect to $\{\mathcal H_t\}$. Since
$
\E[\xi_s\mid\mathcal H_{s-1}]=0
$
and $\{s\leq\tau\}$ is $\mathcal H_{s-1}$-measurable, we have
\begin{align*}
\E[\Delta M_s\mid\mathcal H_{s-1}]
&=
2\eta_s\1\{s\leq\tau\}
\E\left[
\left\langle
v_{s-1}-v^\star,\xi_s
\right\rangle
\,\middle|\,
\mathcal H_{s-1}
\right]\\
&=
2\eta_s\1\{s\leq\tau\}
\left\langle
v_{s-1}-v^\star,
\E[\xi_s\mid\mathcal H_{s-1}]
\right\rangle\\
&=0.
\end{align*}
Thus $\{\Delta M_s\}$ is a martingale-difference sequence, and hence
$\{M_t\}_{t\geq n_0}$ is a martingale. To control its increments, define
$
U_s
=
\left\langle
v_{s-1}-v^\star,G_s
\right\rangle.
$
Since
$
\E[G_s\mid\mathcal H_{s-1}]
=
h(v_{s-1}),
$
we have
\[
U_s-\E[U_s\mid\mathcal H_{s-1}]
=
\left\langle
v_{s-1}-v^\star,\xi_s
\right\rangle.
\]
Therefore,
\[
\Delta M_s
=
2\eta_s
\left\{
U_s-\E[U_s\mid\mathcal H_{s-1}]
\right\}
\1\{s\leq\tau\}.
\]
Moreover, by the definition of $G_s$,
\begin{align*}
U_s
&=
\left\langle
v_{s-1}-v^\star,
P_{v_{s-1}}^\perp W_s
\right\rangle
=
\left\langle
P_{v_{s-1}}^\perp(v_{s-1}-v^\star),
W_s
\right\rangle.
\end{align*}
Using $W_s=A_sY_sX_s$, $A_s\in\{-1,+1\}$, and
$|Y_s|\leq B_Y$,
\begin{equation}
|U_s|
\leq
B_Y
\left|
\left\langle
P_{v_{s-1}}^\perp(v_{s-1}-v^\star),
X_s
\right\rangle
\right|.
\label{eq:Us-pointwise-bound}
\end{equation}
Conditional on $\mathcal H_{s-1}$,
$
P_{v_{s-1}}^\perp(v_{s-1}-v^\star)
$
is fixed, while $X_s\sim N(0,I_d)$ is independent of
$\mathcal H_{s-1}$. Hence
\[
\left.
\left\langle
P_{v_{s-1}}^\perp(v_{s-1}-v^\star),
X_s
\right\rangle
\,\right|\,
\mathcal H_{s-1}
\sim
N\left(
0,
\left\|
P_{v_{s-1}}^\perp(v_{s-1}-v^\star)
\right\|_2^2
\right).
\]
Consequently, \eqref{eq:Us-pointwise-bound} and the standard Gaussian
tail bound imply that, for every $x>0$,
\[
\Prob\left(
|U_s|\geq x
\,\middle|\,
\mathcal H_{s-1}
\right)
\leq
2\exp\left\{
-\frac{x^2}
{2B_Y^2
\|P_{v_{s-1}}^\perp(v_{s-1}-v^\star)\|_2^2}
\right\}.
\]
Applying Lemma~\ref{lem:tail_to_mgf} conditionally on
$\mathcal H_{s-1}$ with
$
\sigma_s
=
B_Y
\left\|
P_{v_{s-1}}^\perp(v_{s-1}-v^\star)
\right\|_2
$
gives
\[
\E\left[
\exp\left\{
\lambda
\left(
U_s-\E[U_s\mid\mathcal H_{s-1}]
\right)
\right\}
\,\middle|\,
\mathcal H_{s-1}
\right]
\leq
\exp\left(
C_{\rm sg}\lambda^2\sigma_s^2
\right).
\]
Since $\1\{s\leq\tau\}$ is $\mathcal H_{s-1}$-measurable, replacing
$\lambda$ above by
$
2\lambda\eta_s\1\{s\leq\tau\}
$
and using
$
\|
P_{v_{s-1}}^\perp(v_{s-1}-v^\star)
\|_2
\leq
\|v_{s-1}-v^\star\|_2
\leq r_0
\, \text{on }\, \{s\leq\tau\},
$
gives
\begin{align*}
\E\left[
e^{\lambda\Delta M_s}
\,\middle|\,
\mathcal H_{s-1}
\right]
&\leq
\exp\left\{
4C_{\rm sg}\lambda^2
B_Y^2\eta_s^2
\left\|
P_{v_{s-1}}^\perp(v_{s-1}-v^\star)
\right\|_2^2
\1\{s\leq\tau\}
\right\}
\leq
\exp\left\{
4C_{\rm sg}\lambda^2
B_Y^2r_0^2\eta_s^2
\right\}.
\end{align*}
Thus $M_t$ is a martingale with conditionally sub-Gaussian increments.
Lemma~\ref{lem:maximal-subgaussian-martingale} therefore gives, for
every $x>0$,
\[
\Prob\left(
\max_{n_0\leq t\leq n}M_t\geq x
\right)
\leq
\exp\left\{
-\frac{x^2}
{16C_{\rm sg}B_Y^2r_0^2
\sum_{s=n_0+1}^n\eta_s^2}
\right\}.
\]
For
$
\eta_s=\gamma/(s+t_0),
$
we have
\begin{align*}
\sum_{s=n_0+1}^n\eta_s^2
&\leq
\gamma^2
\sum_{s=n_0+1}^{\infty}
\frac{1}{(s+t_0)^2}
\leq
\gamma^2
\int_{n_0+t_0}^{\infty}\frac{du}{u^2}
=
\frac{\gamma^2}{n_0+t_0}.
\end{align*}
Therefore,
\[
\Prob\left(
\max_{n_0\leq t\leq n}M_t\geq x
\right)
\leq
\exp\left\{
-\frac{(n_0+t_0)x^2}
{16C_{\rm sg}B_Y^2r_0^2\gamma^2}
\right\}.
\]
Taking $x=r_0^2/4$ gives
\[
\Prob\left(
\max_{n_0\leq t\leq n}M_t
\geq\frac{r_0^2}{4}
\right)
\leq
\exp\left\{
-\frac{(n_0+t_0)r_0^2}
{256C_{\rm sg}B_Y^2\gamma^2}
\right\}.
\]
Hence
$
n_0+t_0
\geq
256C_{\rm sg}
B_Y^2\gamma^2r_0^{-2}
\log\left(\frac{3}{\delta}\right)
$
is sufficient to ensure
\[
\Prob\left(
\max_{n_0\leq t\leq n}M_t
>\frac{r_0^2}{4}
\right)
\leq
\frac{\delta}{3}.
\]
Thus the first claim holds with, for example,
$
C_M=256C_{\rm sg}.
$

\paragraph{Control of $Q_t$.}
Recall from \eqref{eq:localization-quadratic} that
$
Q_t
=
\sum_{s=n_0+1}^t
\eta_s^2
\|G_s\|_2^2
\1\{s\leq\tau\}.
$
Since every summand is nonnegative, $Q_t$ is nondecreasing in $t$,
and therefore
$
\max_{n_0\leq t\leq n}Q_t=Q_n.
$
Moreover,
\[
\|G_s\|_2
=
\|P_{v_{s-1}}^\perp W_s\|_2
\leq
\|W_s\|_2
=
|Y_s|\|X_s\|_2
\leq
B_Y\|X_s\|_2.
\]
Hence, pathwise,
\begin{equation}
Q_n
\leq
B_Y^2
\sum_{s=n_0+1}^n
\eta_s^2\|X_s\|_2^2.
\label{eq:QT-gaussian-bound}
\end{equation}
Notice that the stopping indicator has disappeared from the
right-hand side. This allows us to use directly the independence of
the Gaussian covariates. Applying
Lemma~\ref{lem:weighted_chisquare} with
$a_s=\eta_s^2$ gives, for every $x>0$, with probability at least
$1-e^{-x}$,
\begin{align}
\sum_{s=n_0+1}^n
\eta_s^2\|X_s\|_2^2
\leq{}&
d\sum_{s=n_0+1}^n\eta_s^2
+
2\sqrt{
xd\sum_{s=n_0+1}^n\eta_s^4
}
+
2x
\max_{n_0+1\leq s\leq n}\eta_s^2.
\label{eq:weighted-chi-Q}
\end{align}
For the harmonic step size,
\[
\sum_{s=n_0+1}^n\eta_s^2
\leq
\frac{\gamma^2}{n_0+t_0}.
\]
Similarly,
\begin{align*}
\sum_{s=n_0+1}^n\eta_s^4
&\leq
\gamma^4
\sum_{s=n_0+1}^{\infty}
\frac{1}{(s+t_0)^4}
\leq
\gamma^4
\int_{n_0+t_0}^{\infty}\frac{du}{u^4}
=
\frac{\gamma^4}{3(n_0+t_0)^3}
\leq
\frac{\gamma^4}{(n_0+t_0)^3},
\end{align*}
and
\[
\max_{n_0+1\leq s\leq n}\eta_s^2
\leq
\frac{\gamma^2}{(n_0+t_0)^2}.
\]
Substituting these bounds into \eqref{eq:weighted-chi-Q}, we obtain,
with probability at least $1-e^{-x}$,
\[
Q_n
\leq
B_Y^2\gamma^2
\left\{
\frac{d}{n_0+t_0}
+
\frac{2\sqrt{xd}}{(n_0+t_0)^{3/2}}
+
\frac{2x}{(n_0+t_0)^2}
\right\}.
\]
Since $n_0+t_0\geq1$,
\[
\frac{2\sqrt{xd}}{(n_0+t_0)^{3/2}}
\leq
\frac{2\sqrt{xd}}{n_0+t_0}
\leq
\frac{d+x}{n_0+t_0},
\]
where the last inequality follows from
$
2\sqrt{xd}\leq d+x.
$
Also,
\[
\frac{2x}{(n_0+t_0)^2}
\leq
\frac{2x}{n_0+t_0}.
\]
Consequently,
\[
Q_n
\leq
\frac{
B_Y^2\gamma^2
}{n_0+t_0}
\{2d+3x\}
\leq
\frac{
3B_Y^2\gamma^2
}{n_0+t_0}
(d+x)
\]
with probability at least $1-e^{-x}$. Taking
$
x=\log({3}/{\delta})
$
therefore gives, with probability at least $1-\delta/3$,
\[
Q_n
\leq
\frac{
3B_Y^2\gamma^2
}{
n_0+t_0
}
\left\{
d+\log\left(\frac{3}{\delta}\right)
\right\}.
\]
Thus, if
$
n_0+t_0
\geq
12B_Y^2\gamma^2r_0^{-2}
\{
d+\log\left({3}/{\delta}\right)
\},
$
then
$
Q_n\leq r_0^2/4
$
with probability at least $1-\delta/3$. Equivalently,
\[
\Prob\left(
Q_n>\frac{r_0^2}{4}
\right)
\leq
\frac{\delta}{3}.
\]
Hence the second claim holds with, for example,
$
C_Q=12.
$
Combining the two preceding arguments proves
Lemma~\ref{lem:stopped-stochastic-control}.
\end{proof}
\begin{proof}[Proof of Lemma~\ref{lem:finite-horizon-localization}]
Define the events
\[
\mathcal E_{\rm init}
=
\left\{
e_{n_0}\leq\frac{r_0^2}{4}
\right\},
\qquad
\mathcal E_M
=
\left\{
\max_{n_0\leq t\leq n}M_t
\leq\frac{r_0^2}{4}
\right\},
\qquad
\mathcal E_Q
=
\left\{
Q_n\leq\frac{r_0^2}{4}
\right\}.
\]
We first show that each of these events occurs with high probability. By Proposition~\ref{prop:randomized_initialization}, applied with
$r=r_0/2$ and $\delta_0=\delta/3$, there exists a universal constant
$C_{\rm init}>0$ such that
\[
n_0
\geq
\frac{C_{\rm init}B_Y^2}
{\Delta_\beta^2\mu_g^2r_0^2}
\left(
d+\log\frac{3}{\delta}
\right)
\]
implies
\[
\Prob\left(
\|v_{n_0}-v^\star\|_2
\leq\frac{r_0}{2}
\right)
\geq
1-\frac{\delta}{3},
\]
where $C_{\rm init}$ absorbs the numerical constants arising from
$r=r_0/2$, from
$c=\Delta_\beta/2$, and from replacing
$\log(2/\delta_0)=\log(6/\delta)$ by a constant multiple of
$\log(3/\delta)$. Therefore,
\begin{equation}
\Prob(\mathcal E_{\rm init}^c)
\leq\frac{\delta}{3}.
\label{eq:init-bound-localization}
\end{equation}
Next, let
$
C_{\rm quad}
=
\max\{C_M,C_Q\}.
$
Since $t_0\geq0$,
$
n_0+t_0\geq n_0,
$
and since
$
d+\log(3/\delta)\geq\log(3/\delta),
$
Lemma~\ref{lem:stopped-stochastic-control} implies that
\[
n_0
\geq
C_{\rm quad}B_Y^2\gamma^2r_0^{-2}
\left\{
d+\log\frac{3}{\delta}
\right\}
\]
is sufficient for
\begin{equation}
\Prob(\mathcal E_M^c)
\leq
\frac{\delta}{3},
\qquad
\Prob(\mathcal E_Q^c)
\leq
\frac{\delta}{3}.
\label{eq:stochastic-good-events}
\end{equation}

Consequently, there exists a problem-dependent constant
$C_{\rm loc}<\infty$, independent of $d$, $n$, and $\delta$, such
that
$
n_0
\geq
C_{\rm loc}
\left\{
d+\log(1/\delta)
\right\}
$
implies both \eqref{eq:init-bound-localization} and
\eqref{eq:stochastic-good-events}. For example, $C_{\rm loc}$ may
be chosen as a sufficiently large universal multiple of
$
({B_Y^2}/{r_0^2})
\max\{
({1}/{\Delta_\beta^2\mu_g^2}),
\gamma^2
\}.
$
We now show that
\[
\mathcal E_{\rm init}
\cap
\mathcal E_M
\cap
\mathcal E_Q
\subseteq
\{\tau>n\}.
\]
From the stopped recursion \eqref{eq:stopped-error-bound},
\[
e_{t\wedge\tau}
\leq
e_{n_0}+M_t+Q_t,
\qquad
n_0\leq t\leq n.
\]
Since $Q_t\leq Q_n$, on
$\mathcal E_{\rm init}\cap\mathcal E_M\cap\mathcal E_Q$ we have
\[
e_{t\wedge\tau}
\leq
\frac{r_0^2}{4}
+
\frac{r_0^2}{4}
+
\frac{r_0^2}{4}
=
\frac{3r_0^2}{4},
\qquad
n_0\leq t\leq n.
\]
Suppose, to the contrary, that $\tau\leq n$. Taking $t=\tau$ gives
$
e_\tau
\leq
{3r_0^2}/{4}
<
r_0^2.
$
However, by the definition
$
\tau
=
\inf\left\{
t\geq n_0:
\|v_t-v^\star\|_2>r_0
\right\},
$
the event $\{\tau<\infty\}$ implies
$e_\tau
=
\|v_\tau-v^\star\|_2^2
>
r_0^2,
$
which is a contradiction. Therefore,
$\tau>n$ on
$\mathcal E_{\rm init}\cap\mathcal E_M\cap\mathcal E_Q$.
Finally, by the union bound and
\eqref{eq:init-bound-localization}--\eqref{eq:stochastic-good-events},
\begin{align*}
\Prob(\tau\leq n)
&\leq
\Prob(\mathcal E_{\rm init}^c)
+
\Prob(\mathcal E_M^c)
+
\Prob(\mathcal E_Q^c)
\leq
\delta.
\end{align*}
This proves the claim.
\end{proof}
\subsection{Localized estimation rate}
\label{app:localized-estimation}

We first record a deterministic recursion lemma used to convert the
local one-step contraction into a $t^{-1}$ estimation rate.

\begin{lemma}[Deterministic stochastic approximation recursion]
\label{lem:sa-recursion}
Let $\alpha>1$ and $b>0$, and suppose that a nonnegative sequence
$\{a_t\}_{t\geq n_0}$ satisfies
\[
a_t
\leq
\left(
1-\frac{\alpha}{t+t_0}
\right)a_{t-1}
+
\frac{b}{(t+t_0)^2},
\qquad t>n_0.
\]
If $n_0+1+t_0\geq\alpha$, then
\[
a_t
\leq
\frac{C}{t+t_0},
\qquad t\geq n_0,
\]
where
\[
C
=
\max\left\{
(n_0+t_0)a_{n_0},
\frac{b}{\alpha-1}
\right\}.
\]
\end{lemma}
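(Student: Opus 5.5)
Write $s_t=t+t_0$ and prove $a_t\le C/s_t$ for all $t\ge n_0$ by induction on $t$. The choice of $C$ is built for exactly this argument: the first entry of the maximum handles the base case, and the second makes the inductive step close.

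\emph{Base case.} At $t=n_0$ we have $a_{n_0}=(n_0+t_0)a_{n_0}/s_{n_0}\le C/s_{n_0}$, because $C\ge(n_0+t_0)a_{n_0}$.

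\emph{Inductive step.} Fix $t>n_0$ and assume $a_{t-1}\le C/(s_t-1)$. The condition $n_0+1+t_0\ge\alpha$ gives $s_t\ge\alpha$ for every $t>n_0$. Hence $1-\alpha/s_t\ge0$, and we may multiply the inductive hypothesis by this factor without reversing the inequality. The recursion then gives
\[
a_t\le\frac{s_t-\alpha}{s_t}\cdot\frac{C}{s_t-1}+\frac{b}{s_t^2}.
\]
We want the right-hand side to be at most $C/s_t$. Multiplying through by $s_t^2(s_t-1)$, this is equivalent to
\[
C s_t(s_t-\alpha)+b(s_t-1)\le C s_t(s_t-1),
\]
that is, to $b(s_t-1)\le C(\alpha-1)s_t$. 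This holds because $C\ge b/(\alpha-1)$ and $s_t-1<s_t$. The step uses $\alpha>1$, so that $\alpha-1>0$ and the division defining $C$ makes sense. It also needs $s_t>1$, which follows from $s_t\ge\alpha>1$.

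The one delicate point is sign control. If $s_t<\alpha$, the factor $1-\alpha/s_t$ would be negative, and the upper bound on $a_{t-1}$ could not be carried forward. The hypothesis $n_0+1+t_0\ge\alpha$ is exactly what excludes this for all $t>n_0$. Everything else is the algebraic comparison above, so there is no real obstacle beyond checking this inequality carefully.
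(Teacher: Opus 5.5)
Your proposal is correct and follows the paper's proof essentially step for step. Both arguments use induction: the first entry of $C$ gives the base case, and the condition $n_0+1+t_0\ge\alpha$ makes the factor $1-\alpha/(t+t_0)$ nonnegative. Your reduced inequality $b(s_t-1)\le C(\alpha-1)s_t$ is the paper's condition $b/(t+t_0)\le C(\alpha-1)/(t-1+t_0)$ with the denominators cleared. You also check that $s_t-1>0$, which the paper leaves implicit.
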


\begin{proof}
We proceed by induction. By the definition of $C$,
\[
a_{n_0}
\leq
\frac{C}{n_0+t_0}.
\]
Suppose that, for some $t>n_0$,
\[
a_{t-1}
\leq
\frac{C}{t-1+t_0}.
\]
Since $n_0+1+t_0\geq\alpha$, the coefficient
$1-\alpha/(t+t_0)$ is nonnegative. Therefore,
\[
a_t
\leq
\left(
1-\frac{\alpha}{t+t_0}
\right)
\frac{C}{t-1+t_0}
+
\frac{b}{(t+t_0)^2}.
\]
It remains to show that the right-hand side is at most
$C/(t+t_0)$. This is equivalent to
\[
\frac{b}{t+t_0}
\leq
C
\left\{
1-
\frac{t+t_0-\alpha}{t-1+t_0}
\right\}
=
C\frac{\alpha-1}{t-1+t_0}.
\]
By the definition of $C$,
$C(\alpha-1)\geq b$, and hence
\[
C\frac{\alpha-1}{t-1+t_0}
\geq
\frac{b}{t-1+t_0}
\geq
\frac{b}{t+t_0}.
\]
Thus $a_t\leq C/(t+t_0)$, completing the induction.
\end{proof}
Whenever $\|v_{t-1}-v^\star\|_2\leq r_0$, the local one-step
contraction in Lemma~\ref{lem:local_one_step} gives
\[
\E[e_t\mid\mathcal H_{t-1}]
\leq
(1-2\lambda_0\eta_t)e_{t-1}
+
B_Y^2d\,\eta_t^2.
\]
For $\eta_t=\gamma/(t+t_0)$, this becomes
\begin{equation}
\E[e_t\mid\mathcal H_{t-1}]
\leq
\left(
1-\frac{2\gamma\lambda_0}{t+t_0}
\right)e_{t-1}
+
\frac{B_Y^2\gamma^2d}{(t+t_0)^2}.
\label{eq:local-recursion-regret}
\end{equation}

The following result uses this recursion together with the randomized
initialization bound to obtain the localized estimation rate needed
for the regret analysis.

\begin{proof}[Proof of Proposition~\ref{prop:localized-estimation}]
Let
$
a_t
=
\E\left[
e_t\1\{\tau>t\}
\right].
$
For $t>n_0$, since
$
\{\tau>t\}\subseteq\{\tau\geq t\},
$
we have
$
a_t
\leq
\E\left[
e_t\1\{\tau\geq t\}
\right].
$
The event $\{\tau\geq t\}=\{t\leq\tau\}$ is
$\mathcal H_{t-1}$-measurable, and on this event
$\|v_{t-1}-v^\star\|_2\leq r_0$. Therefore,
using \eqref{eq:local-recursion-regret} and the tower property,
\begin{align*}
a_t
&\leq
\E\left[
\1\{\tau\geq t\}
\E[e_t\mid\mathcal H_{t-1}]
\right]\\
&\leq
\left(
1-\frac{2\gamma\lambda_0}{t+t_0}
\right)
\E\left[
e_{t-1}\1\{\tau\geq t\}
\right]
+
\frac{B_Y^2\gamma^2d}{(t+t_0)^2}
\Prob(\tau\geq t).
\end{align*}
Since
$
\{\tau\geq t\}=\{\tau>t-1\}
$
and $\Prob(\tau\geq t)\leq1$, it follows that
\[
a_t
\leq
\left(
1-\frac{2\gamma\lambda_0}{t+t_0}
\right)a_{t-1}
+
\frac{B_Y^2\gamma^2d}{(t+t_0)^2}.
\]
Applying Lemma~\ref{lem:sa-recursion} with
$
\alpha=2\gamma\lambda_0,
b=B_Y^2\gamma^2d,
$
gives
$
a_t
\leq
{C}/{(t+t_0)},
$
where
\[
C
=
\max\left\{
(n_0+t_0)a_{n_0},
\frac{B_Y^2\gamma^2d}
{2\gamma\lambda_0-1}
\right\}.
\]
It remains to control the first term in $C$. Since
$
a_{n_0}\leq\E[e_{n_0}],
$
the expectation bound in
Proposition~\ref{prop:randomized_initialization} gives
\[
a_{n_0}
\leq
C_{\rm init}'
\frac{B_Y^2d}
{\Delta_\beta^2\mu_g^2n_0}
\]
for a universal constant $C_{\rm init}'>0$. Since
$t_0\leq C_{\rm off}n_0$,
\begin{align*}
(n_0+t_0)a_{n_0}
\leq
C_{\rm init}'
\frac{B_Y^2d}{\Delta_\beta^2\mu_g^2}
\frac{n_0+t_0}{n_0}
\leq
C_{\rm init}'(1+C_{\rm off})
\frac{B_Y^2}{\Delta_\beta^2\mu_g^2}\,d.
\end{align*}
Consequently,
$
C
\leq
C_e d,
$
where we may take
\begin{equation}
C_e
=
\max\left\{
C_{\rm init}'(1+C_{\rm off})
\frac{B_Y^2}{\Delta_\beta^2\mu_g^2},
\frac{B_Y^2\gamma^2}
{2\gamma\lambda_0-1}
\right\}.
\label{eq:Ce-explicit}
\end{equation}
Therefore,
\[
\E\left[
\|v_t-v^\star\|_2^2\1\{\tau>t\}
\right]
\leq
C_e\frac{d}{t+t_0},
\qquad t\geq n_0.
\]
Finally, for $t>n_0$,
$
\{\tau\geq t\}=\{\tau>t-1\},
$
so applying the preceding bound at time $t-1$ gives
\[
\E\left[
\|v_{t-1}-v^\star\|_2^2\1\{\tau\geq t\}
\right]
\leq
C_e\frac{d}{t-1+t_0}.
\]
This proves the claim.
\end{proof}
\subsection{From boundary error to regret}
\label{app:error-to-regret}

We next show that, under Gaussian contexts, disagreement between two
nearby decision boundaries has quadratically small probability mass
after weighting by the distance to the optimal boundary.

\begin{lemma}[Gaussian polynomially weighted disagreement]
\label{lem:gaussian-polynomial-disagreement}
Let $X\sim N(0,I_d)$ and let $v,v^\star\in\mathbb S^{d-1}$.
Suppose that $\theta^\top v^\star=0$ and $\|\theta\|_2\leq1$.
For every fixed $q\geq0$, there exists a constant $C_q<\infty$
such that
\begin{align*}
&\E\left[
|(v^\star)^\top X|
\left\{
1+|\theta^\top X|^q
+|(v^\star)^\top X|^q
\right\}
\1\left\{
\operatorname{sgn}(v^\top X)
\neq
\operatorname{sgn}((v^\star)^\top X)
\right\}
\right]
\leq
C_q\|v-v^\star\|_2^2.
\end{align*}
\end{lemma}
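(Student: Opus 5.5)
The proof reduces everything to two scalar Gaussian coordinates. Write $Z=(v^\star)^\top X$ and decompose $v=\cos(\alpha)v^\star+\sin(\alpha)w$, where $w\in T_{v^\star}\mathbb S^{d-1}$ is a unit vector and $\alpha\in[0,\pi]$ is the angle between $v$ and $v^\star$. Then $\|v-v^\star\|_2=2\sin(\alpha/2)$. Set $W=w^\top X=w^\top P_{v^\star}^\perp X$.

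As in the proofs of Propositions~\ref{prop:population_equilibrium} and \ref{prop:local_jacobian}, $Z\sim N(0,1)$ is independent of $P_{v^\star}^\perp X$. Since $\theta\perp v^\star$, both $W$ and $\theta^\top X$ are functions of $P_{v^\star}^\perp X$. Each is a centered Gaussian with variance at most $1$, so all their polynomial moments are bounded by constants depending only on $q$, not on $d$.

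\emph{Far case.} If $\|v-v^\star\|_2\geq1$, drop the indicator. The left-hand side is then at most
$\E[|Z|(1+|\theta^\top X|^q+|Z|^q)]\leq c_q<\infty$,
by Cauchy--Schwarz and the Gaussian moment bounds above. Hence it is at most $c_q\|v-v^\star\|_2^2$.

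\emph{Near case.} If $\|v-v^\star\|_2<1$, then $\alpha\leq\pi/3$, so $\cos\alpha\geq1/2$. In this range $\tan\alpha\leq2\sin\alpha\leq 4\sin(\alpha/2)=2\|v-v^\star\|_2$.

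On the disagreement event, $Z$ and $\cos(\alpha)Z+\sin(\alpha)W$ have different signs (or one vanishes, a null event). This forces
\[
|Z|\leq \tan(\alpha)\,|W|\leq 2\|v-v^\star\|_2|W|=:s|W|, \qquad s=2\|v-v^\star\|_2 .
\]

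Now condition on $P_{v^\star}^\perp X$. On the event above $|Z|^q\leq (s|W|)^q$, and the density of $Z$ is bounded by $\phi_0(0)$. Therefore
\begin{align*}
&\E\bigl[|Z|\{1+|\theta^\top X|^q+|Z|^q\}\1\{|Z|\leq s|W|\}\,\big|\,P_{v^\star}^\perp X\bigr]\\
&\qquad\leq \{1+|\theta^\top X|^q+(s|W|)^q\}\,\phi_0(0)\int_{-s|W|}^{s|W|}|z|\,dz
= \phi_0(0)\,s^2W^2\{1+|\theta^\top X|^q+s^q|W|^q\}.
\end{align*}
Take expectations and use $s\leq2$ together with Cauchy--Schwarz on $\E[W^2|\theta^\top X|^q]$. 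This yields the bound $C_q s^2=4C_q\|v-v^\star\|_2^2$, with $C_q$ depending only on Gaussian moments of order at most $2q+4$.

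Combining the two cases with the larger constant proves the lemma. The only delicate point is the near case, specifically two things. First, $\theta^\top X$ may be correlated with $W$, and this is harmless because we condition on $P_{v^\star}^\perp X$, which determines both, and only use independence from $Z$. Second, the $|Z|^q$ weight is absorbed using $|Z|\leq s|W|$ rather than integrated, which keeps the quadratic rate.
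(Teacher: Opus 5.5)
Your proof is correct, but it takes a different route from the paper's.

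\textbf{The paper's route.} It rotates coordinates so that $v^\star=e_1$ and $v=\cos(\alpha)e_1+\sin(\alpha)e_2$, and writes $(X_1,X_2)=(R\cos\Phi,R\sin\Phi)$ in polar coordinates. It then splits $\theta$ into two parts. The component in $\operatorname{span}(e_1,e_2)$ contributes at most $R$. The orthogonal component gives $U=\theta_\perp^\top X$, which is independent of $(R,\Phi)$. The disagreement event $\mathcal D_v$ depends only on $\Phi$, so the expectation factors into two pieces:
\begin{itemize}
\item a radial moment $\E[R\{1+R^q+|U|^q\}]$;
\item an angular integral $\E[|\cos\Phi|\1\{\mathcal D_v\}]=(1-\cos\alpha)/\pi=\|v-v^\star\|_2^2/(2\pi)$, computed exactly over two wedges of width $\alpha$.
\end{itemize}
This treats every $\alpha\in[0,\pi]$ uniformly, with no near/far split. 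Because the angular factor is exact, it also shows that the quadratic order cannot be improved: already $\E[|Z|\1\{\mathcal D_v\}]=\E[R]\,\|v-v^\star\|_2^2/(2\pi)$.

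\textbf{Your route.} You condition on $P_{v^\star}^\perp X$ and combine the wedge inclusion $|Z|\leq\tan(\alpha)|W|$ with the bounded density of $Z$. The cost is twofold. You need a separate far case, because $\tan\alpha$ is not controlled there. You also need Cauchy--Schwarz to decouple $W$ from $\theta^\top X$; this plays the role of the paper's in-plane/out-of-plane split of $\theta$.

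\textbf{What each buys.} The paper gets a uniform argument and an exact constant. Your argument never uses rotational invariance, that is, the fact that $\Phi$ is uniform and independent of $R$. It needs only a bounded conditional density for $Z$ given the orthogonal part, plus moment bounds on $W$ and $\theta^\top X$. So it would extend to non-spherical context distributions, which the paper names as an open direction.
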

\begin{proof}[Proof of Lemma~\ref{lem:gaussian-polynomial-disagreement}]
Throughout the proof, $C_q$ denotes a finite positive constant
depending only on $q$, whose value may change from line to line.
Let $\alpha\in[0,\pi]$ denote the angle between $v$ and $v^\star$.
By rotational invariance, choose an orthonormal basis such that
$v^\star=e_1$ and
$v=\cos(\alpha)e_1+\sin(\alpha)e_2$.
Since $(X_1,X_2)\sim N(0,I_2)$, write
$X_1=R\cos\Phi$ and $X_2=R\sin\Phi$, where
$\Phi\sim\operatorname{Unif}[0,2\pi)$, $R\perp\Phi$, and
$R^2\sim\chi_2^2$. Hence
$(v^\star)^\top X=R\cos\Phi$ and
$v^\top X=R\cos(\Phi-\alpha)$.

Since $R>0$ almost surely, the disagreement event is
\[
\mathcal D_v
=
\left\{
\operatorname{sgn}(v^\top X)
\neq
\operatorname{sgn}((v^\star)^\top X)
\right\}
=
\left\{
\operatorname{sgn}(\cos\Phi)
\neq
\operatorname{sgn}(\cos(\Phi-\alpha))
\right\}.
\]
Decompose $\theta=\theta_\parallel+\theta_\perp$, where
$\theta_\parallel\in\operatorname{span}(e_1,e_2)$ and
$\theta_\perp$ is orthogonal to this plane, and define
$U=\theta_\perp^\top X$. Since $\|\theta\|_2\leq1$,
$|\theta_\parallel^\top X|\leq R$, and hence
$|\theta^\top X|^q\leq C_q\{R^q+|U|^q\}$. Also,
$|(v^\star)^\top X|^q\leq R^q$. Therefore,
\begin{align*}
&|(v^\star)^\top X|
\left\{
1+|\theta^\top X|^q
+|(v^\star)^\top X|^q
\right\}
\1\{\mathcal D_v\} \leq
C_q R|\cos\Phi|
\left\{
1+R^q+|U|^q
\right\}
\1\{\mathcal D_v\}.
\end{align*}

Now $U$ is independent of $(R,\Phi)$, while $R$ is independent of
$\Phi$. Since $\mathcal D_v$ depends only on $\Phi$,
\begin{align*}
&\E\left[
|(v^\star)^\top X|
\left\{
1+|\theta^\top X|^q
+|(v^\star)^\top X|^q
\right\}
\1\{\mathcal D_v\}
\right]\leq
C_q
\E\left[
R\{1+R^q+|U|^q\}
\right]
\E\left[
|\cos\Phi|\1\{\mathcal D_v\}
\right].
\end{align*}
Since $R$ is the norm of a two-dimensional standard Gaussian vector
and $U\sim N(0,\|\theta_\perp\|_2^2)$ with
$\|\theta_\perp\|_2\leq1$, all of the preceding moments are bounded
by constants depending only on $q$. Hence
$\E[R\{1+R^q+|U|^q\}]\leq C_q$.

It remains to control the angular factor. For
$\alpha\in[0,\pi]$, the signs of $\cos\phi$ and
$\cos(\phi-\alpha)$ differ precisely on two angular wedges of length
$\alpha$, which may be represented modulo $2\pi$ by
$(\pi/2,\pi/2+\alpha)$ and
$(3\pi/2,3\pi/2+\alpha)$. Therefore,
\[
\int_{\mathcal D_\alpha}|\cos\phi|\,d\phi
=
2\int_{\pi/2}^{\pi/2+\alpha}|\cos\phi|\,d\phi
=
2(1-\cos\alpha).
\]
Consequently,
$\E[|\cos\Phi|\1\{\mathcal D_v\}]
=(1-\cos\alpha)/\pi$. Thus
\[
\E\left[
|(v^\star)^\top X|
\left\{
1+|\theta^\top X|^q
+|(v^\star)^\top X|^q
\right\}
\1\{\mathcal D_v\}
\right]
\leq
C_q(1-\cos\alpha).
\]
Finally,
$\|v-v^\star\|_2^2=2(1-\cos\alpha)$, and therefore
\[
\E\left[
|(v^\star)^\top X|
\left\{
1+|\theta^\top X|^q
+|(v^\star)^\top X|^q
\right\}
\1\{\mathcal D_v\}
\right]
\leq
C_q\|v-v^\star\|_2^2,
\]
after absorbing numerical constants into $C_q$.
\end{proof}
Using
$\beta_+=\theta+(\Delta_\beta/2)v^\star$ and
$\beta_-=\theta-(\Delta_\beta/2)v^\star$, the mean-value theorem gives
\[
|m_+(X)-m_-(X)|
\leq
\Delta_\beta |(v^\star)^\top X|
\sup_{u\in I_X}|g'(u)|,
\]
where $I_X$ is the interval between $\beta_-^\top X$ and
$\beta_+^\top X$. Since $\Delta_\beta\leq2$, every $u\in I_X$
satisfies
$|u|\leq|\theta^\top X|+|(v^\star)^\top X|$.
Assumption~\ref{assum:polygrowth_g} therefore implies
\[
|m_+(X)-m_-(X)|
\leq
C\Delta_\beta |(v^\star)^\top X|
\left\{
1+|\theta^\top X|^q+|(v^\star)^\top X|^q
\right\},
\]
where $C<\infty$ depends only on the polynomial-growth constants for
$g'$ and on $q$.

Returning to the sequential problem, conditional on
$\mathcal H_{t-1}$, the direction $v_{t-1}$ is fixed, while
$X_t\sim N(0,I_d)$ is independent of $\mathcal H_{t-1}$.
Applying Lemma~\ref{lem:gaussian-polynomial-disagreement} with
$v=v_{t-1}$ therefore gives
\[
\E[r_t\mid\mathcal H_{t-1}]
\leq
C_{\mathrm{gap}}\frac{\Delta_\beta}{2}
\|v_{t-1}-v^\star\|_2^2,
\]
where the numerical factor $2$ is absorbed into
$C_{\mathrm{gap}}<\infty$. The constant $C_{\mathrm{gap}}$ depends
only on the polynomial-growth constants for $g'$ and on $q$, and is
independent of $\Delta_\beta$, $d$, and $t$.

Combining this instantaneous regret bound with the localized
estimation rate and the finite-horizon localization result yields the
following cumulative regret guarantee.

\begin{theorem}[Regret of NBL]
Suppose that Assumptions~\ref{assum:gaussian_context},
\ref{assum:polygrowth_g}, \ref{assum:sequential_rewards},
\ref{assum:bounded_rewards}, and
\ref{assum:positive-stability} hold. Let $r_0>0$ and $\lambda_0>0$
be the local stability constants from
Corollary~\ref{cor:local_stability}, so that
$
\langle v-v^\star,h(v)\rangle
\leq
-\lambda_0\|v-v^\star\|_2^2
$
whenever $\|v-v^\star\|_2\leq r_0$. Let
$\eta_t=\gamma/(t+t_0)$, where $2\gamma\lambda_0>1$.
Fix a horizon $n\geq2$. There exists a sufficiently large
problem-dependent constant $C_0<\infty$, independent of $d$ and $n$,
such that, with
$
n_0=\lceil C_0(d+\log n)\rceil,
$
the following holds. Suppose that $t_0\leq C_{\rm off}n_0$ for some fixed
$C_{\rm off}<\infty$. If $n_0<n$, then
\[
\E[R_n]
\leq
2B_Yn_0
+
C_{\mathrm{gap}}\frac{\Delta_\beta}{2}C_e d
\left\{
1+
\log\left(
\frac{n+t_0}{n_0+t_0}
\right)
\right\}
+
2B_Yn^{-3},
\]
where $C_e$ is the constant in
Proposition~\ref{prop:localized-estimation}. Consequently,
\[
\E[R_n]=O(d\log n),
\]
where the implicit constant is independent of $d$ and $n$.
\end{theorem}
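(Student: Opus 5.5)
We split the expected regret into three parts: the randomized stage, the greedy stage while the iterate stays inside the stability neighbourhood, and the greedy stage after an exit. Throughout, take $\delta=n^{-4}$ in Lemma~\ref{lem:finite-horizon-localization}. Then $\log(1/\delta)=4\log n$, so choosing $C_0\geq 4C_{\rm loc}$ guarantees $n_0=\lceil C_0(d+\log n)\rceil\geq C_{\rm loc}\{d+\log(1/\delta)\}$. Enlarging $C_0$ if needed also ensures $n_0+1+t_0\geq 2\gamma\lambda_0$, which is what Proposition~\ref{prop:localized-estimation} requires. Under Assumption~\ref{assum:bounded_rewards}, every instantaneous regret satisfies $r_t\leq|m_+(X_t)-m_-(X_t)|\leq 2B_Y$, because each $m_a$ is a conditional mean of a reward bounded by $B_Y$.

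\emph{Stage I.} The randomized stage contributes at most $2B_Yn_0$.

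\emph{Stage II.} For $t>n_0$, write $r_t=r_t\1\{\tau\geq t\}+r_t\1\{\tau<t\}$. The event $\{\tau\geq t\}=\{\tau>t-1\}$ is $\mathcal H_{t-1}$-measurable, so the tower property and \eqref{eq:error-to-regret} give
\[
\E[r_t\1\{\tau\geq t\}]\leq C_{\rm gap}\frac{\Delta_\beta}{2}\E\bigl[\|v_{t-1}-v^\star\|_2^2\1\{\tau>t-1\}\bigr].
\]
Proposition~\ref{prop:localized-estimation}, applied at time $t-1$, bounds the right-hand side by $C_{\rm gap}(\Delta_\beta/2)C_ed/(t-1+t_0)$. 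Summing over $t=n_0+1,\dots,n$ and comparing with an integral gives
\[
\sum_{t=n_0+1}^{n}\frac{1}{t-1+t_0}\leq 1+\log\frac{n+t_0}{n_0+t_0}.
\]
The leading $1$ absorbs the first term and, if $t_0=0$, the boundary case $n_0+t_0=n_0\geq1$. This produces the middle term of the stated bound.

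\emph{Exit terms.} For the post-exit part, use $r_t\leq 2B_Y$ pathwise together with $\{\tau<t\}\subseteq\{\tau\leq n\}$ for $t\leq n$. This gives
\[
\sum_t\E[r_t\1\{\tau<t\}]\leq 2B_Yn\,\Prob(\tau\leq n)\leq 2B_Yn\cdot n^{-4}=2B_Yn^{-3}.
\]
Adding the three parts yields the displayed inequality. For the $O(d\log n)$ conclusion, note that $n_0\lesssim d+\log n$, that $\log\{(n+t_0)/(n_0+t_0)\}\leq\log n$, and that $C_e$ and $C_{\rm gap}$ do not depend on $d$ or $n$.

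\emph{Main obstacle: constant bookkeeping.} The localization constant $C_{\rm loc}$ must not depend on $n$ even though $t_0$ may scale with $n_0$. Lemma~\ref{lem:stopped-stochastic-control} only uses $n_0+t_0\geq n_0$, so this holds. The estimation constant $C_e$ in \eqref{eq:Ce-explicit} must also stay free of $n_0$; this relies on $t_0\leq C_{\rm off}n_0$ and on the $1/n_0$ initialization MSE from Proposition~\ref{prop:randomized_initialization}. Finally, $r_0$, $\lambda_0$ and $\gamma$ are fixed before $C_0$ is chosen, so no circular dependence arises.
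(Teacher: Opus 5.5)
Your proposal is correct and follows the paper's proof essentially step for step. It makes the same choices: $\delta=n^{-4}$ with $C_0\geq 4C_{\rm loc}$ (enlarged so that $n_0+1+t_0\geq2\gamma\lambda_0$), the split of each greedy-stage regret on the $\mathcal H_{t-1}$-measurable event $\{\tau\geq t\}$ combined with Proposition~\ref{prop:localized-estimation} at time $t-1$, and the crude $2B_Y\Prob(\tau\leq n)$ bound on the exit event, summed to $2B_Yn^{-3}$. The only difference is minor: your final bound $\log\{(n+t_0)/(n_0+t_0)\}\leq\log n$, valid since $1\leq n_0<n$, is slightly cleaner than the paper's route through $t_0\leq C_{\rm off}n$.
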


\begin{proof}[Proof of Theorem~\ref{thm:regret-rate}]
Set $\delta=n^{-4}$. We first verify that
$n_0=\lceil C_0(d+\log n)\rceil$ can be chosen to satisfy all
lower-bound requirements. By
Lemma~\ref{lem:finite-horizon-localization}, it is sufficient that
\[
n_0\geq C_{\rm loc}\{d+\log(1/\delta)\}.
\]
Since $\delta=n^{-4}$,
$d+\log(1/\delta)=d+4\log n\leq4(d+\log n)$, so the localization
requirement is satisfied whenever $C_0\geq4C_{\rm loc}$.
Proposition~\ref{prop:localized-estimation} also requires
$n_0+1+t_0\geq2\gamma\lambda_0$. Since $d+\log n$ is bounded away
from zero for $d\geq1$ and $n\geq2$, this condition is also satisfied
by enlarging $C_0$, if necessary. Thus $C_0$ may be chosen as a
finite problem-dependent constant, independent of $d$ and $n$, such
that all lower-bound requirements on $n_0$ hold.

For $t>n_0$, decompose
\[
\E[r_t]
=
\E[r_t\1\{\tau\geq t\}]
+
\E[r_t\1\{\tau<t\}].
\]
Since $\{\tau\geq t\}=\{\tau>t-1\}$ is
$\mathcal H_{t-1}$-measurable, the preceding bound gives
\begin{align*}
\E[r_t\1\{\tau\geq t\}]
&=
\E\left[
\1\{\tau\geq t\}
\E[r_t\mid\mathcal H_{t-1}]
\right]\\
&\leq
C_{\rm gap}\frac{\Delta_\beta}{2}
\E\left[
\|v_{t-1}-v^\star\|_2^2
\1\{\tau\geq t\}
\right]\\
&\leq
C_{\rm gap}\frac{\Delta_\beta}{2}C_e
\frac{d}{t-1+t_0},
\end{align*}
where the last inequality follows from
Proposition~\ref{prop:localized-estimation}.

On the complementary event, bounded rewards imply $r_t\leq2B_Y$.
Since $\{\tau<t\}\subseteq\{\tau\leq n\}$ for $t\leq n$,
Lemma~\ref{lem:finite-horizon-localization} gives
\[
\E[r_t\1\{\tau<t\}]
\leq
2B_Y\Prob(\tau\leq n)
\leq
2B_Yn^{-4}.
\]
Hence, for every $t>n_0$,
\[
\E[r_t]
\leq
C_{\rm gap}\frac{\Delta_\beta}{2}C_e
\frac{d}{t-1+t_0}
+
2B_Yn^{-4}.
\]
Summing over the boundary-learning stage,
\begin{align*}
\sum_{t=n_0+1}^n\E[r_t]
&\leq
C_{\rm gap}\frac{\Delta_\beta}{2}C_e d
\sum_{t=n_0+1}^n\frac{1}{t-1+t_0}
+
2B_Y(n-n_0)n^{-4}\\
&\leq
C_{\rm gap}\frac{\Delta_\beta}{2}C_e d
\left\{
1+
\log\left(
\frac{n+t_0}{n_0+t_0}
\right)
\right\}
+
2B_Yn^{-3}.
\end{align*}
During randomized initialization, $r_t\leq2B_Y$, so
$\sum_{t=1}^{n_0}\E[r_t]\leq2B_Yn_0$. Therefore,
\[
\E[R_n]
\leq
2B_Yn_0
+
C_{\rm gap}\frac{\Delta_\beta}{2}C_e d
\left\{
1+
\log\left(
\frac{n+t_0}{n_0+t_0}
\right)
\right\}
+
2B_Yn^{-3}.
\]

Finally, $n_0=O(d+\log n)$. Since $t_0\leq C_{\rm off}n_0$ and $n_0<n$,
$t_0\leq C_{\rm off}n$, and hence
\[
1+
\log\left(
\frac{n+t_0}{n_0+t_0}
\right)
\leq
1+\log(1+C_{\rm off})+\log n
=
O(\log n).
\]
Thus
$\E[R_n]=O(d+\log n)+O(d\log n)+O(n^{-3})$.
Since $d\geq1$ and $n\geq2$, $d+\log n=O(d\log n)$, and therefore
\[
\E[R_n]=O(d\log n).
\]
\end{proof}
\section{Numerical Experiments}
\label{app:numerics}
\subsection{Simulation Setup}
\label{app:simulation-setup}

Unless otherwise stated, all numerical experiments use $d=2$ with
isotropic Gaussian contexts
$
X_t\sim N(0,I_2).
$
We parameterize the two arm-specific index directions as
\[
\beta_a
=
\theta
+
a\frac{\Delta_\beta}{2}v^\star,
\qquad
a\in\{-1,+1\},
\]
where $\|v^\star\|_2=1$, $\theta^\top v^\star=0$, and
$\|\theta\|_2^2=1-\Delta_\beta^2/4$. Thus
$\|\beta_+-\beta_-\|_2=\Delta_\beta$, and the optimal action is
$
a^\star(x)=\operatorname{sgn}\{(v^\star)^\top x\}.
$
The conditional mean reward is
$
m_a(x)=g(\beta_a^\top x),
$
with the shared strictly increasing link $g$ varied across experiments.
For the population calculations, we use the decomposition
$
U=\theta^\top X
$
and
$
Z=(v^\star)^\top X,
$
so that
$
U\sim N(0,1-\Delta_\beta^2/4)
$
and $Z\sim N(0,1)$ are independent. The two link-dependent quantities
entering the decision stability coefficient are
\[
\mu^\star
=
\E\!\left[
g'\!\left(U+\frac{\Delta_\beta}{2}Z\right)
\mathbf 1\{Z>0\}
\right],
\qquad
m_2=\E\{g''(U)\},
\]
and we evaluate
\[
\lambda_\star
=
\Delta_\beta\mu^\star
-
2\phi_0(0)(m_2)_+
\left(1-\frac{\Delta_\beta^2}{4}\right).
\]
The expectations defining these population quantities are evaluated
numerically under the corresponding Gaussian distributions.

For the stochastic experiments, NBL is updated using
\[
A_t=\operatorname{sgn}(v_{t-1}^\top X_t),
\qquad
G_t=P_{v_{t-1}}^\perp(A_tY_tX_t),
\qquad
v_t=
\frac{v_{t-1}+\eta_tG_t}
{\|v_{t-1}+\eta_tG_t\|_2},
\]
with harmonic step size
$
\eta_t=\gamma/(t+t_0).
$
Unless otherwise stated, we use $\gamma=10$ and $t_0=500$. In the
experiments designed to examine the local stochastic dynamics, the
initial direction is displaced by $10^\circ$ from $v^\star$. We use
$n=10{,}000$ rounds and average the resulting trajectories over 100
independent Monte Carlo replications.
We report two measures of stochastic performance. The signed angular
error records the oriented displacement of $v_t$ from $v^\star$ and
therefore retains the direction of the local population dynamics. The
decision performance is summarized by average regret
$
R_t/t,
$
where
\[
R_t
=
\sum_{s=1}^t
\left\{
m_{a^\star(X_s)}(X_s)-m_{A_s}(X_s)
\right\}.
\]
For trajectory plots, solid curves show Monte Carlo means and shaded
regions show pointwise approximate 95\% Monte Carlo confidence intervals
for the mean, computed as the Monte Carlo mean plus or minus $1.96$
Monte Carlo standard errors.
The experiments comparing NBL with Greedy Logistic use a separate
common randomized initialization and a shorter horizon; their settings
and implementation are given in
Appendix~\ref{app:logistic-comparison}. Exact specifications of the
elicitation-induced and two-transition link families are given in
Appendices~\ref{app:elicitation-geometry} and
\ref{app:tanh-geometry}, respectively.

\subsection{Link Geometry and Decision Stability}
\label{app:link-geometry}
We use two complementary link constructions to examine the decision
stability condition. The first is an elicitation-induced family that
allows the local geometry of the link to be varied analytically. The
second is a flexible two-transition link used as an unknown
nonparametric regression function. Together, the examples illustrate
that decision stability depends jointly on the geometry of the shared
link and the separation between the arm-specific indices.

\subsubsection{A controlled elicitation-induced family}
\label{app:elicitation-geometry}

We begin with a weighted version of the binary negative-entropy
potential. For $\rho\in(-1,1)$, define
\[
\varphi_\rho(m)
=
(1+\rho)m\log m
+
(1-\rho)(1-m)\log(1-m)
-
\kappa_\rho m,
\qquad m\in(0,1),
\]
where $\kappa_\rho=2\rho(1-\log 2)$. The linear term is chosen so that
$\varphi_\rho'(1/2)=0$. Since affine terms do not affect the associated
Bregman divergence, this normalization changes the dual coordinate
without changing the underlying Bregman geometry.
Differentiating gives
$\varphi_\rho'(m)
=(1+\rho)\log m-(1-\rho)\log(1-m)+2\rho\log 2$
and
$\varphi_\rho''(m)
=(1+\rho)/m+(1-\rho)/(1-m)$.
Since $|\rho|<1$, $\varphi_\rho''(m)>0$ on $(0,1)$, so
$\varphi_\rho$ is strictly convex and $\varphi_\rho'$ is strictly
increasing. The induced response link $g_\rho=(\varphi_\rho')^{-1}$
is therefore characterized by
\[
z
=
(1+\rho)\log g_\rho(z)
-
(1-\rho)\log\{1-g_\rho(z)\}
+
2\rho\log 2,
\]
or, equivalently,
$g_\rho(z)^{1+\rho}/\{1-g_\rho(z)\}^{1-\rho}
=e^z/2^{2\rho}$.
Moreover, $\varphi_\rho'(m)\to-\infty$ as $m\downarrow0$ and
$\varphi_\rho'(m)\to+\infty$ as $m\uparrow1$, so $g_\rho$ is a
strictly increasing map from $\mathbb R$ onto $(0,1)$. When $\rho=0$,
the construction reduces exactly to the logistic link.

The normalization allows the local geometry of the family to be
compared directly. At $m=1/2$, we have
$\varphi_\rho''(1/2)=4$ and
$\varphi_\rho'''(1/2)=-8\rho$. Using
$g_\rho'=1/\varphi_\rho''(g_\rho)$ and
$g_\rho''=-\varphi_\rho'''(g_\rho)/
\{\varphi_\rho''(g_\rho)\}^3$ gives
\[
g_\rho(0)=\frac12,
\qquad
g_\rho'(0)=\frac14,
\qquad
g_\rho''(0)=\frac{\rho}{8}.
\]
Thus the family preserves the value and slope of the response link at
the origin while allowing its local curvature and asymmetry to vary
through $\rho$. The same potential defines a proper Bregman loss that
continues to elicit the ordinary Bernoulli mean, so varying $\rho$
changes the elicitation geometry without changing the target
functional. Figure~\ref{fig:app-elicitation-geometry} shows the corresponding
potential and dual map, while the induced links are shown in
Figure~\ref{fig:numerical-summary}(e) of the main text.

\begin{figure}[h!]
    \centering
    \begin{minipage}[t]{0.42\textwidth}
        \centering
        \includegraphics[width=\textwidth]
        {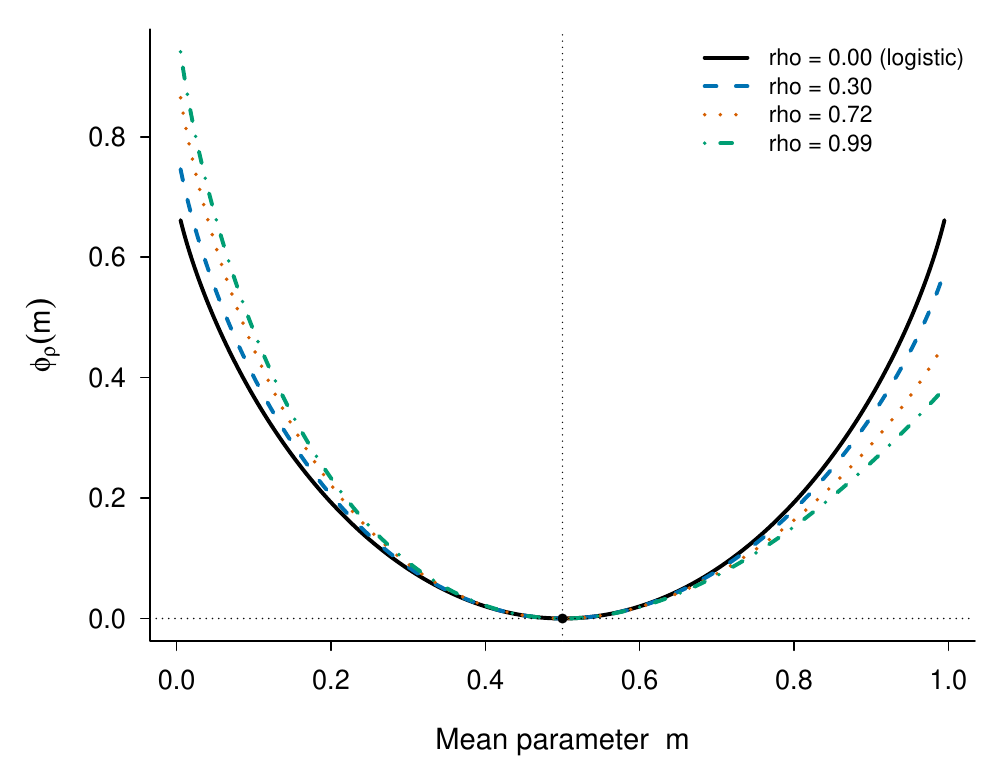}
        \small (a) Elicitation potential
    \end{minipage}
    \hfill
    \begin{minipage}[t]{0.42\textwidth}
        \centering
        \includegraphics[width=\textwidth]
        {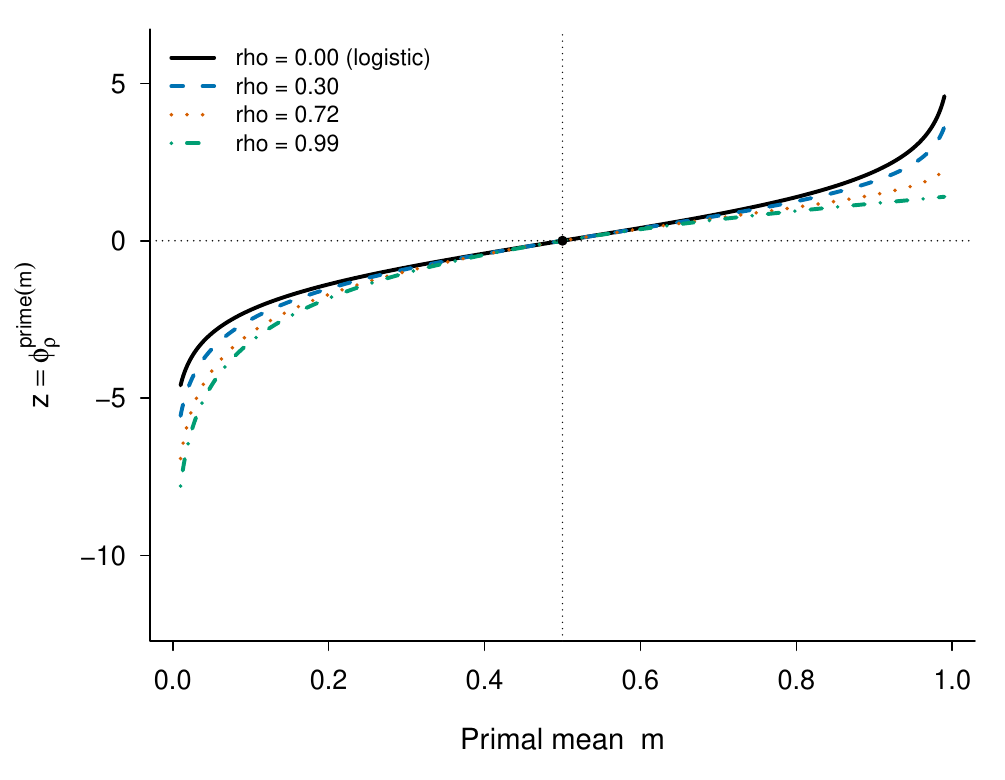}
        \small (b) Dual map
    \end{minipage}
    \caption{
    Elicitation geometry of the asymmetric weighted negative-entropy
    family. Panel (a) shows $\varphi_\rho$, while panel (b) shows the
    dual map $z=\varphi_\rho'(m)$. The induced link
    $g_\rho=(\varphi_\rho')^{-1}$ is shown in
    Figure~\ref{fig:numerical-summary}(e).
    }
    \label{fig:app-elicitation-geometry}
\end{figure}

Because every $g_\rho$ is strictly increasing, changing $\rho$ leaves
the optimal decision boundary unchanged. It can, however, alter the
population dynamics used to learn that boundary. Recall that the
decision stability coefficient is
$\lambda_\star=\Delta_\beta\mu^\star
-2\phi_0(0)(m_2)_+(1-\Delta_\beta^2/4)$.
The first term captures the restoring contribution associated with link
sensitivity, whereas the second captures the destabilizing contribution
from positive population-averaged curvature. The effect of elicitation
geometry can therefore be particularly pronounced under weak arm
separation, where the restoring signal is small while the curvature
contribution need not be.
Figure~\ref{fig:numerical-summary}(f) illustrates this behavior in the
weak-separation regime. At the reference value $\Delta_\beta=0.2$, the
choices $\rho=0.60,0.72,0.80$ give
$\lambda_\star\approx3.70\times10^{-3},
2.15\times10^{-5},-2.45\times10^{-3}$, respectively, providing stable,
near-critical, and unstable examples. As $\Delta_\beta$ increases, the
restoring contribution strengthens, while the factor
$1-\Delta_\beta^2/4$ multiplying the curvature contribution decreases.
The instability in this family is therefore concentrated in the
weak-separation regime rather than being an intrinsic property of the
link.

Figure~\ref{fig:app-stability-components} makes this balance explicit at
$\Delta_\beta=0.2$ as $\rho$ varies. For visual comparison, we plot the
two half-scaled contributions determining the sign of $\lambda_\star$:
the restoring contribution $(\Delta_\beta/2)\mu^\star$ and the curvature
contribution
$\phi_0(0)(m_2)_+(1-\Delta_\beta^2/4)$. The restoring contribution
changes relatively slowly, whereas the positive-curvature contribution
increases substantially with $\rho$. They coincide near $\rho=0.72$,
where $\lambda_\star=0$: beyond this point the curvature contribution
dominates and the true boundary becomes locally unstable.

\begin{figure}[h]
    \centering
    \includegraphics[width=0.42\textwidth]
    {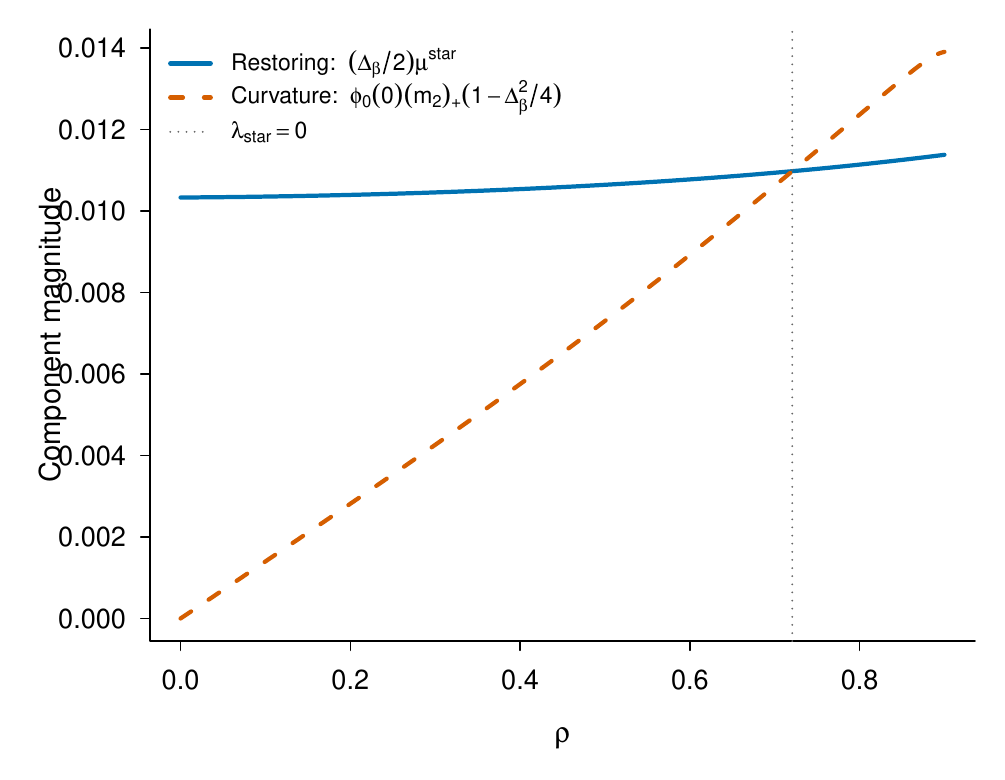}
    \caption{
    Components determining the sign of the decision stability coefficient
    at $\Delta_\beta=0.2$. For visual comparison, the figure shows the
    half-scaled restoring contribution $(\Delta_\beta/2)\mu^\star$ and
    curvature contribution
    $\phi_0(0)(m_2)_+(1-\Delta_\beta^2/4)$.
    Their equality corresponds to $\lambda_\star=0$.
    }
    \label{fig:app-stability-components}
\end{figure}

\subsubsection{A flexible unknown link}
\label{app:tanh-geometry}

The preceding construction varies the link through a known elicitation
family. In our bandit model, however, the shared link $g$ is unknown and
is not assumed to belong to a parametric family. To illustrate decision
stability in this more general setting, we generate rewards from the
flexible two-transition link
\begin{align}\label{eq:link_tanh}
g(z)
=
p_0
+
a_1\{1+\tanh[b_1(z-\mu_1)]\}
+
a_2\{1+\tanh[b_2(z-\mu_2)]\},
\end{align}
with $p_0=0.10$, $a_1=0.10$, $a_2=0.25$, $b_1=2$, $b_2=1.4$, and
$\mu_1=-1.25$, and vary only the location $\mu_2$ of the second
transition. This functional form is used only to generate the
ground-truth regression function; the shared link is treated as unknown
by NBL, which neither specifies nor estimates this parametric form.

Unlike the elicitation-induced family, this construction is not designed
to preserve a common value or slope at a particular index value.
Instead, changing $\mu_2$ moves the second transition across the index
space and produces different sensitivity and curvature profiles. Since
$a_j,b_j>0$, every link in the family is strictly increasing and
therefore induces the same form of linear optimal decision boundary,
while its curvature can change sign across the index space.
At the reference separation $\Delta_\beta=0.2$, we use
$\mu_2=-1.074037$, $2.144072$, and $1.244122$ to obtain stable,
near-critical, and unstable local population dynamics, respectively.
The resulting links and curvature profiles are shown in
Figure~\ref{fig:numerical-summary}(a)--(b). These labels refer only to
the local dynamics at $\Delta_\beta=0.2$ and are not intrinsic
properties of the links.

To understand why links with the same monotonicity can produce different
local dynamics, recall that
$U=\theta^\top X\sim N(0,1-\Delta_\beta^2/4)$ and
$Z=(v^\star)^\top X\sim N(0,1)$ are independent. The link enters the
decision stability coefficient through
$\mu^\star=\E[g'(U+\Delta_\beta Z/2)\mathbf 1\{Z>0\}]$ and
$m_2=\E\{g''(U)\}$. Thus stability is not determined by the pointwise
magnitude of either $g'$ or $g''$. Rather, it depends on the balance
between link sensitivity and curvature after these quantities are
weighted under the relevant population distributions. In particular,
when $m_2\leq0$ the positive-curvature penalty in $\lambda_\star$
vanishes, whereas sufficiently large positive population-averaged
curvature can oppose the restoring contribution associated with
$\mu^\star$.

The same construction allows us to examine the interaction between link
geometry and arm separation more systematically. We vary $\mu_2$ over a
continuum while simultaneously varying $\Delta_\beta$, keeping all
remaining parameters of the link fixed. Figure~\ref{fig:app-tanh-phase}
shows the resulting decision-stability regions. The shaded region
corresponds to $\lambda_\star<0$, while the solid curve marks the
boundary $\lambda_\star=0$. The horizontal dotted line indicates the
reference separation $\Delta_\beta=0.2$ used in the main experiments.
\begin{figure}[t]
    \centering
    \includegraphics[width=0.45\textwidth]
    {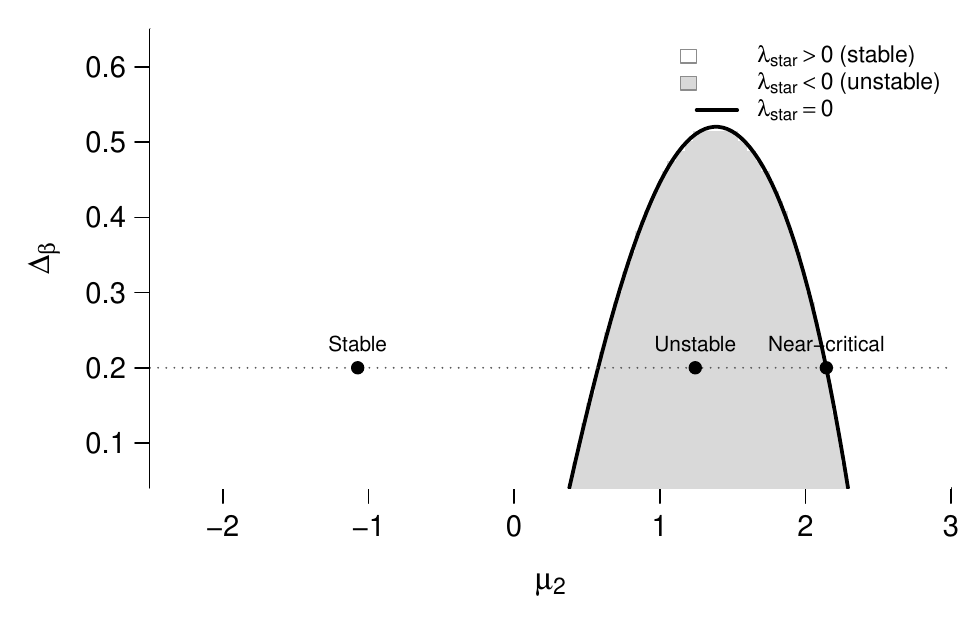}
    \caption{
    Decision-stability phase diagram for the two-transition shared-link
    family. The shaded region corresponds to $\lambda_\star<0$, while
    the solid curve marks the stability boundary $\lambda_\star=0$.
    The dotted horizontal line indicates the reference separation
    $\Delta_\beta=0.2$, and the three points correspond to the
    representative links used in the main numerical experiments.
    }
    \label{fig:app-tanh-phase}
\end{figure}
At the reference separation, moving the second transition through the
index space can move the population dynamics into and out of the
unstable region. This dependence is nonmonotone: instability occurs
only over an intermediate range of $\mu_2$ for this family. The phase
diagram also shows that the stability of a fixed link depends on the
arm separation. As $\Delta_\beta$ increases, the unstable region
contracts and eventually disappears over the displayed family. This
behavior is consistent with Figure~\ref{fig:numerical-summary}(c),
where the three representative links become locally stable as the arm
separation increases.

To see more directly how arm separation changes the population geometry,
we hold fixed the link with $\mu_2=1.244122$, which is locally unstable
at $\Delta_\beta=0.2$, and vary $\Delta_\beta$. The functions $g'$ and
$g''$ remain unchanged, while the population quantities entering
$\lambda_\star$ change with the arm separation. In particular,
$U\sim N(0,1-\Delta_\beta^2/4)$, so increasing $\Delta_\beta$
concentrates the distribution of $U$ around zero.
Figure~\ref{fig:tanh-changing-separation} overlays the fixed sensitivity
and curvature profiles with the distributions of $U$ corresponding to
$\Delta_\beta=0.2$, $1.0$, and $1.8$.
\begin{figure}[h!]
    \centering
    \begin{minipage}[t]{0.42\textwidth}
        \centering
        \includegraphics[width=\textwidth]
{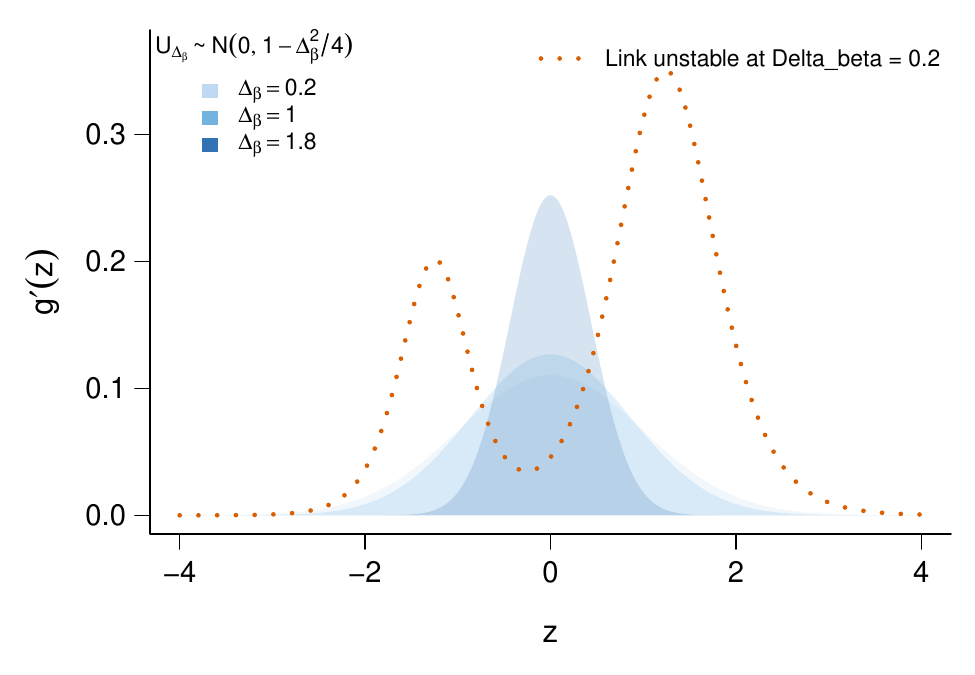}
        \small (a) Link sensitivity $g'(z)$
    \end{minipage}
    \hfill
    \begin{minipage}[t]{0.42\textwidth}
        \centering
        \includegraphics[width=\textwidth]
{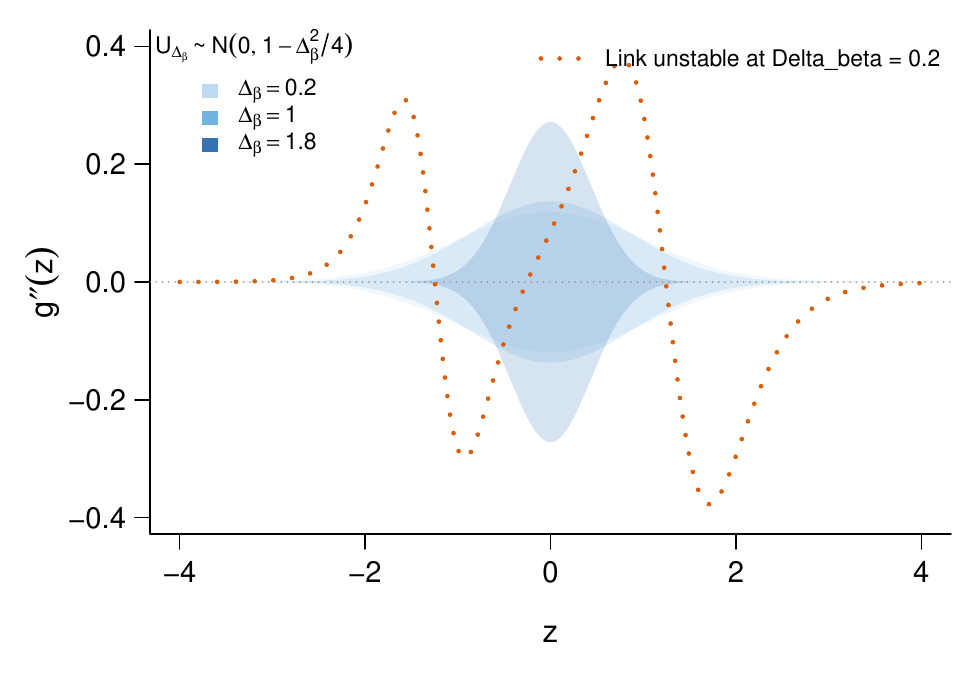}
        \small (b) Link curvature $g''(z)$
    \end{minipage}
    \caption{
    Effect of arm separation on the population weighting of a fixed
    unknown link. The link is locally unstable at
    $\Delta_\beta=0.2$ and is held fixed as $\Delta_\beta$ varies.
    The shaded regions correspond to
    $U\sim N(0,1-\Delta_\beta^2/4)$ for
    $\Delta_\beta=0.2$, $1.0$, and $1.8$, and are rescaled by a
    common factor for visual comparison. In panel (b), the densities
    are reflected about zero only for visual reference.
    }
    \label{fig:tanh-changing-separation}
\end{figure}
For curvature, the interpretation is direct: since
$m_2=\E\{g''(U)\}$, increasing $\Delta_\beta$ places greater weight
on the behavior of $g''$ near zero and less weight on curvature farther
away. Thus the population-averaged curvature can change even though
$g''$ itself remains fixed. The sensitivity panel has a related but
distinct interpretation. Since
$\mu^\star=\E[g'(U+\Delta_\beta Z/2)\mathbf 1\{Z>0\}]$, changing
$\Delta_\beta$ affects $\mu^\star$ both through the distribution of
$U$ and through the shift $\Delta_\beta Z/2$. Thus the density shown
in panel (a) illustrates only one component of the population weighting
entering $\mu^\star$.

These changes occur together with the explicit dependence of
$\lambda_\star$ on arm separation. Increasing $\Delta_\beta$ directly
strengthens the sensitivity contribution through the factor
$\Delta_\beta$ and reduces the multiplier
$1-\Delta_\beta^2/4$ on the curvature contribution, while also changing
$\mu^\star$ and $m_2$ themselves. This explains why the same fixed link
can be locally unstable under weak arm separation and locally stable
when the arm directions are more widely separated.
The next subsection examines how these population-level differences are
reflected in the stochastic NBL recursion.

\subsection{Stochastic Dynamics and Arm Separation}
\label{app:stochastic-nbl}

The preceding illustrations describe the population geometry underlying
NBL. We next examine whether these local stability regimes remain visible
in the stochastic recursion driven by the noisy sequential Stein contrast.
We use the same three two-transition links and first fix
$\Delta_\beta=0.2$, so that they correspond to the stable, near-critical,
and unstable population regimes identified above. To isolate the local
dynamics, each replication is initialized with a signed angular
displacement of $10^\circ$ from $v^\star$. We run NBL for $n=10{,}000$
rounds using $\eta_t=\gamma/(t+t_0)$ with $\gamma=10$ and $t_0=500$,
and average the results over 100 Monte Carlo replications.

Since these experiments use $d=2$, we measure directional error by the
signed angle from $v^\star$ to $v_t$. Writing
$v^\star=(v_1^\star,v_2^\star)^\top$, we define
\[
\alpha_t
=
\operatorname{atan2}
\left(
v_1^\star v_{t,2}-v_2^\star v_{t,1},
(v^\star)^\top v_t
\right),
\]
and report $180\alpha_t/\pi$ in degrees. Thus $\alpha_t=0$ corresponds
to the true boundary direction, while the sign records the direction of
the displacement around $v^\star$.
\begin{figure}[h]
    \centering
    \begin{minipage}[t]{0.48\textwidth}
        \centering
        \includegraphics[width=\textwidth]
        {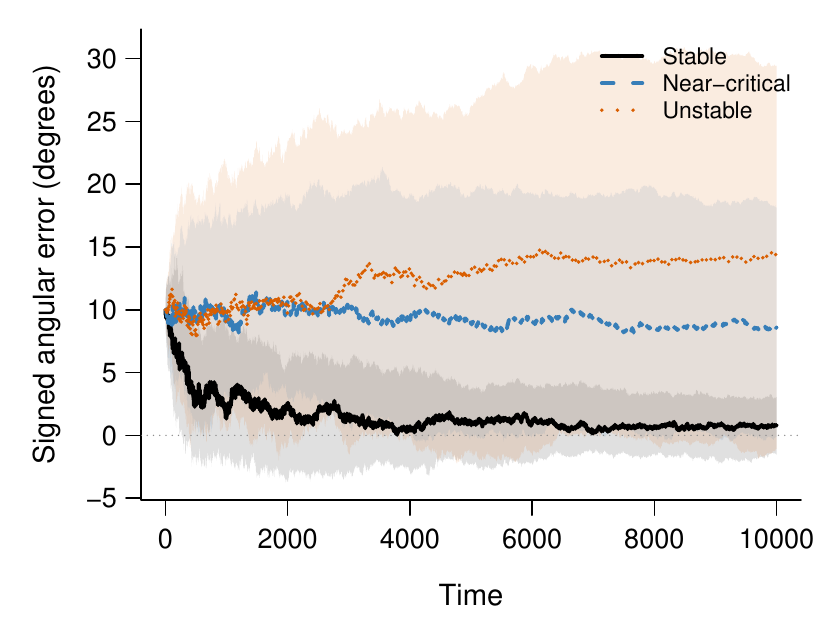}
        \small (a) Varying link geometry
    \end{minipage}
    \hfill
    \begin{minipage}[t]{0.48\textwidth}
        \centering
        \includegraphics[width=\textwidth]
        {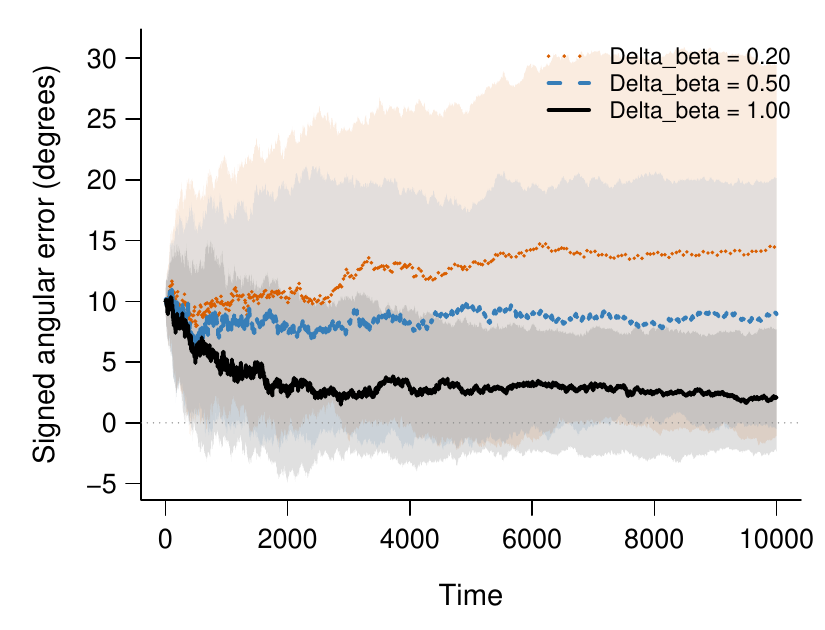}
        \small (b) Varying arm separation
    \end{minipage}
    \caption{
    Signed angular error of NBL over 100 Monte Carlo replications.
    Panel (a) fixes $\Delta_\beta=0.2$ and varies the shared link across
    the stable, near-critical, and unstable population regimes. Panel
    (b) fixes the link with $\mu_2=1.244122$, which is unstable at
    $\Delta_\beta=0.2$, and varies the arm separation. Curves show
    Monte Carlo means and shaded regions show pointwise approximate
    95\% Monte Carlo confidence intervals for the mean trajectories.
    }
    \label{fig:app-stochastic-angle}
\end{figure}
Figure~\ref{fig:app-stochastic-angle} examines the stochastic boundary
dynamics from two complementary directions. Panel (a) fixes
$\Delta_\beta=0.2$ and varies the shared link to be as specified in Section~\ref{app:tanh-geometry}. In the stable regime,
the boundary estimate is driven toward $v^\star$; near criticality, the
angular error remains persistently away from zero; and in the unstable
regime, the estimate moves progressively farther from $v^\star$.
Thus the attracting, near-critical, and repelling behaviors predicted
by the population geometry remain visible under stochastic updating.

Panel (b) instead fixes the link to be same as \eqref{eq:link_tanh} with $\mu_2=1.244122$ and varies the
arm separation. The same link that moves away from $v^\star$ at
$\Delta_\beta=0.2$ exhibits increasingly strong recovery toward the
true boundary direction as $\Delta_\beta$ increases. This agrees with
the population phase diagram in Figure~\ref{fig:app-tanh-phase} and
shows that the dependence of decision stability on arm separation is
also visible in the finite-sample NBL recursion.

The corresponding regret behavior is shown in
Figure~\ref{fig:numerical-summary}(d) for the varying-link experiment.
Under stable dynamics, average regret $R_t/t$ decreases, whereas
near-critical and unstable dynamics exhibit increasingly persistent
regret. Together, the population and stochastic experiments show how
the geometry captured by $\lambda_\star$ translates into both
directional learning of the boundary and its resulting decision
performance.

\subsection{Additional Comparisons with Greedy Logistic}
\label{app:logistic-comparison}

We next compare NBL with Greedy Logistic, a parametric benchmark based on
greedy generalized linear contextual-bandit learning
\cite{bastani2021mostly}. Following a common randomized initialization,
Greedy Logistic fits a separate logistic regression for each arm and
selects the arm with the larger fitted conditional mean. Since the
logistic link is strictly increasing, this is equivalent to selecting
the arm with the larger fitted linear score. In contrast, NBL does not
estimate either arm-specific reward function and instead updates the
decision boundary directly.
We use $d=2$, $\Delta_\beta=1$, $n=2000$, and a randomized
initialization of $n_0=200$ rounds. For NBL, we use
$\eta_t=10/(t+500)$, and results are averaged over 100 Monte Carlo
replications. Unless otherwise stated, we generate contexts independently as
$X_t\sim N(0,I_d)$. Conditional on $X_t$, the potential rewards are
generated independently as
$Y_{t,a}\sim\operatorname{Bernoulli}\{g(\beta_a^\top X_t)\}$ for
$a\in\{-1,+1\}$, so that
$\E(Y_{t,a}\mid X_t)=g(\beta_a^\top X_t)$. The observed reward is
$Y_t=Y_{t,A_t}$. For comparisons between algorithms, each Monte Carlo
replication uses the same generated contexts and potential rewards for
both methods, so that NBL and Greedy Logistic are evaluated on a common
Bernoulli reward realization.  Within each replication, the two methods use the same
 randomized initialization.

We first consider the elicitation-induced family
$g_\rho=(\varphi_\rho')^{-1}$ with
$\rho\in\{0,0.2,0.4,0.6,0.8\}$, where $\rho=0$ is exactly logistic.
This provides a controlled sequence of departures from the working link
used by Greedy Logistic. Figure~\ref{fig:numerical-summary}(g) shows
that Greedy Logistic has lower regret when the link is logistic or close
to logistic, reflecting the benefit of exploiting the correctly or
approximately specified parametric reward model. As the link departs
further from logistic geometry, this advantage diminishes and NBL can
have lower regret.

We then consider the three two-transition links in Section~\ref{app:tanh-geometry} used in the population
experiments. Although these links were selected to have different
stability behavior at $\Delta_\beta=0.2$, all three have positive
decision stability coefficients at the well-separated value
$\Delta_\beta=1$ used here. The comparison therefore isolates link
misspecification without placing NBL in the unstable regime.
Figure~\ref{fig:numerical-summary}(h) shows that Greedy Logistic can
retain an advantage for a non-logistic link that remains sufficiently
well approximated by its working model, while NBL can have lower regret
for larger departures from logistic geometry.

These comparisons illustrate the distinction between parametric reward modeling and direct boundary learning. When the logistic specification is correct or provides a good approximation, Greedy Logistic can exploit the additional structure and achieve lower finite-sample regret. NBL does not obtain the same parametric efficiency, but its regret remains competitive in these experiments despite neither specifying nor estimating the shared link. As the link departs further from the logistic specification, the parametric advantage diminishes, and under sufficiently strong misspecification, directly learning the decision boundary can yield lower regret. Thus NBL pays a relatively modest price for link agnosticism in the settings considered here, while avoiding sensitivity to a particular parametric specification.
\end{document}